\newtheorem{theorem}{Theorem}
\newtheorem{lemma}{Lemma}
\newtheorem{corollary}{Corollary}
\newtheorem{proposition}{Proposition}
\newcommand{\argmin}[1]{\operatorname*{argmin}_{#1}}
\newcommand{\argmax}[1]{\operatorname*{argmax}_{#1}}
\newcommand{\beginsupplement}{%
        \setcounter{table}{0}
        \renewcommand{\thetable}{S\arabic{table}}%
        \setcounter{figure}{0}
        \renewcommand{\thefigure}{S\arabic{figure}}%
        \setcounter{section}{0}
        \renewcommand{\thesection}{S\arabic{section}}%
        \setcounter{equation}{0}
        \renewcommand{\theequation}{S\arabic{equation}}%
     }
\let\@fnsymbol\@arabic
\newcommand{\calX}{\mathcal{X}}
\newcommand{\calY}{\mathcal{Y}}
\newcommand{\calP}{\mathcal{P}}
\newcommand{\bbR}{\mathbb{R}}
\newcommand{\bw}{\boldsymbol{w}}
\newcommand{\bx}{\boldsymbol{x}}
\newcommand{\bX}{\boldsymbol{X}}
\newcommand{\boldpi}{\boldsymbol{\pi}}
\newcommand{\bolddelta}{\boldsymbol{\delta}}
\newcommand{\hatY}{\widehat{Y}}
\newcommand{\intervals}{\Omega_{\boldpi}}
\newcommand{\bayesint}{W_{\boldpi}^*}
\newcommand{\surrplus}{\phi^+}
\newcommand{\surrminus}{\phi^-}
\newcommand{\surrogate}{\phi}
\newcommand{\bayessurr}{f^*_{\phi}}
\newcommand{\varsurrplus}{\varphi^+}
\newcommand{\varsurrminus}{\varphi^-}
\newcommand{\varsurrogate}{\varphi}
\newcommand{\ebpi}{\ell_{\boldsymbol{\pi}}}
\title{\vskip-.5in \mbox{Large-Margin Classification with Multiple Decision Rules}}
\author{%
Patrick~K.~Kimes$^1$ \and %
D.~Neil~Hayes$^2$ \and %
J.~S.~Marron$^{1,3}$ \and %
Yufeng~Liu$^{1,3,4,\dagger}$ \and 
for the Alzheimer's Disease Neuroimaging Initiative$^*$%
}
\date{}
\begin{document}

\maketitle

\noindent
{\bf Abstract:}
Binary classification is a common statistical learning problem in which a model is estimated on a set of covariates for some outcome indicating the membership of one of two classes. In the literature, there exists a distinction between hard and soft classification. In soft classification, the conditional class probability is modeled as a function of the covariates. In contrast, hard classification methods only target the optimal prediction boundary. While hard and soft classification methods have been studied extensively, not much work has been done to compare the actual tasks of hard and soft classification. In this paper we propose a spectrum of statistical learning problems which span the hard and soft classification tasks based on fitting multiple decision rules to the data. By doing so, we reveal a novel collection of learning tasks of increasing complexity. We study the problems using the framework of large-margin classifiers and a class of piecewise linear convex surrogates, for which we derive statistical properties and a corresponding sub-gradient descent algorithm. We conclude by applying our approach to simulation settings and a magnetic resonance imaging (MRI) dataset from the Alzheimer's Disease Neuroimaging Initiative (ADNI) study.
\vskip.15in

\noindent
{\bf Keywords:} 
Conditional Probability Estimation, Excess Risk Bounds, Statistical Machine Learning, Supervised Learning

\let\thefootnote\relax\footnote{\noindent
$^1$Department of Statistics and Operations Research, 
$^2$Lineberger Comprehensive Cancer Center, 
$^3$Department of Biostatistics,
$^4$Carolina Center for Genome Sciences, University of North Carolina, Chapel Hill, NC 27599.
$^\dagger$Correspondence to: Yufeng Liu (yfliu@email.unc.edu).
$^*$Data used in preparation of this article were obtained from the Alzheimer's Disease Neuroimaging Initiative (ADNI) database (adni.loni.usc.edu). As such, the investigators within the ADNI contributed to the design and implementation of ADNI and/or provided data but did not participate in analysis or writing of this report. A complete listing of ADNI investigators can be found at: \url{http://adni.loni.usc.edu/wp-content/uploads/how_to_apply/ADNI_Acknowledgement_List.pdf.} 
}

\thispagestyle{empty}


\clearpage
\setcounter{page}{1}

\section{Introduction}
\label{intro}

Classification is one of the most widely applied and well studied problems in supervised learning. Given a training set of observed covariates and outcomes, similar to the usual regression problem, in classification, the outcome is modeled as a function of the set of covariates. However, in contrast to standard regression with a continuous response variable, classification describes the setting where the outcome is a discrete class label. While generalizations to more than two classes exist, in this paper we focus on the standard binary problem where the label takes one of two possible values, typically denoted by $+1$ and $-1$.

Given such a dataset, commonly, the goal is to build a model, either to predict the class of a new observation from the covariate space, or to estimate the probably of each class as a function of the covariates. The tasks correspond respectively to hard and soft classification. Briefly, we refer to methods which only target the optimal prediction rule as hard classifiers, and those which produce estimates of class probability as soft classifiers. Examples of hard classifiers include the support vector machine (SVM) \cite{Vapnik1995,Vapnik1998} and $\psi$-learning \cite{Shen2003,Liu2006}, and examples of soft classifiers include logistic regression and other likelihood-based approaches. Often, soft classifiers are also used to obtain hard classification rules by predicting the class with greater estimated probability. These rules are commonly referred to as plug-in classifiers. While hard classification rules do not directly provide conditional class probability estimates, several approaches have been proposed for estimating class probabilities based on hard classifiers, including those of \cite{Platt1999} and \cite{Wang2008}. As such, methods which may be traditionally viewed as soft and hard classifiers are often used for either task. Naturally, a question of interest is: how are hard and soft classifiers related, and how do they differ in practice?

Recently, \cite{Liu2011a} introduced the Large-margin Unified Machines (LUM) family of margin-based classifiers, shedding some light on the the relationship between hard and soft classifiers. The LUM family connects several popular margin-based classification methods, including SVM, distance-weighted discrimination (DWD) \cite{Marron2007}, and a new hybrid logistic loss. Their approach was further extended to the multi-category case in \cite{Zhang2013}. Margin-based approaches to classification are popular in practice for their accuracy and computational efficiency in both low and high-dimensional settings. While a flexible family of margin-based classifiers, the LUM approach examines only a specific parameterized collection of classifiers along the gradient of soft to hard classification. In this paper, we similarly focus on connecting hard and soft margin-based methods. However, we consider a more natural approach based on connecting the tasks of hard and soft classification rather than specific hard and soft classifiers. Specifically, we propose a novel framework of binary learning problems which may be formulated as partial or full estimation of the conditional class probability based on fitting an arbitrary number of boundaries to the data. As an example, suppose we are interested in separating patients into four disease risk groups based on clinical measurements. One possible approach is to group patients according to whether their conditional probability of being positive for the disease is less than 25\%, between 25\% to 50\%, between 50\% to 75\%, or greater than 75\%. In this setting, the emphasis is not on the accuracy of class probability estimates, but instead, on the correct stratification of individuals into risk groups. Therefore, only partial estimation of the conditional class probability is required; in particular, at the three boundaries, 25\%, 50\%, and 75\%. While stratification of the patient classes is possible using a soft classifier, an approach directly targeting the three boundaries may provide improved stratification by requiring weaker assumptions on the entire form of the underlying conditional class probability. 

In addition to hard and soft classification, the proposed framework also encompasses rejection-option classification \cite{Herbei2006, Bartlett2008,Yuan2010a,Wegkamp2011} and weighted classification \cite{Lin2002a,Qiao2009}, two other well-studied binary learning problems. Briefly, the rejection-option problem expands on standard binary classification by introducing a third option to reject, where neither label is predicted. Notably, it can be shown that the decision to reject directly corresponds to a prediction that the probability of belonging to either class does not exceed a specified threshold. Since the task requires estimation of more than a single classification boundary, but less than the full class conditional probability, it may be viewed as an intermediate problem to hard and soft classification, as in the example given above. Applications of rejection option classification include certain medical settings where predictions should only be made when a level of certainty is obtained. Additionally, weighted classification extends the standard classification problem by accounting for differences or biases in class populations. We define these problems more formally, along with hard and soft classification, in Section~\ref{framework}.

The remainder of this paper is organized as follows. In the first part of Section~\ref{framework} we provide a review of margin-based learning. Then, in the remainder of Section~\ref{framework}, we define our family of binary learning problems and introduce a corresponding \emph{theoretical loss}, which generalizes the standard misclassification error to connect class prediction with probability estimation. In Section~\ref{losses} we provide necessary and sufficient conditions for consistency of a surrogate loss function, and propose a class of consistent piecewise linear surrogates akin to the SVM hinge loss for binary classification. In Section~\ref{statprops}, we present theoretical bounds on the empirical performance of classification rules obtained using surrogate loss functions. In Section~\ref{computation}, we provide a sub-gradient descent (SGD) algorithm for solving the corresponding optimization problem using the proposed piecewise linear surrogates. We then illustrate the behavior of our generalized family of classifiers using simulation in Section~\ref{simulations}, and a real data example from the Alzheimer's Disease Neuroimaging Initiative (ADNI) database in Section~\ref{realdata}. We conclude in Section~\ref{discussion} with a discussion of the proposed framework.

\section{Methodology}
\label{framework}

In this section, we first briefly introduce margin-based classifiers, and formally define the notion of classification consistency for loss functions. We then state the general form of our unified framework of problems and introduce a corresponding family of theoretical loss functions which encompasses the standard misclassification error as a special case.

\subsection{Margin-Based Classifiers}
\label{framework:margin}
Let $\{(\bx_i,y_i)\}_{i=1}^n$ denote a training set of $n$ covariate--label pairs drawn from $\calX \times \calY$ according to some unknown distribution $\calP(\bX,Y)$. For binary problems, $\calY = \{-1,+1\}$ is used to denote the label space, and often $\calX \subset \mathbb{R}^p$, with $p\geq1$. Given a training set, margin-based classifiers minimize a penalized loss over a class, $\mathcal{F}$, of margin functions, $f:\calX \rightarrow \mathbb{R}$. Typically, the corresponding optimization problem is written as:
\begin{align}
	\min_{f\in\mathcal{F}}\ \frac{1}{n}\sum_{i=1}^n \underbrace{L\big(y_if(\bx_i)\big)}_{loss} \ 
		+ \ \underbrace{\lambda J(f)}_{penalty}, \label{eq:loss+penaltyCh}
\end{align}
where $L:\mathbb{R}\rightarrow\mathbb{R}$ is a loss function defined with respect to the functional margin, $yf(\bx)$, and $J:\mathcal{F}\rightarrow\mathbb{R}$ is some roughness measure on $\mathcal{F}$ with corresponding tuning parameter $\lambda \geq 0$. Both hard and soft classification may be formulated as margin-based problems. In the case of hard classification, with a little abuse of notation, we use $\hatY\in\calY$ to denote a predicted class label, and $\hatY:\mathbb{R}\rightarrow\calY$ to denote a prediction rule on $\mathbb{R}$. In margin-based classification, $\hatY(\cdot)$ is combined with a margin function, $f\in\mathcal{F}$, to obtain predictions in $\calY$. Most commonly, in hard classification the sign rule, $\hatY(f(\bX)) = \text{sign}(f(\bx))$, is used, assuming $f(\bx) \not= 0$ almost surely (\textit{a.s.}). Thus, given a new $(\bx^*,y^*)$ pair with $f(\bx^*)\not=0$, correct classification is obtained if and only if $y^*f(\bx^*)>0$. Since the functional margin, $yf(x)$, serves as an approximate measure for classification correctness, the loss function, $L$, in~(\ref{eq:loss+penaltyCh}) is often chosen to be a non-increasing function over $yf(\bx)$. A natural choice of $L$ in hard classification is the misclassification error, or 0$-$1 loss, given by:
\begin{align}
	\ell_{0-1}(Y,\hatY) = \textbf{I}\{\hatY \not= Y\}, \label{eq:01loss}
\end{align}
where $\textbf{I}\{\cdot\}$ is used to denote the indicator function. Using the sign rule, the loss may be equivalently written over the class of margin functions as: $L_{0-1}(Yf(\bX)) = \textbf{I}\{Yf(\bX)<0\}$. However, direct optimization of the non-convex and discontinuous loss, $L_{0-1}$, is NP-hard and often infeasible in practice. Thus, continuous convex losses, called {surrogates}, are commonly used instead. Choices of the surrogate loss function corresponding to existing margin-based classifiers include the SVM hinge loss, $L(z) = \max\{0,1-z\}$, logistic loss, $L(z) = \log(1+e^{-z})$, and the DWD loss, $L(z) = \tfrac{1}{4z} \cdot \textbf{I}\{z \geq \tfrac{1}{2}\} + (1-z) \cdot \textbf{I}\{z < \tfrac{1}{2}\}$. Finally, the penalty term, $J(\cdot)$ is used to prevent over-fitting and improve generalizability of the resulting classifier. The amount of penalization is commonly determined by cross-validation over a grid of $\lambda$ values. Here, we note that while in the literature there exists a natural theoretical loss for hard classification, i.e. the 0$-$1 loss, there is no equivalent theoretical loss targeting consistent probability estimation for soft classification. In addition to providing a spectrum of theoretical loss functions covering soft and hard classifications at the two extremes, our proposed framework also naturally defines precisely such a theoretical loss for the soft classification problem (Figure~\ref{fig:steplossEg-01}C). 

In Section~\ref{intro}, we briefly discussed the learning tasks of rejection-option and weighted classification. As with hard and soft classification, these tasks may also be formulated as margin-based problems. We next describe how rejection-option classification may be formulated as a problem of the form~(\ref{eq:loss+penaltyCh}). Borrowing the notation of \cite{Yuan2010a}, we use $0$ to denote the rejection option such that a prediction, $\hatY_{rej}$, takes values in $\calY_{rej} = \{+1,0,-1\}$. Then, for some pre-specified \emph{rejection cost} $\pi\in(0,\tfrac{1}{2})$, they propose the following theoretical loss for rejection-option classification: 
\begin{align}
	\ell_{rej,\pi} (Y, \hatY_{rej}) =
		\begin{cases}
			1 &\text{if } \hatY_{rej} \not= Y,\ \hatY_{rej} \not= 0 \\
			\pi &\text{if } \hatY_{rej} = 0 \\
			0 &\text{otherwise}
		\end{cases}. \label{eq:rejloss}
\end{align}
To express the loss as a function over $Yf(\bX)$, \cite{Yuan2010a} propose the prediction rule $\hatY_{rej}(f(\bX);\delta) = \textbf{I}\{|Yf(\bX)| >\delta\}\cdot\text{sign}(Yf(\bX))$ for some appropriately chosen $\delta>0$. Then, $\ell_{rej,\pi}$ may be written as the following generalized 0$-$1 loss on $Yf(\bX)$: 
\begin{align*}
	L_{rej,\pi}(Yf(\bX); \delta) = (1-\pi)\textbf{I}\{Yf(\bX)\leq-\delta\} + \pi\textbf{I}\{Yf(\bX)<\delta\}.
\end{align*}

We finally consider the task of weighted classification. In contrast to the problems mentioned thus far, to fit the form of (\ref{eq:loss+penaltyCh}), weighted classification requires specifying separate theoretical loss functions for observations from the $+1$ and $-1$ classes, denoted by $\ell_{\text{w},\pi}^+$ and  $\ell_{\text{w},\pi}^-$. For simplicity, we use $\ell_{\text{w},\pi}^Y$ to denote the loss for both classes. Similar to hard classification, the task is to predict class labels in $\calY=\{+1, -1\}$. The loss function depends on a weight parameter, $\pi$, which accounts for imbalances between the two classes. Commonly, $\pi$ is constrained to the interval $(0,1)$ without loss of generality. Then, for fixed weight $\pi$, the weighted loss is given by:
\begin{align}
	\ell^Y_{\text{w},\pi} (Y, \hatY) 
		&= \textbf{I}\{Y=+1\} \cdot \ell^+_{w,\pi}(\hatY) 
			+ \textbf{I}\{Y=-1\} \cdot \ell^-_{w,\pi}(\hatY), \label{eq:wloss} \\
	\ell^+_{\text{w},\pi} (\hatY) &= (1-\pi) \cdot \textbf{I}\{\hatY \not= +1\}, \notag \\
	\ell^-_{\text{w},\pi} (\hatY) &= \pi \cdot \textbf{I}\{\hatY \not= -1\}. \notag
\end{align}
Note that the standard 0$-$1 loss corresponds to the special case of the weighted loss~(\ref{eq:wloss}) when equal weight is assigned to the two classes with $\pi=\tfrac{1}{2}$. Using the same prediction rule as for hard classification, $\hatY(f(\bx)) = \text{sign}(f(\bx))$, the loss over the functional margin may be written:
\begin{align*}
	L^Y_{\text{w},\pi} (Yf(\bX)) &= \textbf{I}\{Y=+1\} \cdot L^+_{\text{w},\pi}(Yf(\bX)) 
					+ \textbf{I}\{Y=-1\} \cdot L^-_{\text{w},\pi}(Yf(\bX)), \\
	L^+_{\text{w},\pi} (Yf(\bX)) &= (1-\pi) \cdot \textbf{I}\{Yf(\bX) < 0\}, \\
	L^-_{\text{w},\pi} (Yf(\bX)) &= \pi \cdot \textbf{I}\{Yf(\bX) < 0\}.
\end{align*}
As with the usual 0$-$1 loss, optimization of $L_{rej,\pi}$ and $L^Y_{\text{w}, \pi}$ is NP-hard, and in practice should be approximated using a convex surrogate loss. In the next section, we introduce the notion of consistency, an important statistical property of surrogate loss functions.

\subsection{Classification Consistency}
Much work has been done to study the statistical properties of classifiers of the $loss+penalty$ form given in (\ref{eq:loss+penaltyCh}) \cite{Steinwart2007,Blanchard2008,Bartlett2006,Cristianini2000}. Of these, {consistency} of loss functions is one of the most fundamental. In general, a loss function is called consistent for a margin-based learning problem if it recovers in expectation the optimal rule, often called the Bayes rule, to the theoretical loss function, e.g. $\ell_{0-1}$, $\ell_{rej,\pi}$ or $\ell^Y_{\text{w}, \pi}$. More formally, for a theoretical loss function, $\ell$, and a surrogate loss, $\phi$, let $Y_{\ell}^*(\bX) = \argmin{Y^*} \mathbb{E}_{Y|\bX} \{ \ell(Y, Y^*) \}$ and $f^*_\phi (\bX) = \argmin{f} \mathbb{E}_{Y|\bX} \{ \phi(Yf(\bX)) \}$ denote the Bayes rule and $\phi$-optimal margin function, respectively. Then, we call $\phi$ consistent for $\ell$ if $\hatY_{\ell}(f^*_\phi (\bX)) = Y_{\ell}^*(\bX)$, where $\hatY_{\ell}$ is the appropriate prediction rule, e.g. the sign function. Equivalently, using the margin-based formulation of the theoretical loss, $L$, and letting $f^*_L (\bX) = \argmin{f} \mathbb{E}_{Y|\bX} \{ L(Yf(\bX)) \}$ denote the $L$-optimal margin function, consistency may be expressed as $\hatY_{\ell}(f^*_\phi (\bX)) = \hatY_{\ell}(f^*_L (\bX))$. For rejection-option classification, the Bayes optimal rule is given by:
\begin{align}
	Y_{rej,\pi}^*(\bX) =
		\begin{cases}
			+1 &\text{if } p(\bX) \geq 1-\pi \\
			0 &\text{if } p(\bX) \in (\pi, 1-\pi) \\
			-1 &\text{if } p(\bX) \leq \pi
		\end{cases}. \label{eq:bayesrejection}
\end{align}
The Bayes optimal rule for weighted classification is given by:
\begin{align}
	Y_{\text{w},\pi}^*(\bX) =
		\begin{cases}
			+1 &\text{if } p(\bX) > \pi \\
			-1 &\text{if } p(\bX) \leq \pi
		\end{cases}. \label{eq:bayesweighted}
\end{align}
For hard classification, the Bayes optimal rule corresponds to $Y_{\text{w}, 0.5}$ , and consistency is often referred to as Fisher consistency or classification calibrated \cite{Bartlett2006}. While no theoretical loss has been proposed for soft classification, using $p(\bX)=\mathbb{P}(Y=+1 |\bX)$ to denote the conditional class probability at $\bX\in\calX$, commonly, $\phi$ is called consistent for soft classification if there exists some monotone mapping, $C:\mathbb{R}\rightarrow[0,1]$ such that $C(f^*_\phi(\bX)) = p(\bX)$. Naturally, $C(\cdot)$ may be viewed as an extension of the prediction rules $\hatY(\cdot)$ and $\hatY_{rej}(\cdot;\delta)$ given for hard and rejection-option classification. Necessary and sufficient conditions for Fisher, rejection-option, and probability estimation consistency have been described in \cite{Lin2002,Yuan2010a,Zhang2013a}. 

In this paper, we propose a novel framework for unifying hard, soft, rejection-option, and weighted classification through a generalized formulation of their corresponding theoretical losses, corresponding Bayes optimal rules, and necessary and sufficient conditions for consistency. Our generalized formulation not only provides a platform for comparing existing binary classification tasks, but also introduces an entire family of new tasks which fills the gap between these problems. We next formally introduce our unified framework of binary learning problems.

\subsection{Unified Framework}
\begin{figure}[t]
	\centering
	\includegraphics[width=\textwidth]{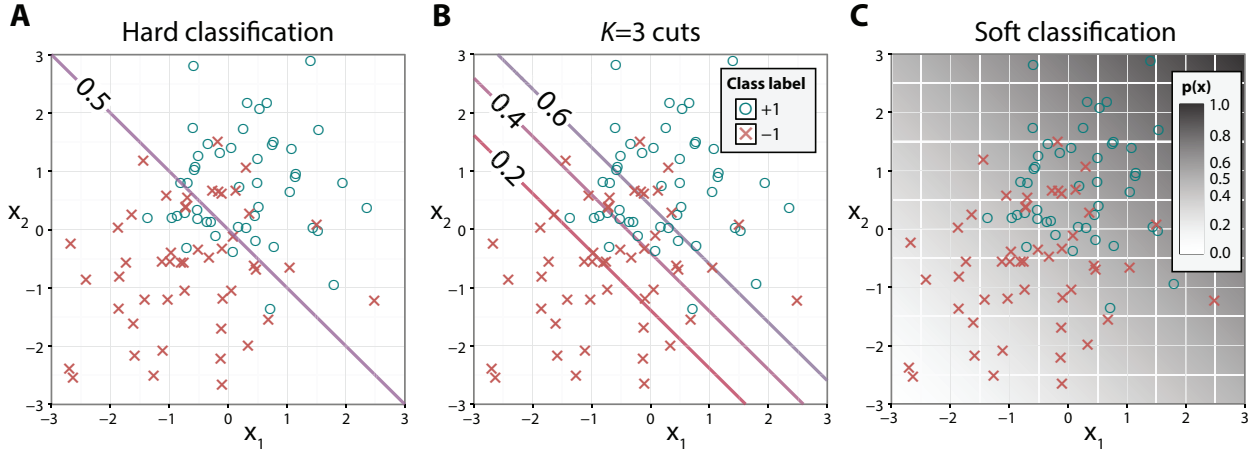}%
	\caption[Example of hard classification, set estimation and soft classification.]{Boundaries are shown separating the input space, $\mathbb{R}^2$ into the $K+1$ prediction sets, $\intervals$. A sample of 100 observations drawn from the underlying class populations are overlaid to show the distribution over the space. (A) The $K=1$ boundary for $\boldpi = \{0.5\}$ corresponding to hard classification is shown by the separating hyperplane corresponding to the set $\{\bx\in\mathbb{R}^2: p(\bx) = 0.5\}$. (B) The set of $K=3$ boundaries are shown for $\boldpi = \{0.2,0.4,0.6\}$ separating the 4 prediction sets. (C) The soft classification results are shown spanning the entire set of $\pi_k \in (0,1)$. As $K\rightarrow\infty$, moving from hard to soft classification, the set of learning problems becomes increasingly complex.}
	\label{fig:cutEg}
\end{figure}
First, we note that all of the classification tasks described in Section~\ref{framework:margin} may be formulated as learning problems which target partial or complete estimation of the conditional class probability, $p(\bx)$. We propose our framework of unified margin-based learning problems based on this insight. Let $\intervals$ denote the ordered $(K+1)$ partition of the interval $[0,1]$ obtained by splitting at $\boldpi = \{\pi_1,\ldots,\pi_K\}$, where $0 < \pi_1 < \ldots < \pi_{K} < 1$. Assume $p(\bx)\not=\pi_k$ \textit{a.s.} for all $k$, such that observations belong to only a single region of interval. Letting $\pi_0=0$ and $\pi_{K+1}=1$ for ease of notation, we write: 
\begin{align*}
	\intervals 
		&= \big\{ \omega_0, \ldots, \omega_K \big\},
\end{align*}
where $\omega_0 = [\pi_0,\pi_1]$, and $\omega_k = (\pi_{k}, \pi_{k+1}]$, for $k\geq1$. As our framework, we propose the class of problems which target a partition of the covariate space, $\calX$, into the $K+1$ regions, $\{\bx: p(\bx) \in \omega_k\}$. In Figure~\ref{fig:cutEg}, we show a sample of $100$ observations drawn from the same underlying distribution, $\mathcal{P}(\bX,Y)$ along with optimal solutions to three representative problems from our proposed framework. Note that the extreme cases of $K=1$ with $\boldpi=\{0.5\}$ (Figure~\ref{fig:cutEg}A), and $K=\infty$ with $\boldpi$ dense on $(0,1)$ (Figure~\ref{fig:cutEg}C) correspond to hard and soft classification, respectively. We discuss these connections in more detail later in this section. To illustrate the spectrum of problems in our framework, we also show a new intermediate problem in Figure~\ref{fig:cutEg}B, with $K=3$ and $\boldpi = \{0.2,0.4,0.6\}$.

Formally, we define our framework as the collection of minimization tasks of a theoretical loss which generalizes the 0$-$1 loss, over the collection of rules $\mathcal{G}_{\boldpi} = \{ g:\calX\rightarrow\intervals \}$. Recall the weighted 0$-$1 loss, $\ell_{\text{w}}$, for weighted classification described above. For positive and negative class weights $(1-\pi)$ and $\pi$ where $\pi\in(0,1)$, the weighted 0$-$1 loss has corresponding Bayes boundary at $\{\bx: p(\bx) = \pi\}$. Problems under our framework may be viewed as the task of simultaneously estimating $K$ such boundaries. Intuitively, we formulate our theoretical loss as the average of $K$ weighted 0$-$1 loss functions with corresponding weights $\boldpi$. Throughout, we use $\ell^+_{\boldpi}(g(\bx))$ and $\ell^-_{\boldpi}(g(\bx))$ to denote the loss for positive and negative class observations, respectively. As with the weighted loss, we use $\ell^Y_{\boldpi}$ to denote the loss for both classes:
\begin{align}
	\ell_{\boldpi}^Y (g(\bX)) &= 
			\frac{2}{K} \sum_{k=1}^K \ell^Y_{\pi_k} (g(\bX)), \label{eq:general01}
\end{align}
where 
\begin{align*}
	\ell_{\pi_k}^+ (g(\bX)) &= (1-\pi_k) \cdot \textbf{I}\{ g(\bX) \leq \pi_k \}, \\
	\ell_{\pi_k}^- (g(\bX)) &= \pi_k \cdot \textbf{I}\{ g(\bX) > \pi_k \},
\end{align*}
and the notion of inequalities is extended to elements of $\intervals$ such that $(\pi_{j},\pi_{j+1}] \leq \pi_k$ if $\pi_{j+1} \leq \pi_k$ and $(\pi_{j},\pi_{j+1}] > \pi_k$ if $\pi_{j} \geq \pi_k$. As we show in Supplementary Section~S1, our theoretical loss encompasses the usual 0$-$1 loss, its weighted variant, and the rejection-option loss proposed by \cite{Yuan2010a}. The multiplicative constant, 2, is included in~(\ref{eq:general01}) such that $\ell^Y_{\boldpi}$ reduces precisely to the usual 0$-$1 loss when $\boldpi = \{0.5\}$. Note that since $\ell^Y_{\boldpi}$ is effectively the average of $K$ indicator functions scaled by 2, the function takes values in the interval $[0,2]$. In Figure~\ref{fig:steplossEg-01}, we show $\ell^Y_{\boldpi}$ as a function of $g(\bx) \mapsto \intervals$, corresponding to the problems in Figure~\ref{fig:cutEg}. Along the horizontal axis, the range $[0,1]$ is split into corresponding $\omega_j =  (\pi_{j},\pi_{j+1}]$ intervals. Note that the loss function is constant within each interval, giving the appearance of a step function, except in the extreme case when $K=\infty$. As $K$ increases, the theoretical loss becomes smoother, with the limit at $\boldpi=(0,1)$ corresponding to the proposed theoretical loss for consistent soft classification described in Section~\ref{framework:margin}. Additionally, note that while the loss functions, $\ell^+_{\boldpi}$ and $\ell^-_{\boldpi}$, are symmetric in Panels A and C of Figure~\ref{fig:steplossEg-01}, the same is not true for the loss functions in Panel B. This is due to the fact that the boundaries of interest, $\boldpi$, are symmetric between the two classes, i.e. $\boldpi = 1-\boldpi$, when $\boldpi=\{0.5\}$ and $\boldpi=(0,1)$, but not when $\boldpi=\{0.2, 0.4, 0.6\}$.
\begin{figure}[t!]
	\centering
	\includegraphics[width=\textwidth]{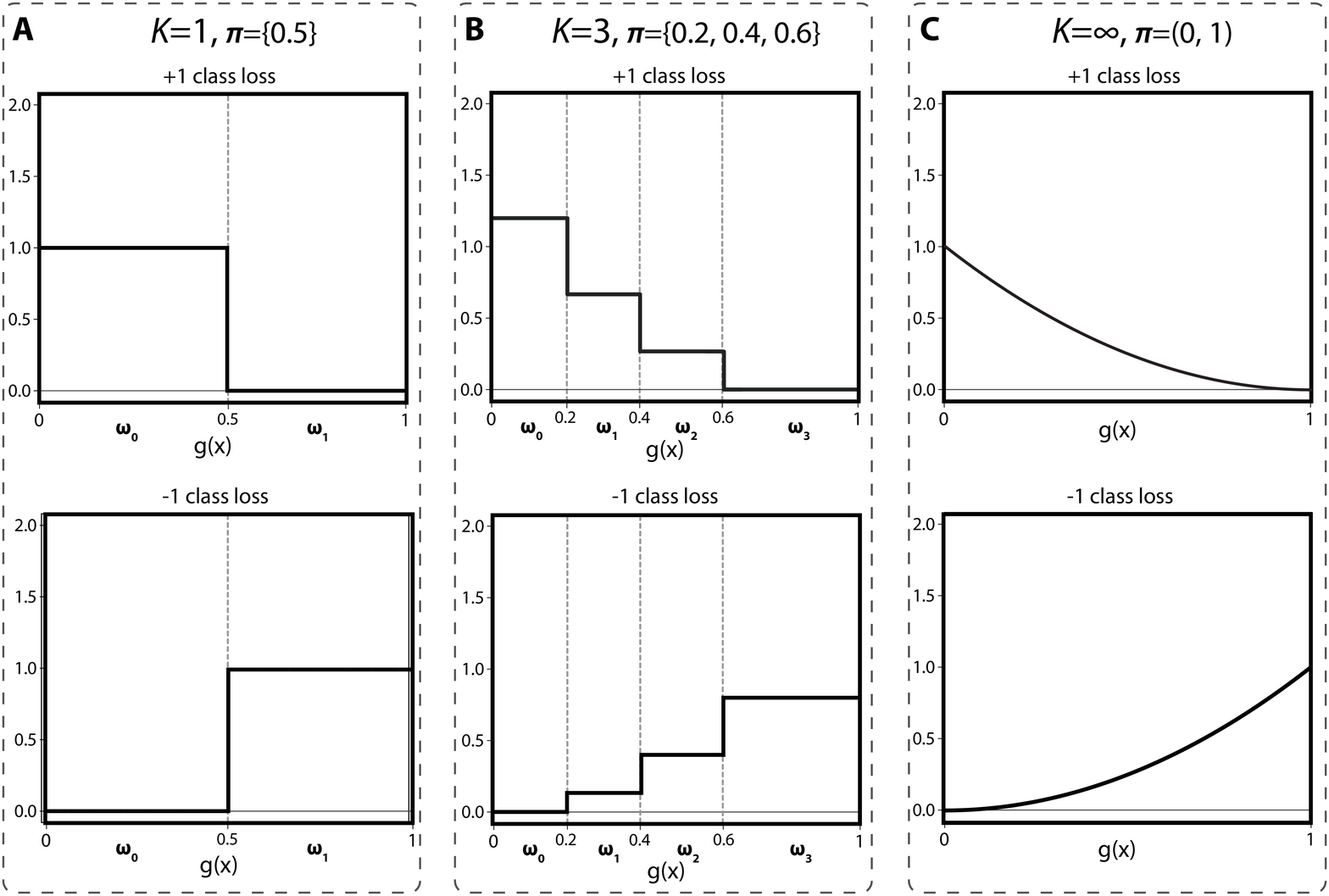}%
	\caption[Example of theoretical loss.]{Examples of the theoretical loss functions, $\ell^Y_{\boldpi}$, for observations from the positive and negative classes over $g(\bx)$ are shown for boundaries, $\boldpi$ at (A) $\{0.5\}$, (B) $\{0.2,0.4,0.6\}$, and (C) $(0, 1)$, corresponding to the problems shown in Figure~\ref{fig:cutEg}. The theoretical loss generalizes the standard 0$-$1 loss given in (A) by incorporating $K$ steps. As $K$ increases and the problem approaches soft classification, the theoretical loss becomes noticeably smoother.}
	\label{fig:steplossEg-01}
\end{figure}

The following result states that the class of problems defined by our theoretical loss indeed corresponds to the proposed framework of learning tasks. That is, the Bayes optimal rule given by $\bayesint(\bX) = \argmin{g} \mathbb{E}_{Y|\bX} \big\{ \ell^Y_{\boldpi} (g(\bX))\big\}$, is precisely the partitioning task described above. 
\begin{theorem} \label{thm:bayesrule}
	For fixed $K$ and $\boldpi$ defined as above, the Bayes optimal rule for the theoretical loss~(\ref{eq:general01}) is given by:%
	\begin{align*}
		\bayesint(\bX)
		&= \argmin{g\in\mathcal{G}_{\boldpi}} \mathbb{E}_{Y|\bX} \big\{\ell^Y_{\boldpi}(g(\bX)) \big\} \\
		&= \sum_{k=0}^K \omega_k \cdot \textbf{I}\{ p(\bX) \in \omega_k \} .
	\end{align*}
\end{theorem}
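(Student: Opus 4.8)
The plan is to exploit the fact that the rule class $\mathcal{G}_{\boldpi} = \{g:\calX \to \intervals\}$ imposes no coupling constraint across different covariate values, so the risk $\bbE_{Y|\bX}\{\ebpi^Y(g(\bX))\}$ can be minimized pointwise: for each fixed $\bX$ I treat $p := p(\bX)$ as a constant and minimize over the single value $g(\bX) \in \intervals = \{\omega_0,\ldots,\omega_K\}$. The first step is to write down the conditional risk. Using the class-conditional form of the weighted loss together with $\bbE_{Y|\bX}\,\textbf{I}\{Y=+1\} = p$, I obtain
\begin{align*}
	R(g \mid \bX)
	&:= \bbE_{Y|\bX}\big\{ \ebpi^Y(g(\bX)) \big\} \\
	&= \frac{2}{K}\sum_{k=1}^K \Big[ p(1-\pi_k)\,\textbf{I}\{g(\bX)\leq \pi_k\} + (1-p)\pi_k\,\textbf{I}\{g(\bX)> \pi_k\} \Big].
\end{align*}

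Next I would evaluate this risk at each candidate $g(\bX) = \omega_j = (\pi_j,\pi_{j+1}]$. By the extended-inequality conventions, $\omega_j \leq \pi_k$ holds exactly when $k \geq j+1$ and $\omega_j > \pi_k$ holds exactly when $k \leq j$, and since $0<\pi_1<\cdots<\pi_K<1$ these two cases partition $\{1,\ldots,K\}$. Substituting the resulting indicators collapses the sum to
\begin{align*}
	R(\omega_j \mid \bX) = \frac{2}{K}\Big[ \sum_{k=j+1}^K p(1-\pi_k) + \sum_{k=1}^{j} (1-p)\pi_k \Big].
\end{align*}

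The key step is to compare consecutive choices. A direct telescoping of the two sums gives the clean first difference
\begin{align*}
	R(\omega_j \mid \bX) - R(\omega_{j-1} \mid \bX) = \frac{2}{K}\big[ (1-p)\pi_j - p(1-\pi_j) \big] = \frac{2}{K}(\pi_j - p).
\end{align*}
Under the assumption $p \neq \pi_k$ \textit{a.s.}, the sign of this increment is governed entirely by whether $p$ exceeds $\pi_j$: it is strictly negative for every $j$ with $\pi_j < p$ and strictly positive for every $j$ with $\pi_j > p$. Hence $j \mapsto R(\omega_j \mid \bX)$ strictly decreases up to the unique index $j^*$ satisfying $\pi_{j^*} < p < \pi_{j^*+1}$ (that is, $p \in \omega_{j^*}$) and strictly increases thereafter, so the unique minimizer is $g^*(\bX) = \omega_{j^*}$. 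Collecting this over all $\bX$ yields $\bayesint(\bX) = \sum_{k=0}^K \omega_k \,\textbf{I}\{ p(\bX) \in \omega_k \}$, as claimed.

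The argument is computationally routine; the only delicate point is the bookkeeping of the extended inequalities — verifying that $\omega_j$ lies below $\pi_k$ precisely for the large indices and above it for the small indices — since this is exactly what makes the first difference telescope down to $\tfrac{2}{K}(\pi_j - p)$. Once that identity is in hand, the unimodality of the conditional risk is immediate, and the \textit{a.s.} assumption $p \neq \pi_k$ is needed only to guarantee strict inequalities, and thus a unique minimizer rather than ties at a boundary. I therefore expect no substantive obstacle beyond organizing the index arithmetic cleanly.
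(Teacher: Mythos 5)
Your proposal is correct and takes essentially the same approach as the paper: both minimize the conditional risk pointwise in $\bX$ over the $K+1$ candidate values, and both arrive at exactly the same closed form $R(\omega_j \mid \bX) = \tfrac{2}{K}\big[\sum_{k=j+1}^{K} p(1-\pi_k) + \sum_{k=1}^{j}(1-p)\pi_k\big]$. The only divergence is the finishing discrete optimization: the paper picks the threshold index $h^* = \argmax{k}\{\pi_k < p(\bX)\}$ and argues that this choice lets every summand attain its termwise minimum (feasible because $p(1-\pi_k) \geq \pi_k(1-p)$ exactly when $p \geq \pi_k$ and the $\pi_k$ are increasing), whereas you telescope consecutive risks to obtain the increment $\tfrac{2}{K}(\pi_j - p)$ and conclude by unimodality --- an equally valid finish that has the small added benefit of making uniqueness of the minimizer explicit under the assumption $p(\bX) \neq \pi_k$ \textit{a.s.}
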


In addition to the results of Theorem~\ref{thm:bayesrule}, the theoretical loss functions for hard (\ref{eq:01loss}), rejection-option (\ref{eq:rejloss}), and weighted (\ref{eq:wloss}) classification can be derived as special cases of~(\ref{eq:general01}). This is shown by first noting the equivalence of $\intervals$ to $\calY$ and $\calY_{rej}$ based on the Bayes optimal rules, (\ref{eq:bayesrejection}) and (\ref{eq:bayesweighted}). From this equivalence, (\ref{eq:rejloss}) and (\ref{eq:wloss}) can be obtained directly from (\ref{eq:general01}). For soft classification, we derive a new theoretical loss from the limiting form of (\ref{eq:general01}):
\begin{align*}
	\ell^Y_{\boldpi}(g(\bX)) 
		&= \lim_{K\rightarrow\infty} \frac{2}{K} \sum_{k=1}^K \ell^Y_{\pi_k} (g(\bX)), \\
		&= \big(\textbf{I}\{Y=+1\} - g(\bX)\big)^2.
\end{align*}
The resulting theoretical loss is shown in Figure~\ref{fig:steplossEg-01}C. Since $\intervals = (0,1)$, the Bayes rule is simply the conditional class probability, $g(\bX) = p(\bX)$, corresponding to soft classification. All proofs, and a more complete derivation of these results may be found in the Supplementary Materials.

As with the problems described in Section~\ref{framework:margin}, optimization of $\ell_{\boldpi}$ with respect to $g\in\mathcal{G}_{\boldpi}$ is NP-hard. Thus, we first reformulate $\ell_{\boldpi}$ as a function on $\mathbb{R}$ to express the optimization over a collection of margin functions, $\mathcal{F}$. We then propose in Section~\ref{losses} to solve the approximate problem using convex surrogate loss functions. Generalizing the approach of \cite{Yuan2010a} for rejection-option classification, we frame the optimization task over the class of margin functions, $\mathcal{F}$, using a prediction rule $C:\mathbb{R}\times\mathbb{R}^K \rightarrow \intervals$ of the form:
\begin{align}
	C(f(\bx); \bolddelta) 
		&= \sum_{k=0}^K \omega_k \cdot \textbf{I}\{ f(\bx) \in (\delta_{k-1}, \delta_k] \}, \label{eq:deltarule}
\end{align}
for monotone increasing $\bolddelta = \{\delta_1,\ldots,\delta_K\}$, and $\delta_0 = -\infty$, $\delta_{K+1}=\infty$. Intuitively, each $\delta_k$ corresponds to the $\pi_k$-boundary along the range of the margin function, $f(\bX)$. As is common in margin-based learning, we write the theoretical loss as the following function over $Yf(\bX)$: 
\begin{align}
	L^Y_{\boldpi} (Yf(\bX); \bolddelta)
		&= \ell^Y_{\boldpi} (C(f(\bX); \bolddelta)) \notag \\
		&= 
		\begin{cases}
			\tfrac{2}{K} \sum_{k=1}^K (1-\pi_k) \cdot \textbf{I}\{ Yf(\bX) \leq \delta_k \} &\text{ if } Y=+1 \\
			\tfrac{2}{K} \sum_{k=1}^K \pi_k \cdot \textbf{I}\{ Yf(\bX) < -\delta_k \} &\text{ if } Y=-1
		\end{cases}. \label{eq:general01margin}
\end{align}
\begin{figure}[t]
	\centering
	\includegraphics[width=\textwidth]{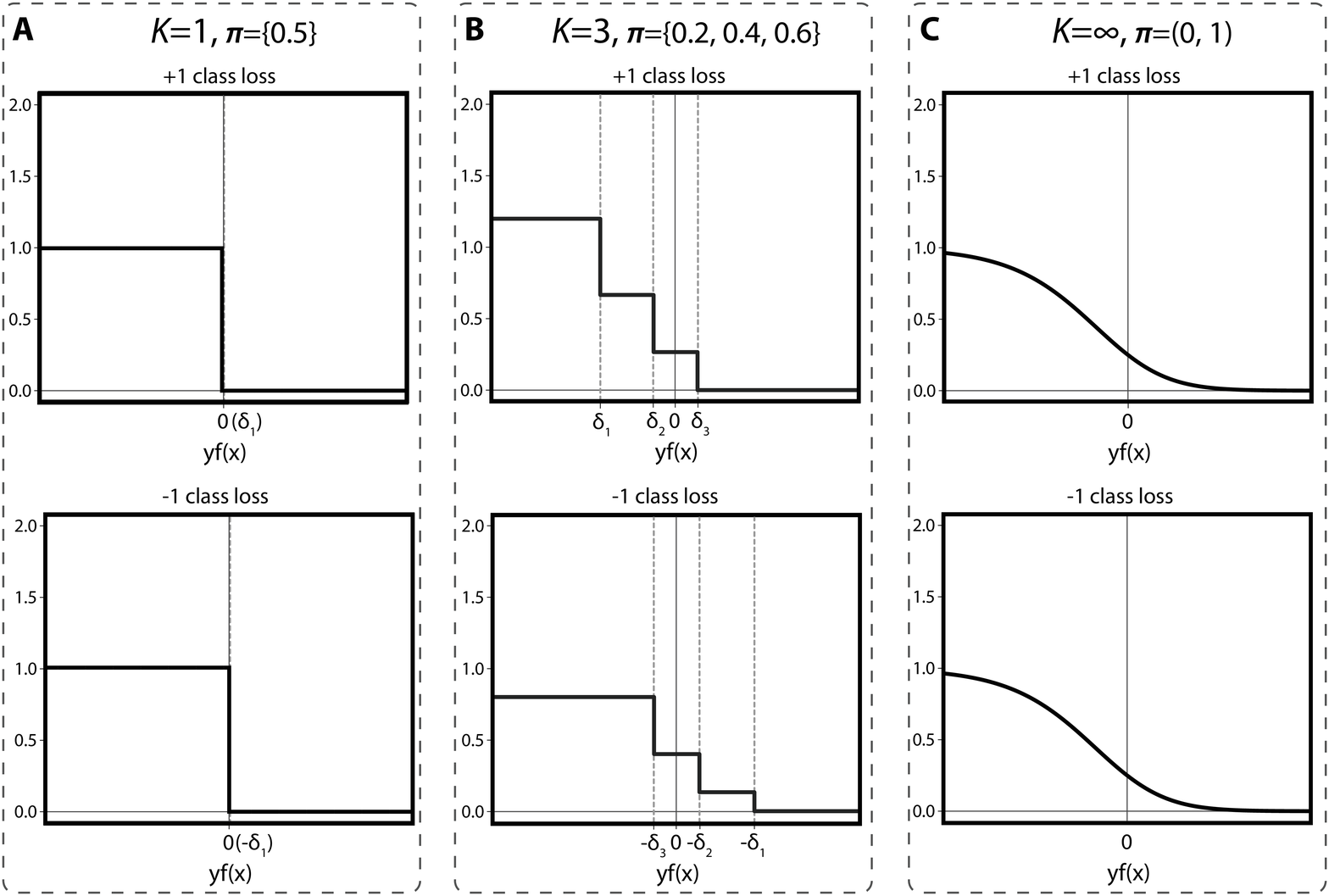}%
	\caption[Examples of the theoretical loss.]{Examples of the margin-based formulation of the theoretical loss function, $L^Y_{\boldpi}(\cdot\ ,\bolddelta)$, for observations from the positive and negative classes over $yf(\bx)$ are shown for boundaries, $\boldpi$, at (A) $\{0.5\}$, (B) $\{0.2, 0.4, 0.6\}$, and (C) $(0,1)$, using well-chosen $\bolddelta$.}
	\label{fig:steplossEg}
\end{figure}

In Figure~\ref{fig:steplossEg}, we plot the corresponding margin-based formulations of the theoretical loss functions shown in Figure~\ref{fig:steplossEg-01}, with well chosen $\bolddelta$. Intuitively, both $L^+_{\boldpi}(\cdot\ ; \bolddelta)$ and $L^-_{\boldpi}(\cdot\ ; \bolddelta)$ are non-increasing on $yf(\bx)$. We also note that $\ell^-_{\boldpi}$ and $L^-_{\boldpi}(\cdot\ ;\bolddelta)$ differ by a reflection along the vertical axis since $L^-_{\boldpi}(\cdot\ ;\bolddelta)$ is defined with respect to $yf(\bx)=-f(\bx)$. Given the margin-based formulation (\ref{eq:general01margin}), we propose to solve our class of problems using convex surrogate loss functions. In the following section, we first present necessary and sufficient conditions for a surrogate loss to be consistent to (\ref{eq:general01}). We then introduce a class of consistent piecewise linear surrogates, which includes the SVM hinge loss as a special case.

\section{Convex Surrogate Loss Functions}
\label{losses}
%
Since the proposed theoretical loss function (\ref{eq:general01}) and its margin-based reformulation (\ref{eq:general01margin}) are discontinuous and non-convex for any finite choice of $K$ and $\boldpi$, empirical minimization can quickly become intractable. Therefore, we propose to instead minimize a convex surrogate loss over the class of margin functions, as in hard and soft classification. In this section, we first provide necessary and sufficient conditions for a surrogate loss to be consistent for (\ref{eq:general01}) with fixed $K$ and $\boldpi$. Then, we introduce a class of convex piecewise linear surrogates which includes the SVM hinge loss as a special case. Intuitively, the piecewise linear surrogates each consist of $K$ non-zero segments, corresponding to the $K$ boundaries, $\boldpi$. In the limit, as $\boldpi$ becomes dense on $(0,1)$, the piecewise linear surrogate tends towards a smooth loss, as in Panel C of Figures~\ref{fig:steplossEg-01} and~\ref{fig:steplossEg}.

\subsection{Consistency}
\label{losses:consistency}
Throughout this section, we assume $K$ and $\boldpi$ to be fixed. First, let $\surrplus$ and $\surrminus$ denote a pair of convex surrogate loss functions for $\ell^+_{\boldpi}$ and $\ell^-_{\boldpi}$. Further, let $\bayessurr = \argmin{f} \mathbb{E}_{Y|\bX} \{ \surrogate^Y (Yf(\bX)) \}$ denote the $\phi^Y$-optimal rule over the class of all measurable functions. We call $\phi^Y$ consistent if there exists $\bolddelta\in\bbR^K$ such that the prediction rule (\ref{eq:deltarule}) satisfies $C(\bayessurr(\bx); \bolddelta) = \bayesint(\bx)$, i.e. if there exists a known monotone mapping from the $\phi^Y$-optimal rule to the $K+1$ partition of $\calX$ to $\intervals$. The following result provides necessary and sufficient conditions for the consistency of the surrogate loss $\phi^Y$ to $\ebpi^Y$.
\begin{theorem} \label{thm:consistency}
	A pair of convex surrogate loss functions, $\surrogate^Y$, are consistent for $\ell^Y_{\boldpi}$ if and only if there exists $\bolddelta\in\mathbb{R}^K$ such that for each $k=1,\ldots,K$: $\phi^{+\prime}(\delta_k)$ and $\phi^{-\prime}(-\delta_k)$ exist, $\phi^{+\prime}(\delta_k)$ and $\phi^{-\prime}(-\delta_k)<0$, and
	\begin{align}
		\frac{\phi^{-\prime}(-\delta_k)}{\phi^{-\prime}(-\delta_k)+\phi^{+\prime}(\delta_k)} = \pi_k. \label{eq:thmconsistency}
	\end{align} \vspace{-5pt}
\end{theorem}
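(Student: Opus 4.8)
The plan is to reduce the whole problem to a \emph{pointwise} analysis of the conditional surrogate risk and then read off the stated conditions from a first-order stationarity argument. Because $\bayessurr$ is defined as the minimizer over all measurable $f$, the minimization decouples across $\bX$, so it suffices to fix a probability level $p = p(\bX) \in (0,1)$ and study the conditional risk
\[
	R_{\surrogate}(f; p) = p\,\surrplus(f) + (1-p)\,\surrminus(-f),
\]
with minimizer $f^*(p) := \bayessurr(\bx)$ depending on $\bx$ only through $p$. By Theorem~\ref{thm:bayesrule} and the definition of the prediction rule~(\ref{eq:deltarule}), the surrogate pair is consistent precisely when $C(f^*(p); \bolddelta) = \bayesint$ for every $p$, which, since both $C(\cdot;\bolddelta)$ and $\bayesint$ are thresholdings, is equivalent to the family of equivalences $f^*(p) \le \delta_k \iff p \le \pi_k$ for each $k = 1,\dots,K$.

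First I would record the structural consequences of convexity. Since $\surrplus$ is convex and $f \mapsto \surrminus(-f)$ is convex (a convex function composed with an affine map), $R_{\surrogate}(\cdot; p)$ is convex in $f$; its set of minimizers is therefore an interval, characterized by the subgradient inclusion $0 \in \partial_f R_{\surrogate}(f; p)$, which at points of differentiability reads
\[
	p\,\phi^{+\prime}(f) = (1-p)\,\phi^{-\prime}(-f).
\]
Next I would establish that $f^*(p)$ is nondecreasing in $p$ by a standard comparative-statics/single-crossing argument: the cross-derivative $\partial_p \partial_f R_{\surrogate} = \phi^{+\prime}(f) + \phi^{-\prime}(-f)$, so increasing $p$ shifts the minimizer upward exactly when this sum is negative. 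Monotonicity is what upgrades the pointwise matching at $p = \pi_k$ to the required global equivalence $f^*(p) \le \delta_k \iff p \le \pi_k$: it guarantees that $\{p : f^*(p) \le \delta_k\}$ is an interval with a single crossing level.

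With these tools the two directions become short. For \emph{sufficiency}, assuming the stated conditions, the hypothesis that $\phi^{+\prime}(\delta_k)$ and $\phi^{-\prime}(-\delta_k)$ exist and satisfy~(\ref{eq:thmconsistency}) is, after clearing denominators, exactly the stationarity identity $\pi_k\,\phi^{+\prime}(\delta_k) = (1-\pi_k)\,\phi^{-\prime}(-\delta_k)$; hence $\delta_k$ minimizes $R_{\surrogate}(\cdot;\pi_k)$, so $f^*(\pi_k) = \delta_k$. The strict negativity of both derivatives makes their sum negative, giving a strictly increasing crossing through $\delta_k$, which places $f^*(p)$ on the correct side of $\delta_k$ as $p$ leaves $\pi_k$ and yields the desired equivalence for every $k$. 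For \emph{necessity}, consistency forces $f^*(\pi_k) = \delta_k$, so $0 \in \partial_f R_{\surrogate}(\delta_k;\pi_k)$; I would then argue that an exact (non-sticky, non-jumping) crossing at $\pi_k$ rules out a kink of $R_{\surrogate}$ there, forcing differentiability at $\delta_k$ and $-\delta_k$ and hence the stationarity identity, from which~(\ref{eq:thmconsistency}) follows by solving for $\pi_k$. Finally, I would note that the two individual sign conditions are not extra assumptions but consequences: combining a negative cross-derivative sum (the increasing-crossing direction) with $\pi_k \in (0,1)$ forces both $\phi^{+\prime}(\delta_k) < 0$ and $\phi^{-\prime}(-\delta_k) < 0$.

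The hard part will be the subgradient bookkeeping in the \emph{necessity} direction: convexity only supplies subdifferential inclusions, so I must show that the requirement of an \emph{exact} threshold match, $f^*(p) \le \delta_k \iff p \le \pi_k$, is incompatible with $R_{\surrogate}(\cdot;\pi_k)$ having a genuine kink at $\delta_k$ (which would let the minimizer stay pinned at $\delta_k$ over a range of $p$, or skip it entirely), thereby forcing the one-sided derivatives to exist and to coincide. Establishing that this crossing must be strictly increasing---hence that the derivatives are individually negative rather than merely summing to a negative number---is the other delicate point, and is where the range constraint $\pi_k \in (0,1)$ enters essentially.
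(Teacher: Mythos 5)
The paper does not prove this theorem from first principles: its entire proof is the observation that $(\boldpi,\bolddelta)$-consistency decouples into $(\pi_k,\delta_k)$-consistency for each $k$ separately, followed by a citation of Theorem~1 of \cite{Yuan2010a} for the single-threshold necessary and sufficient conditions. You perform the same decoupling but then attempt the single-threshold equivalence yourself, so yours is a genuinely different, self-contained route. Your sufficiency half is sound, and the local observation in your sketch already suffices: since $\phi^{+\prime}(\delta_k)$ and $\phi^{-\prime}(-\delta_k)$ exist, the derivative $p\,\phi^{+\prime}(\delta_k)-(1-p)\,\phi^{-\prime}(-\delta_k)$ of the conditional risk at $\delta_k$ is affine in $p$, vanishes at $p=\pi_k$ by~(\ref{eq:thmconsistency}), and is strictly decreasing in $p$ because the two derivatives are negative; convexity then places every minimizer strictly above $\delta_k$ when $p>\pi_k$ and strictly below when $p<\pi_k$. (The global monotonicity of $f^*(p)$ that you also invoke is neither needed nor guaranteed for arbitrary convex pairs.)

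The necessity half has a genuine gap, exactly at the step you flag as hard, and the repair you propose would fail. Consistency is an existential statement in $\bolddelta$, so the threshold witnessing consistency need not be one at which the stated conditions hold; in particular, an exact crossing at $\pi_k$ does \emph{not} rule out a kink at $\delta_k$. Concretely, take $K=1$, $\pi_1=\tfrac12$, $\surrplus(z)=\max\{0,\,1-z,\,1-2z\}$ and $\surrminus(z)=\max\{0,\,1-z\}$. Every conditional-risk minimizer lies in $[-1,0]$ when $p<\tfrac12$, and the unique minimizer is $1$ when $p>\tfrac12$, so this pair is consistent with $\delta_1=0$; yet $\surrplus$ is not differentiable at $0$ (one-sided slopes $-2$ and $-1$). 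What happens is that $0$ is an \emph{endpoint} of the subdifferential of the conditional risk at $p=\tfrac12$, so the minimizer sticks at $\delta_1$ on one side of $\pi_1$ and escapes immediately on the other---precisely the behavior your argument declares impossible. (Note also that consistent crossings generically \emph{jump}; even the plain hinge loss jumps from $-1$ to $+1$ at $p=\tfrac12$, so ``non-jumping'' cannot be part of a correct characterization.) In this example the theorem survives only because the conditions hold at a relocated threshold, namely any $\delta_1\in(0,1)$ interior to the jump; a correct necessity argument must therefore work inside the argmin interval of $R_{\surrogate}(\cdot;\pi_k)$ and show both losses are affine there with slopes in the ratio~(\ref{eq:thmconsistency}), rather than fix $\delta_k$ in advance.

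You should also be aware that even this repair cannot be completed at the stated level of generality. Replacing the middle piece of $\surrplus$ above by $1-z+z^2/2$ on $[0,1]$ (and the constant $\tfrac12$ beyond $1$) keeps the pair consistent at $\delta_1=0$ but shrinks the argmin at $p=\tfrac12$ to the singleton $\{0\}$, and one can check that \emph{no} $\delta_1$ satisfies the differentiability, negativity, and ratio conditions simultaneously. So the ``only if'' direction for asymmetric pairs $(\surrplus,\surrminus)$ fails as literally written; it requires either a one-sided (subgradient) formulation of~(\ref{eq:thmconsistency}) or extra smoothness hypotheses. The symmetric single-loss setting of \cite{Yuan2010a} avoids this because a kink at the threshold there produces two-sided sticking and genuinely destroys consistency---a distinction that is lost both in the paper's proof-by-citation and in your sketch.
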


Naturally, any surrogate loss satisfying the conditions of Theorem~\ref{thm:consistency} for some $\boldpi$, must also satisfy the set of conditions for any subset of the boundaries, $\boldpi^\prime \subseteq \boldpi$. Thus, for surrogate loss functions consistent for soft classification, i.e. when $\boldpi=(0,1)$, there exists an appropriate $\bolddelta$ for any possible $K$ and $\boldpi$. Similar intuition is used to justify the use of soft classification based plug-in classifiers described in Section~\ref{intro}. Examples of surrogate losses consistent for soft classification include the logistic, squared hinge, exponential, and DWD losses. Values of $\delta_k$ such that the conditions of Theorem~\ref{thm:consistency} are met for these loss functions are provided in Corollaries 3-8 of \cite{Yuan2010a}. In the next section, we introduce a class of piecewise linear surrogates which, similar to the SVM loss for hard classification, satisfy consistency for the $\boldpi$ of interest, but not for any $\boldpi^\prime \supset \boldpi$. We refer to such a piecewise linear surrogate as being minimally consistent for a corresponding set of boundaries, $\boldpi$. In contrast to soft classification losses which satisfy consistency for all $\boldpi\subseteq(0,1)$, minimally consistent surrogates are well-tuned for a given $\ell^Y_{\boldpi}$, and may provide improved stratification of $\calX$ to the sets, $\intervals$.

\subsection{Piecewise Linear Surrogates}
\begin{figure}[t]
	\centering
	\includegraphics[width=\textwidth]{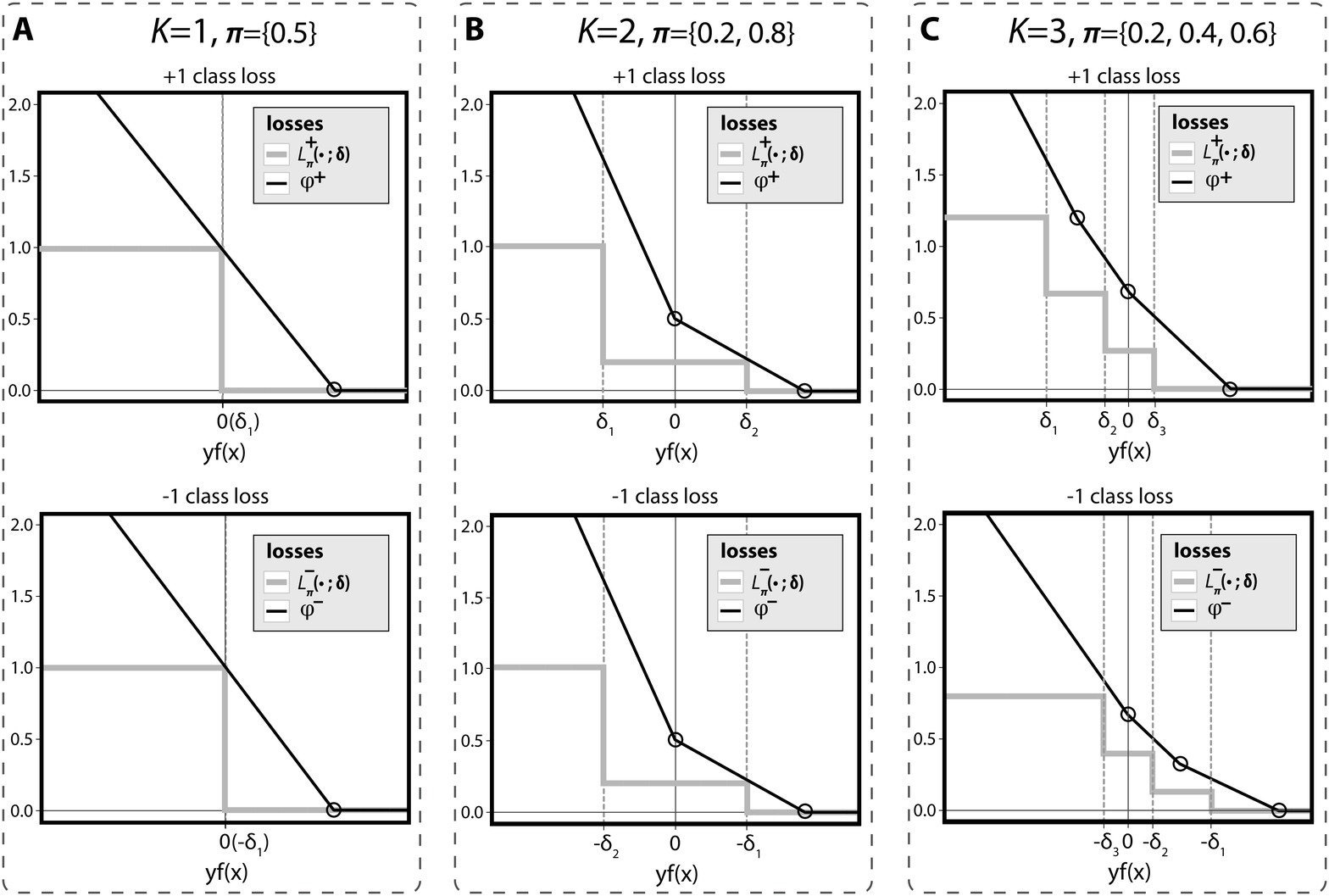}%
	\caption[Examples of piecewise linear surrogate losses.]{Examples of piecewise linear surrogates are shown along with the corresponding theoretical loss, $L_{\boldpi,\bolddelta}$ for (A) $\boldpi = \{0.5\}$ (hard classification), (B) $\boldpi = \{0.2, 0.8\}$ (rejection-option classification), and (C) $\boldpi = \{0.2, 0.4, 0.6\}$.}
	\label{fig:pieceSurrogates}
\end{figure}
Throughout, we use $\varsurrplus$ and $\varsurrminus$ to denote piecewise linear surrogates. To build intuition, in the columns of Figure~\ref{fig:pieceSurrogates}, we show examples of $\varsurrogate^Y$ for $K=1,2,3$, corresponding to hard classification, rejection-option classification, and the new problem shown in Figure~\ref{fig:cutEg}B. Circles are used to highlight the hinges, i.e. non-differentiable points, along the piecewise linear loss functions. The corresponding margin-based theoretical loss, $L^Y_{\boldpi}(\cdot\ ;\bolddelta)$, is also shown in each panel using appropriately chosen $\bolddelta$. First, note that the losses in Panels A and B of Figure~\ref{fig:pieceSurrogates} correspond to the standard SVM hinge loss and generalized hinge loss of \cite{Bartlett2008}, respectively. Consider the new surrogate losses in Figure~\ref{fig:pieceSurrogates}C for boundaries at $\boldpi = \{0.2, 0.4, 0.6\}$. Note that $\varsurrplus$ and $\varsurrminus$ each consist of $K$ non-zero linear segments. Furthermore, each linear segment only spans a single $\delta_k$ or $-\delta_k$ for $\varsurrplus$ and $\varsurrminus$, respectively. We will refer to these pairs of linear segments as the $\pi_k$-consistent segments. This construction allows for the consistency of the surrogate loss for each $\pi_k\in\boldpi$ to be controlled separately by the $K$ pairs of $\pi_k$-consistent segments along the piecewise linear loss.

We formulate our collection of piecewise linear surrogate losses as the maximum of the $K$ linear segments and 0. Consider first the surrogate loss for positive observations, $\varsurrplus$. Using $A^+(\pi),\ B^+(\pi)$ to denote the intercept and slope of the $\pi_k$-consistent segment, we express the piecewise linear loss as: 
\begin{align}
	\varsurrplus(z) = \max\{0,\ A^+(\pi_1) + B^+(\pi_1)\cdot z,\ \ldots,\ A^+(\pi_K) + B^+(\pi_K)\cdot z\}. \label{eq:varsurrplus}
\end{align}
We similarly use $A^-(\pi)$ and $B^-(\pi)$ to denote the intercept and slope of the $\pi_k$-consistent segment for the negative class loss such that:
\begin{align}
	\varsurrminus(z) = \max\{0,\ A^-(\pi_1) + B^-(\pi_1)\cdot z,\ \ldots,\ A^-(\pi_K) + B^-(\pi_K)\cdot z\}. \label{eq:varsurrminus}
\end{align}
By construction, the resulting piecewise linear losses are non-negative, convex and continuous. While (\ref{eq:varsurrplus}) and (\ref{eq:varsurrminus}) define a general class of piecewise linear losses, we focus on a subset of {minimally consistent piecewise linear surrogates}. In the following theorem, we provide a set of sufficient conditions for a piecewise linear loss to be minimally consistent for a specified $\boldpi$.
\begin{theorem}\label{thm:piecewise}
Let $H^Y(\pi,\pi^\prime) = (A^Y(\pi) - A^Y(\pi^\prime)) \big/ (B^Y(\pi^\prime) - B^Y(\pi))$ denote the location of the hinges along the respective loss functions between consecutive boundaries, $\pi < \pi^\prime$. Then, $\varsurrogate^Y$ is a {minimally consistent piecewise linear surrogate} for $\boldpi$ if the intercept and slope parameters, $A^Y(\pi)$ and $B^Y(\pi)$, satisfy the following conditions:
\begin{enumerate}
	\item[(C1)] $B^+(\pi)$ is non-decreasing, and $B^-(\pi)$ is non-increasing in $\pi$.
	\item[(C2)] The hinge points are such that:
		\begin{align*}
			-H^-(\pi_{k-1},\pi_{k}) &= H^+(\pi_{k-1},\pi_{k}) \ \ \ \ \ \text{for $k=2, \ldots, K$} , \\ 
			H^+(\pi_{k-1},\pi_{k}) &< H^+(\pi_{k},\pi_{k+1}) \ \ \ \ \ \text{for $k=2, \ldots, K-1$} , \\ 
			{A^-(\pi_1)}/{B^-(\pi_1)} &< H^+(\pi_1, \pi_{2}), \\
			{A^+(\pi_K)}/{B^+(\pi_K)} &> H^-(\pi_{K-1},\pi_{K}).
		\end{align*}
	\item[(C3)] $B^+(\pi), B^-(\pi)$ satisfy:
		\begin{align*}
			\frac{B^-(\pi_k)}{B^-(\pi_k)+B^+(\pi_k)} = \pi_k \ \ \ \ \text{for $1\leq k\leq K$}.
		\end{align*}
\end{enumerate}
\end{theorem}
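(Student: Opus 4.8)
The plan is to reduce minimal consistency to the derivative characterization of Theorem~\ref{thm:consistency} and then read off the required $\bolddelta$ directly from the geometry of the two piecewise linear envelopes (\ref{eq:varsurrplus})--(\ref{eq:varsurrminus}). Because $\varsurrplus$ is a maximum of the $K$ affine pieces $A^+(\pi_k)+B^+(\pi_k)z$ together with $0$, condition (C1) fixes the order of the slopes, so as $z$ increases the active piece of $\varsurrplus$ passes through segments $1,2,\dots,K$ and then the flat piece $0$; symmetrically, the active piece of $\varsurrminus$ passes through $K,K-1,\dots,1$ and then $0$ (in particular every active slope is negative, consistent with $\pi_k\in(0,1)$ in (C3)). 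In the interior of the region where the $k$th segment is active one has $\varphi^{+\prime}(z)=B^+(\pi_k)$, and at the reflected point $\varphi^{-\prime}(z)=B^-(\pi_k)$. I would first verify that, under (C1)--(C2), every segment genuinely appears on each upper envelope, i.e. no piece is dominated and skipped, so that the active region of segment $k$ is exactly the interval bounded by the consecutive hinge abscissae $H^+(\pi_{k-1},\pi_k)$ and $H^+(\pi_k,\pi_{k+1})$, with the two extreme segments instead terminating at the zero crossings $-A^+(\pi_K)/B^+(\pi_K)$ and $-A^-(\pi_1)/B^-(\pi_1)$.

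The core of the argument is the reflection identity in (C2): the equalities $-H^-(\pi_{k-1},\pi_k)=H^+(\pi_{k-1},\pi_k)$ say precisely that the map $z\mapsto-z$ carries the partition of the $z$-axis induced by the hinges of $\varsurrplus$ onto the partition induced by the hinges of $\varsurrminus$, segment by segment. I would make this precise by showing that $\delta$ lies in the $k$th active interval of $\varsurrplus$ if and only if $-\delta$ lies in the $k$th active interval of $\varsurrminus$. Granting this, choosing any $\delta_k$ in the (nonempty) $k$th interval yields $\varphi^{+\prime}(\delta_k)=B^+(\pi_k)<0$ and $\varphi^{-\prime}(-\delta_k)=B^-(\pi_k)<0$, and then (C3) gives $\varphi^{-\prime}(-\delta_k)/(\varphi^{-\prime}(-\delta_k)+\varphi^{+\prime}(\delta_k))=\pi_k$. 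This is exactly condition (\ref{eq:thmconsistency}) of Theorem~\ref{thm:consistency} for every $k$, so $\varsurrogate^Y$ is consistent for $\boldpi$. Nonemptiness of the intervals is guaranteed by (C2): the strict ordering $H^+(\pi_{k-1},\pi_k)<H^+(\pi_k,\pi_{k+1})$ handles the interior segments, while the two inequalities involving $A^-(\pi_1)/B^-(\pi_1)$ and $A^+(\pi_K)/B^+(\pi_K)$ ensure the two extreme segments retain a nonempty active interval before their loss reaches $0$.

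For minimality I would argue the converse direction: the only values $\pi\in(0,1)$ for which (\ref{eq:thmconsistency}) can hold are the $\pi_k$ themselves. Indeed, if $\delta\in\bbR$ is such that both $\varphi^{+\prime}(\delta)$ and $\varphi^{-\prime}(-\delta)$ exist and are strictly negative, then $\delta$ sits in the interior of some active segment $i$ of $\varsurrplus$; by the reflection identity established above, $-\delta$ then sits in the interior of the $i$th active segment of $\varsurrminus$ (the only alternative is that $-\delta$ falls in the flat region where $\varphi^{-\prime}=0$, which is excluded). Hence $\varphi^{+\prime}(\delta)=B^+(\pi_i)$ and $\varphi^{-\prime}(-\delta)=B^-(\pi_i)$, so by (C3) the ratio in (\ref{eq:thmconsistency}) equals $\pi_i\in\boldpi$. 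Thus no $\boldpi'\supsetneq\boldpi$ can satisfy Theorem~\ref{thm:consistency}, which is exactly the statement that $\varsurrogate^Y$ is minimally consistent for $\boldpi$.

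I expect the main obstacle to be the bookkeeping in the first paragraph: rigorously showing that (C1)--(C2) force each of the $K$ affine pieces onto the upper envelope with the claimed interval endpoints, and that the reflection $z\mapsto-z$ matches the $\varsurrplus$ and $\varsurrminus$ partitions exactly, with correct handling of the two extreme segments and their zero crossings. Once this geometric alignment is in hand, both the consistency direction (via Theorem~\ref{thm:consistency} and (C3)) and the minimality direction follow with little additional work.
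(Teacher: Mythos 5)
Your proposal is correct and takes essentially the same route as the paper's own proof: both reduce minimal consistency to the derivative characterization of Theorem~\ref{thm:consistency}, use (C1)--(C2) to show that the hinge partition of $\varsurrplus$ and the reflected partition of $z\mapsto\varsurrminus(-z)$ coincide segment by segment, so that the only attainable pairs of strictly negative derivatives are $(B^+(\pi_k),B^-(\pi_k))$, and then invoke (C3) to conclude that the ratio in (\ref{eq:thmconsistency}) takes exactly the values $\pi_1,\ldots,\pi_K$ and no others. Your write-up simply makes explicit the bookkeeping (segment ordering, nonemptiness of the active intervals, and the reflection matching) that the paper compresses into its table of possible derivative pairs.
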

Conditions (C1) and (C2) guarantee that the linear segments are well-ordered and non-degenerate along $Yf(\bX)$ with appropriately aligned hinge points. Condition (C3) guarantees the consistency of $\varsurrogate^Y$ to the corresponding $\ell_{\boldpi}$. Most importantly, by aligning the hinge points, $-H^-(\pi_{k-1},\pi_{k})$ and $H^+(\pi_{k-1},\pi_{k})$, we ensure that there does not exist a $\delta\in\bbR$ such that (\ref{eq:thmconsistency}) is satisfied for any $\pi\not\in\boldpi$. Next, we present an approach to obtaining $A^Y(\pi)$ and $B^Y(\pi)$ which satisfy the conditions of Theorem~\ref{thm:piecewise} using the logistic loss as an example.

\subsection{Logistic Derived Surrogates}
\label{losses:logistic}
\begin{figure}[t]
	\centering
	\includegraphics[width=0.8\textwidth]{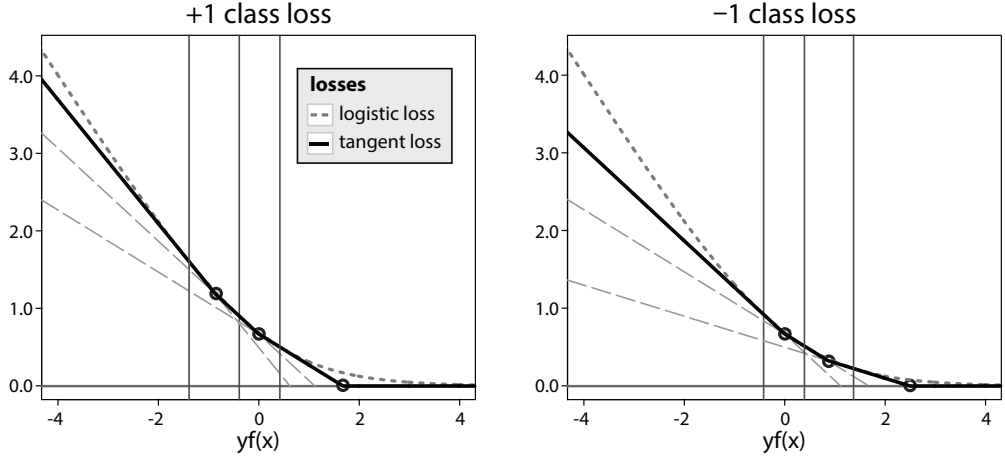}
	\caption[Examples of piecewise linear loss functions.]{A pair of piecewise linear loss functions, $\varsurrogate^Y$, obtained from the logistic loss for $\boldpi = \{0.2,0.4,0.6\}$ is shown along with the logistic loss (dotted lines), and the set of tangent lines used to derive $A^Y(\pi)$ and $B^Y(\pi)$ (dashed lines).}
	\label{fig:egPiecewise}
\end{figure}
In this section, we propose to construct piecewise linear losses by choosing $A^Y(\pi_k) + B^Y(\pi_k)\cdot z$ to be the tangent lines to the logistic loss at $Y\cdot\log(\tfrac{\pi_k}{1-\pi_k})$. A similar approach was used by \cite{Grandvalet2008} to construct a piecewise linear loss for the rejection-option problem. The following Proposition states that piecewise linear loss functions constructed using this approach satisfy the conditions of Theorem~\ref{thm:piecewise} for any choice of $K$ and $\boldpi$.
\begin{proposition}\label{prop:logisticpiece}
	For fixed $K$ and $\boldpi$, let $\varsurrogate^Y$ be the piecewise linear loss constructed from the tangent lines to the logistic loss such that $A^Y(\pi)$ and $B^Y(\pi)$ are defined as:
	\begin{align*}
		A^+(\pi) &= A^-(1-\pi) = -\pi\log(\pi) - (1-\pi)\log(1-\pi), \\
		B^+(\pi) &= B^-(1-\pi) = -(1-\pi).
	\end{align*} 
	Then, $\varsurrogate^Y$ is a minimally consistent piecewise linear surrogate for $\boldpi$ satisfying the conditions of Theorem~\ref{thm:piecewise}.
\end{proposition}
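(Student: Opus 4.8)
The plan is to take the closed forms for $A^Y(\pi)$ and $B^Y(\pi)$ in the statement, identify their structure, and verify conditions (C1)--(C3) of Theorem~\ref{thm:piecewise} one group at a time; minimal consistency is then immediate from that theorem. First I would confirm these really are the tangent-line parameters. For the logistic loss $z\mapsto\log(1+e^{-z})$, with derivative $-1/(1+e^{z})$, evaluating at $z=\log\!\big(\pi/(1-\pi)\big)$ gives slope $-(1-\pi)$ and value $-\log\pi$, so the tangent line has intercept $-\pi\log\pi-(1-\pi)\log(1-\pi)$; the negative-class tangent at $-\log\!\big(\pi/(1-\pi)\big)$ is computed the same way. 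Writing $h(\pi)=-\pi\log\pi-(1-\pi)\log(1-\pi)$ for the binary entropy, this identifies $A^+(\pi)=A^-(\pi)=h(\pi)$, $B^+(\pi)=-(1-\pi)$ and $B^-(\pi)=-\pi$. The whole argument rests on two structural facts: the two intercepts coincide and equal the \emph{symmetric, strictly concave} function $h$, while the two slopes are \emph{affine} in $\pi$.

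With these forms, (C1) and (C3) are immediate. Since $B^+(\pi)=\pi-1$ is increasing and $B^-(\pi)=-\pi$ is decreasing, (C1) holds; and $B^-(\pi_k)/\big(B^-(\pi_k)+B^+(\pi_k)\big)=-\pi_k/\big(-\pi_k-(1-\pi_k)\big)=\pi_k$, which is exactly (C3).

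The content sits in (C2). For the alignment $-H^-(\pi_{k-1},\pi_k)=H^+(\pi_{k-1},\pi_k)$, I would use both structural facts at once: because $A^+\equiv A^-$ and $B^+(\pi')-B^+(\pi)=-(B^-(\pi')-B^-(\pi))=\pi'-\pi$, the definition of $H^Y$ yields $H^+(\pi,\pi')=\big(h(\pi)-h(\pi')\big)/(\pi'-\pi)$ and $H^-(\pi,\pi')=-H^+(\pi,\pi')$, so the alignment holds for every $k$ with no further work. For the ordering $H^+(\pi_{k-1},\pi_k)<H^+(\pi_k,\pi_{k+1})$, I would read $H^+(\pi,\pi')$ as the negated secant slope of $h$ over $[\pi,\pi']$; strict concavity of $h$ (indeed $h''(\pi)=-1/(\pi(1-\pi))<0$) forces the secant slopes to strictly decrease as the interval shifts right, hence the negated secants to strictly increase. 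The two boundary inequalities are the only genuine computation: substituting $A^-(\pi_1)/B^-(\pi_1)=-h(\pi_1)/\pi_1$ and clearing the positive denominators, the first collapses to $h(\pi_1)/\pi_1>h(\pi_2)/\pi_2$, i.e. to $h(\pi)/\pi$ being strictly decreasing, and the last collapses symmetrically to $h(\pi)/(1-\pi)$ being strictly increasing. Each is closed by a one-line derivative: $\tfrac{d}{d\pi}\big(h(\pi)/\pi\big)=\log(1-\pi)/\pi^{2}<0$ and $\tfrac{d}{d\pi}\big(h(\pi)/(1-\pi)\big)=-\log(\pi)/(1-\pi)^{2}>0$ on $(0,1)$.

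I expect the main obstacle to be bookkeeping rather than depth. The delicate point is the sign-tracking in the two boundary inequalities of (C2): the ratios $A^Y/B^Y$ and the hinge locations $H^Y$ carry opposite signs, so cross-multiplying requires keeping careful track of the sign of each denominator before the reductions to the two monotonicity statements become visible. Once those reductions are in hand the derivative computations finish the proof, and since every reduction holds for an arbitrary admissible pair $\pi_{k-1}<\pi_k$, the verification is valid for all $K$ and $\boldpi$, as claimed.
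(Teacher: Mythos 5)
Your proof is correct and follows essentially the same route as the paper's: (C1) and (C3) by direct computation, and (C2) from strict concavity of the binary entropy $h$. Your monotonicity-of-ratios device is the paper's concavity argument in disguise: since $h(0)=h(1)=0$, the quantities $h(\pi)/\pi$ and $-h(\pi)/(1-\pi)$ are precisely the secant slopes of $h$ anchored at the endpoints $0$ and $1$, and their strict monotonicity is what the paper derives by comparing secant slopes with $(A^+)'$ and $(A^-)'$; your closed-form derivatives $\log(1-\pi)/\pi^{2}$ and $-\log(\pi)/(1-\pi)^{2}$ simply make that comparison quantitative.

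One caveat, which applies equally to the paper's own proof. Measured against the literal statement of (C2), your reduction of the \emph{last} boundary inequality carries a flipped sign: with $H^-(\pi_{K-1},\pi_K) = \big(h(\pi_K)-h(\pi_{K-1})\big)/(\pi_K-\pi_{K-1})$ and $A^+(\pi_K)/B^+(\pi_K) = -h(\pi_K)/(1-\pi_K)$, cross-multiplying the printed inequality $A^+(\pi_K)/B^+(\pi_K) > H^-(\pi_{K-1},\pi_K)$ yields $h(\pi)/(1-\pi)$ \emph{decreasing}, not increasing --- and that is false; indeed the printed inequality itself fails for the logistic-derived parameters (for $\boldpi=\{0.2,0.4,0.6\}$ it reads $-1.68 > 0$). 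What is true, and what your derivative computation proves, is the reversed inequality, equivalently $-A^+(\pi_K)/B^+(\pi_K) > H^+(\pi_{K-1},\pi_K)$: the zero-crossing of $\varsurrplus$ lies to the right of its last interior hinge, which is the condition the construction actually needs. So the fourth inequality of (C2) as printed is a sign typo, and the paper's proof of this proposition silently makes the same correction you do (its opening identity $A^+(\pi_K)/B^+(\pi_K) = -\big(A^+(\pi_K)-\lim_{\pi\rightarrow1}A^+(\pi)\big)/(\pi_K-1)$ is itself off by a sign, which compensates, and its final expression labeled $H^-(\pi_{K-1},\pi_K)$ is actually $H^+(\pi_{K-1},\pi_K)$). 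Your argument therefore establishes exactly what the paper's does; just be aware that neither verifies (C2) as literally printed, and your stated reduction of that inequality to ``$h(\pi)/(1-\pi)$ increasing'' is correct only after the typo is repaired.
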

In Figure~\ref{fig:egPiecewise}, we illustrate the logistic-derived piecewise linear loss for $\boldpi = \{0.2,0.4,0.6\}$. The logistic loss is shown by dotted lines, with the piecewise linear surrogate functions for the positive and negative classes shown in solid black. Thin vertical lines are used to denote the tangent points where the losses are equal, and thin dashed lines give the tangent lines to the logistic loss corresponding to $A^Y(\pi_k)+B^Y(\pi_k)\cdot yf(\bx)$ for $\pi_k \in \boldpi$. Additionally, the non-differentiable hinge points are highlighted by circles. While the loss functions appear roughly equivalent within the region of the tangent points, the difference is non-negligible above and below these bounds. Notably, the piecewise linear losses diverge slower as $yf(\bx)$ tends to $-\infty$, suggesting the losses may be more robust to outliers \cite{Liu2011a}. Additionally, the logistic derived loss functions provide a natural spectrum for comparing the impact of targeting different partitions, $\intervals$, on the same dataset. We explore these issues using simulation in Section~\ref{simulations}.

\section{Statistical Properties}
\label{statprops}
%
We next derive statistical properties for surrogate loss functions to the theoretical loss, $\ell^Y_{\boldpi}$. In Subsection~\ref{statprops:risk}, we first show that the excess risk with respect to $\ell^Y_{\boldpi}$ may be bounded by the excess risk of a consistent surrogate loss. Then, in Subsection~\ref{statprops:rates}, we use these risk bounds to derive convergence rates for the empirical minimizer of a surrogate loss to the Bayes optimal rule. Our results generalize and extend those derived for the particular case of rejection-option classification in \cite{Herbei2006,Bartlett2008,Yuan2010a}, to an arbitrary number of boundaries.

\subsection{Excess Risk Bounds}
\label{statprops:risk}
For a rule $g\in\mathcal{G}_{\boldpi}$, we define the $\ell^Y_{\boldpi}$-risk of $g$ to be the expected loss of the rule, denoted by $R(g) = \mathbb{E}_{Y,\bX} \{ \ell_{\boldpi}^Y(g(\bX))\}$. In statistical machine learning, a natural measure of the performance of a rule is its excess risk: $\Delta R(g) = R(g) - R(\bayesint)$, where $R(\bayesint) = \min_{g\in\mathcal{G}_{\boldpi}} R(g)$ such that $\Delta R(g) \geq 0$. In this section, we derive convergence rates on $\Delta R(g)$ for rules obtained using consistent surrogate loss functions. For a surrogate loss $\surrogate^Y$, we similarly define the $\phi$-risk and excess $\phi$-risk over the class of margin functions, $\mathcal{F}$, to be $Q(f) = \mathbb{E}_{Y,\bX} \{\phi^Y(Yf(\bX))\}$ and $\Delta Q(f) = Q(f) - Q(\bayessurr)$. To obtain convergence rates on $\Delta R(g)$, we first show that under certain conditions, the excess $\phi$-risk of a margin function $f$ can be used to bound the corresponding excess $\ell^Y_{\boldpi}$-risk of $g=C(f; \bolddelta)$. Using this bound, we then derive rates of convergence on $\Delta R(g)$ through rates of convergence on $\Delta Q(g)$. The following additional notation is used to denote excess conditional $\ell^Y_{\boldpi}$-risk and excess conditional $\surrogate$-risk:
\begin{align*}
	R_p(g) &:= \mathbb{E}_{Y|\bX} \{ \ell_{\boldpi}^Y(g(\bX)) \}, &
	Q_p(f) &:= \mathbb{E}_{Y|\bX} \{ \surrogate^Y(Yf(\bX)) \}, \\
		\Delta R_p(g) &:= R_p(g) - R_p(\bayesint), &
			\Delta Q_p(f) &:= Q_p(f) - Q_p(\bayessurr).
\end{align*}

In the following results, we provide conditions under which there exists some function, $\rho:\bbR\rightarrow\bbR$, such that $\rho(\Delta Q(f))$ can be used to bound the corresponding $\Delta R(C(f;\bolddelta))$. 
\begin{theorem} \label{thm:bound}
	Let $\surrogate^Y$ be a consistent surrogate loss for $\ell_{\pi}^Y$ satisfying the conditions for Theorem~\ref{thm:consistency} at $\bolddelta$. Furthermore, suppose there exist constants $C>0$ and $s\geq1$ such that for all $k$,
	\begin{align}
		|p(\bX) - \pi_k|^s &\leq C^s \Delta Q_p (\delta_k). \label{eq:bndcnd}
		\shortintertext{Then,}
		\Delta R\big(C(f; \bolddelta)\big) &\leq C[2\cdot\Delta Q(f)]^{1/s}. \notag
	\end{align}
\end{theorem}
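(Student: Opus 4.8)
The plan is to reduce the bound on the global excess $\ell^Y_{\boldpi}$-risk to a pointwise (conditional) statement, then integrate. Since $\Delta R(C(f;\bolddelta)) = \bbE_{\bX}\{\Delta R_p(C(f;\bolddelta))\}$ and $\Delta Q(f) = \bbE_{\bX}\{\Delta Q_p(f)\}$, the main work is to control the conditional excess risk $\Delta R_p(C(f;\bolddelta))$ by a suitable power of the conditional excess $\phi$-risk at the relevant $\delta_k$'s, and then assemble the global bound via Jensen's inequality. First I would compute $\Delta R_p$ explicitly. By Theorem~\ref{thm:bayesrule} the Bayes rule places $\bX$ in $\omega_k$ when $p(\bX)\in\omega_k$, and from the definition~(\ref{eq:general01}) of the theoretical loss as an average of $K$ weighted $0$--$1$ losses, the conditional excess risk decomposes as a sum over those boundaries $\pi_k$ that the rule $C(f;\bolddelta)$ and the Bayes rule disagree about. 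A short computation gives, for each such misclassified boundary, a contribution of the form $\tfrac{2}{K}|p(\bX)-\pi_k|$; hence
\begin{align*}
	\Delta R_p(C(f;\bolddelta)) = \frac{2}{K}\sum_{k\in D(f,\bX)} |p(\bX)-\pi_k|,
\end{align*}
where $D(f,\bX)$ is the set of indices $k$ at which $f(\bX)$ falls on the wrong side of $\delta_k$ relative to where $p(\bX)$ falls relative to $\pi_k$.

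Next I would control each term using hypothesis~(\ref{eq:bndcnd}). For $k\in D(f,\bX)$ the margin function $f(\bX)$ lies on the wrong side of $\delta_k$, so the conditional $\phi$-risk $Q_p$ at the present $f(\bX)$ is at least as large as its value would be at the boundary, and in particular $\Delta Q_p(\delta_k) \leq \Delta Q_p(f)$ for each disagreeing $k$ (this is where convexity of $\phi^Y$ and the placement of the optimal margin relative to $\delta_k$ enter: crossing $\delta_k$ in the wrong direction cannot decrease the excess conditional $\phi$-risk). Combining this with~(\ref{eq:bndcnd}) yields $|p(\bX)-\pi_k| \leq C\,\Delta Q_p(f)^{1/s}$ termwise, so summing over at most $K$ indices and using the factor $\tfrac{2}{K}$ gives the clean pointwise bound
\begin{align*}
	\Delta R_p(C(f;\bolddelta)) \leq 2\,C\,\Delta Q_p(f)^{1/s}.
\end{align*}

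Finally I would pass from the conditional to the unconditional bound. Taking $\bbE_{\bX}$ of the pointwise inequality and applying Jensen's inequality to the concave map $t\mapsto t^{1/s}$ (valid since $s\geq 1$) gives $\bbE_{\bX}\{\Delta Q_p(f)^{1/s}\} \leq (\bbE_{\bX}\{\Delta Q_p(f)\})^{1/s} = \Delta Q(f)^{1/s}$, whence
\begin{align*}
	\Delta R(C(f;\bolddelta)) \leq 2\,C\,\Delta Q(f)^{1/s} = C\,[2\,\Delta Q(f)]^{1/s}\cdot 2^{1-1/s},
\end{align*}
so up to the precise handling of constants one recovers the stated $C[2\,\Delta Q(f)]^{1/s}$; I would reconcile the $2$ versus $2^{1/s}$ factor by tracking whether the $\tfrac{2}{K}$ normalization is absorbed before or after Jensen. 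The main obstacle I anticipate is the termwise comparison $\Delta Q_p(\delta_k)\le \Delta Q_p(f)$ for disagreeing boundaries: establishing this rigorously requires using the consistency characterization of Theorem~\ref{thm:consistency} (so that $\delta_k$ is exactly the threshold at which $\phi^{Y}$ separates $p>\pi_k$ from $p<\pi_k$) together with unimodality/convexity of $Q_p(\cdot)$ in $f(\bX)$, ensuring that a wrong-side value of $f(\bX)$ incurs at least the excess $\phi$-risk charged at the crossing point $\delta_k$. The book-keeping over which subset $D(f,\bX)$ of boundaries is violated, and verifying that the per-boundary $\phi$-excess is simultaneously lower-bounded by a single $\Delta Q_p(f)$, is the delicate step; the rest is a routine decomposition plus Jensen.
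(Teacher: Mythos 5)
Your proposal follows essentially the same route as the paper's own proof: the same pointwise identity $\Delta R_p\big(C(f;\bolddelta)\big) = \tfrac{2}{K}\sum_{k\in\mathcal{K}}|p(\bX)-\pi_k|$ over the disagreement set $\mathcal{K}$ (your $D(f,\bX)$), the same key comparison $\Delta Q_p(\delta_k)\le \Delta Q_p(f)$ for each disagreeing boundary, and then integration over $\bX$ with Jensen. The paper proves the key comparison exactly as you anticipate: convexity of $Q_p$ (via subgradient inequalities of $\surrplus$, $\surrminus$ at $\delta_k$) combined with the consistency identity (\ref{eq:thmconsistency}); your version of the reasoning (consistency places the pointwise minimizer $\bayessurr(\bx)$ on the correct side of $\delta_k$, and a scalar convex function is monotone on the side away from its minimizer) is sound and just as rigorous once written out. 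The only substantive difference is bookkeeping: you apply hypothesis (\ref{eq:bndcnd}) termwise and then sum, which yields the pointwise bound $\Delta R_p \le 2C\,\Delta Q_p(f)^{1/s}$, whereas the paper first raises the whole sum to the $s$-th power and invokes $\big(\tfrac{2}{K}\sum_{\mathcal{K}} a_k\big)^s \le \tfrac{2}{K}\sum_{\mathcal{K}} a_k^s$ to land on the stated constant $C\cdot 2^{1/s}$.

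You should know, however, that your closing hope of ``reconciling the $2$ versus $2^{1/s}$ factor'' cannot be realized, and this is a defect of the paper rather than of your argument. The inequality the paper uses is false for $s>1$: already with $K=1$ and $\mathcal{K}=\{1\}$ it reads $(2a)^s \le 2a^s$, which fails for every $a>0$. In fact the stated constant itself cannot be correct. Take the squared hinge loss $\surrplus(z)=\surrminus(z)=(1-z)_+^2$ with $K=1$, $\pi_1=\tfrac12$, $\delta_1=0$; then $\Delta Q_p(0)=(2p-1)^2$, so (\ref{eq:bndcnd}) holds with $s=2$, $C=\tfrac12$. With $p(\bX)\equiv\tfrac12+\epsilon$ and $f\equiv -t$ for $0<t\ll\epsilon$ one computes
\begin{align*}
	\Delta R\big(C(f;\bolddelta)\big) = 2\epsilon,
	\qquad
	\Delta Q(f) = (t+2\epsilon)^2,
	\qquad
	C[2\,\Delta Q(f)]^{1/2} = \tfrac{1}{\sqrt{2}}(t+2\epsilon) \;\longrightarrow\; \sqrt{2}\,\epsilon < 2\epsilon,
\end{align*}
violating the theorem as stated, while your bound $2C\,\Delta Q(f)^{1/2} = t+2\epsilon$ holds (with equality in the limit, showing $2C$ is tight). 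So your proof is the correct one, for the corrected constant $2C\,\Delta Q(f)^{1/s}$, equivalently $C[2^s\Delta Q(f)]^{1/s}$; the only piece left for you to write out in full is the deferred comparison $\Delta Q_p(\delta_k)\le\Delta Q_p(f)$ for $k\in\mathcal{K}$, which goes through exactly as you sketched.
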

The above bound may be tightened as in \cite{Yuan2010a} by the additional assumption:
\begin{align}
	\mathbb{P}\{|p(\bX) - \pi_k|\leq t\} \leq At^\alpha, \ \ \ k=1, \ldots, K, \label{eq:margincondition}
\end{align}
for some $\alpha\geq 0$, $A \geq 1$. The bound (\ref{eq:margincondition}) generalizes the margin condition introduced by \cite{Mammen1999} and used in \cite{Herbei2006}.
\begin{theorem} \label{thm:bound2}
	In addition to the assumptions of Theorem~\ref{thm:bound}, assume that there exists $\alpha \geq 0$ and $A\geq 1$, such that (\ref{eq:margincondition}) holds for $t \in [0, \min_k \{ \pi_k-\pi_{k-1}, \pi_{k+1}-\pi_k \})$. Then, for some $D$ depending on $A, \alpha$,
	\[\Delta R\big(C(f; \bolddelta)\big) \leq D\cdot \Delta Q(f)^{1/(s+\beta-\beta s)}\]
	where $\beta=\alpha/(1+\alpha)$.
\end{theorem}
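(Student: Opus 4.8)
The plan is to carry out a Tsybakov-type refinement of Theorem~\ref{thm:bound}, working entirely at the level of the \emph{conditional} excess risks and only integrating against the margin condition~(\ref{eq:margincondition}) at the very end. I would first record the pointwise structure that is already implicit in the proof of Theorem~\ref{thm:bound}. Writing $g = C(f;\bolddelta)$ and decomposing $\ell^Y_{\boldpi}$ into its $K$ weighted components, the conditional excess $\ell^Y_{\boldpi}$-risk is
\begin{align*}
	\Delta R_p(g) = \frac{2}{K}\sum_{k=1}^K |p(\bX)-\pi_k|\cdot\textbf{I}\{M_k\},
\end{align*}
where $M_k$ is the event that $g$ and $\bayesint$ disagree at the $\pi_k$-boundary, equivalently that $f$ lies on the opposite side of $\delta_k$ from $\bayessurr$. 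The per-threshold excess equals exactly $|p(\bX)-\pi_k|$, since the two candidate weighted risks $p(1-\pi_k)$ and $(1-p)\pi_k$ differ precisely by $p(\bX)-\pi_k$.

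The second ingredient is a conditional surrogate comparison. Because $Q_p(\cdot)$ is convex with minimizer $\bayessurr$, and by consistency of $\surrogate^Y$ (Theorem~\ref{thm:consistency}) one has $\bayessurr > \delta_k \iff p(\bX) > \pi_k$, on the event $M_k$ the point $\delta_k$ lies between $f$ and $\bayessurr$. Monotonicity of $Q_p$ on either side of its minimizer then gives $\Delta Q_p(\delta_k)\le\Delta Q_p(f)$, and combining this with hypothesis~(\ref{eq:bndcnd}) yields the key pointwise inequality, valid on $M_k$,
\begin{align*}
	|p(\bX)-\pi_k|^s \le C^s\,\Delta Q_p(\delta_k) \le C^s\,\Delta Q_p(f).
\end{align*}

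With these facts in hand I would run the standard noise-zone split. Fixing $t$ in the admissible range $[0,\min_k\{\pi_k-\pi_{k-1},\pi_{k+1}-\pi_k\})$, I split each summand according to $\{|p(\bX)-\pi_k|\le t\}$ versus $\{|p(\bX)-\pi_k|>t\}$. On the small-margin part I bound $|p(\bX)-\pi_k|\,\textbf{I}\{M_k\}\le t\,\textbf{I}\{|p(\bX)-\pi_k|\le t\}$; on the large-margin part I write $|p(\bX)-\pi_k| = |p(\bX)-\pi_k|^{1-s}\,|p(\bX)-\pi_k|^s \le t^{1-s}C^s\,\Delta Q_p(f)$, where $s\ge1$ makes $|\cdot|^{1-s}\le t^{1-s}$ on that part. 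Summing over $k$ with $\sum_k\textbf{I}\{M_k\}\le K$, then taking $\bbE_{\bX}$ and applying the margin condition in the form $\sum_k\mathbb{P}\{|p(\bX)-\pi_k|\le t\}\le KAt^\alpha$, the factors of $K$ cancel the $2/K$ and I obtain
\begin{align*}
	\Delta R(g) \le 2A\,t^{1+\alpha} + 2C^s\,t^{1-s}\,\Delta Q(f).
\end{align*}
Balancing the two terms over $t$ (which for $s>1$ gives $t\propto \Delta Q(f)^{1/(s+\alpha)}$, and for $s=1$ is recovered as $t\downarrow 0$) makes both terms of order $\Delta Q(f)^{(1+\alpha)/(s+\alpha)}$; since $(1+\alpha)/(s+\alpha) = 1/(s+\beta-\beta s)$ for $\beta=\alpha/(1+\alpha)$, this is exactly the claimed rate, with $D$ collecting the constants $A,\alpha,C,s$.

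The main obstacle I anticipate is not the optimization, which is routine once the ingredients are assembled, but the careful conditional bookkeeping: pinning down the exact form of $\Delta R_p$ across all $K$ thresholds simultaneously, verifying via convexity that $M_k$ forces $\Delta Q_p(\delta_k)\le\Delta Q_p(f)$ for each $k$, and ensuring that the optimizing $t$ stays inside the admissible range where~(\ref{eq:margincondition}) is assumed. The regime of large $\Delta Q(f)$, where the balancing $t$ would leave that range, is handled separately using the crude a~priori bound $\Delta R(g)\le 2$ coming from the boundedness of $\ell^Y_{\boldpi}$, which keeps the stated inequality valid after enlarging $D$.
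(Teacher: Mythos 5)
Your proposal is correct, and its final assembly is genuinely different from the paper's. Both proofs rest on the same two ingredients imported from the proof of Theorem~\ref{thm:bound}: the pointwise identity $\Delta R_p(g)=\tfrac{2}{K}\sum_k|p(\bX)-\pi_k|\cdot\textbf{I}\{k\in\mathcal{K}\}$ (your events $M_k$ are exactly the paper's $k\in\mathcal{K}$), and the comparison $|p(\bX)-\pi_k|^s\leq C^s\Delta Q_p(\delta_k)\leq C^s\Delta Q_p(f)$ on disagreement events, which you re-derive via convexity of $Q_p$ and the fact that $\delta_k$ sits between $f$ and $\bayessurr$ on $M_k$. From there the paper argues indirectly: it lower-bounds $\Delta R(g)\geq\tfrac{2t}{K}\big(\mathbb{P}\{f\neq f^*\}-KAt^\alpha\big)$ and optimizes $t$ to control $\mathbb{P}\{f\neq f^*\}/K$ by $\big((2A)^{1/\alpha}\Delta R(g)\big)^\beta$, then upper-bounds $\Delta R(g)\leq 2\epsilon^{1-s}C^s\Delta Q(f)+\tfrac{2\epsilon}{K}\mathbb{P}\{f\neq f^*\}$, and finally closes a self-referential loop with the choice $\epsilon=\Delta R(g)^{1-\beta}$, solving the resulting inequality for $\Delta R(g)$. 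You never introduce $\mathbb{P}\{f\neq f^*\}$ at all: you split each threshold on $\{|p(\bX)-\pi_k|\leq t\}$ versus its complement, apply the margin condition to the first piece and the surrogate comparison to the second, reach $\Delta R(g)\leq 2At^{1+\alpha}+2C^st^{1-s}\Delta Q(f)$, and balance $t$ against $\Delta Q(f)$ directly; the identity $(1+\alpha)/(s+\alpha)=1/(s+\beta-\beta s)$ then gives the stated exponent. Your route buys two concrete advantages. First, the paper's last display divides by $1-2(2A)^{1/\alpha}$, which is negative whenever $A\geq1$, so its constant is ill-defined as written and the step needs a more careful choice of $\epsilon$ to repair; your direct balancing produces only positive constants. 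Second, the paper never verifies that its choice of $t$ lies in the window where (\ref{eq:margincondition}) is assumed to hold, whereas you treat the out-of-window regime explicitly via the crude bound $\Delta R(g)\leq 2$. The only cost is that your $D$ picks up dependence on $C$, $s$, and the threshold spacing $\min_k\{\pi_k-\pi_{k-1},\pi_{k+1}-\pi_k\}$ through that edge case, slightly stretching the statement's ``$D$ depending on $A,\alpha$'' phrasing; this is harmless, since the paper's own $D$ already depends on $C$ and $s$ as well.
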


Note that when $\alpha = 0$, Theorem~\ref{thm:bound2} provides the same bound as Theorem~\ref{thm:bound}. However, as $\alpha \rightarrow \infty$, the bound becomes tighter, with $1/(s+\beta+\beta s)$ limiting to $1$. While neither result depends explicitly on $\boldpi$, Theorem~\ref{thm:bound2} suggests that tighter bounds may be achieved by only targeting $\boldpi$ such that the margin condition is satisfied with large $\alpha$. This reiterates the motivating intuition for our proposed framework, in which we formalize a class of learning problems for settings where more information than hard classification is desired, but soft classification may not be appropriate.

Corresponding values of $C$ and $s$ for the exponential, logistic, squared hinge and DWD losses, are provided in Corollaries 13--16 of \cite{Yuan2010a}. In the following result, we derive values of $C$ and $s$ for our class of minimally consistent piecewise linear surrogates.
\begin{corollary} \label{thm:PLbounds}
	For minimally consistent piecewise linear loss, $\varsurrogate^Y$, defined as in (\ref{eq:varsurrplus}) and (\ref{eq:varsurrminus}) and satisfying the conditions of Theorem~\ref{thm:piecewise} for boundaries $\boldpi$, the inequality (\ref{eq:bndcnd}) is satisfied by $s = 1$ and 
	\begin{align*}
		C = \max \left\{ -\frac{\pi_k}{B^-(\pi_k) \cdot |\delta_k - H_j|} : k=1,\ldots,K;\ j=0,\ldots,K\right\},
	\end{align*}
	where $H_0$ is used to denote $A^-(\pi_1)/B^-(\pi_1)$, $H_j$ to denote $H^+(\pi_j, \pi_{j+1})$ for $j=2,\ldots,K-1$, and $H_K$ to denote $A^+(\pi_K)/B^+(\pi_K)$.
\end{corollary}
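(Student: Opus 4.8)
The plan is to reduce the claim to a one-dimensional analysis of the conditional surrogate risk $Q_p(f) = p\,\varsurrplus(f) + (1-p)\,\varsurrminus(-f)$, viewed as a convex piecewise linear function of $f$ for fixed $p = p(\bX)$, and to read the constant off its slopes. First I would note that the quantity $\Delta Q_p(\delta_k)$ appearing in (\ref{eq:bndcnd}) is $Q_p(\delta_k) - \min_f Q_p(f)$ evaluated at the constant function $f \equiv \delta_k$, so that with $s=1$ the corollary is exactly a statement about how quickly $Q_p$ grows away from its minimizer when sampled at the fixed threshold $\delta_k$.

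The heart of the argument is to pin down the piecewise linear geometry of $Q_p$. Condition (C1) orders the nonzero segments of $\varsurrplus$ and $\varsurrminus$, and condition (C2), through the alignment $-H^-(\pi_{k-1},\pi_k) = H^+(\pi_{k-1},\pi_k)$, forces the interior hinge of the positive-class loss to coincide with that of the negative-class loss. From this I would establish that the kinks of $Q_p$ are precisely the $K+1$ points $H_0 < H_1 < \cdots < H_K$ named in the statement, and that the consistency requirement $\varphi^{+\prime}(\delta_k) = B^+(\pi_k)$, $\varphi^{-\prime}(-\delta_k) = B^-(\pi_k)$ places $\delta_k$ in the open segment $(H_{k-1}, H_k)$. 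On that segment $Q_p$ is linear with slope $p\,B^+(\pi_k) - (1-p)\,B^-(\pi_k)$; substituting condition (C3) in the form $B^+(\pi_k) = B^-(\pi_k)(1-\pi_k)/\pi_k$ collapses this to the clean expression $\tfrac{B^-(\pi_k)}{\pi_k}(p - \pi_k)$. Since $B^-(\pi_k) < 0$, this slope is negative when $p > \pi_k$ and positive when $p < \pi_k$, so convexity pins the minimizer $f^*_p$ at the kink $H_j$ with $p \in \omega_j$ — to the right of $\delta_k$ when $p > \pi_k$, and to its left when $p < \pi_k$.

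Given the geometry, the bound is immediate. When $p > \pi_k$ the function $Q_p$ decreases monotonically from $\delta_k$ out to $f^*_p$, so $\Delta Q_p(\delta_k) = Q_p(\delta_k) - Q_p(f^*_p) \geq Q_p(\delta_k) - Q_p(H_k) = -\tfrac{B^-(\pi_k)}{\pi_k}(p - \pi_k)\,|\delta_k - H_k|$, where I used only the single linear piece joining $\delta_k$ to $H_k$. Rearranging yields $p - \pi_k \leq -\tfrac{\pi_k}{B^-(\pi_k)\,|\delta_k - H_k|}\,\Delta Q_p(\delta_k)$. The case $p < \pi_k$ is symmetric and uses the piece on the other side of $\delta_k$, giving the same expression with $H_{k-1}$ in place of $H_k$. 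Taking the maximum over $k = 1, \ldots, K$ and over $j \in \{k-1, k\}$, and a fortiori over all $j = 0, \ldots, K$, produces exactly the stated constant $C$ with $s = 1$.

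I expect the main obstacle to be the geometric bookkeeping behind the minimizer: verifying that (C1)--(C2) genuinely force the positive- and negative-class hinges to coincide so that $Q_p$ has a clean $(K+1)$-piece structure with kinks at the $H_j$, and correctly treating the two boundary pieces near $H_0$ and $H_K$, where one of $\varsurrplus, \varsurrminus$ has already reached its zero segment and the per-segment slope formula must be checked separately. The degenerate case $p = \pi_k$, where $Q_p$ is flat across all of $(H_{k-1}, H_k)$ and both sides of (\ref{eq:bndcnd}) vanish, is handled trivially, and the closing algebra is routine once the slope-per-segment formula is in hand.
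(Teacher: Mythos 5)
Your proposal is correct, and its core geometry matches the paper's own proof: both identify the conditional risk $Q_p(f) = p\,\varsurrplus(f) + (1-p)\,\varsurrminus(-f)$ as convex piecewise linear with kinks at $H_0 < \cdots < H_K$, place $\delta_k$ in $(H_{k-1},H_k)$, use (C3) to collapse the slope on that segment to $\tfrac{B^-(\pi_k)}{\pi_k}(p-\pi_k)$, and locate the minimizer $f^*_{\varsurrogate}$ at a kink. Where you diverge is the key bounding step, and your version is actually the more rigorous one. The paper writes the chain
\begin{align*}
	\Delta Q_p(\delta_k)
		= p\,B^+(\pi_k)\,(\delta_k - f_{\varsurrogate}^*) - (1-p)\,B^-(\pi_k)\,(\delta_k - f_{\varsurrogate}^*)
		= -\frac{B^-(\pi_k)\,|\delta_k - f_{\varsurrogate}^*|}{\pi_k}\,|p - \pi_k|
\end{align*}
as an exact identity, which implicitly assumes the $\pi_k$-consistent segments of $\varsurrplus$ and $\varsurrminus$ stay active on the entire interval between $\delta_k$ and $f_{\varsurrogate}^*$. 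That holds only when $f_{\varsurrogate}^*$ is one of the two kinks adjacent to $\delta_k$; but since (\ref{eq:bndcnd}) must hold for \emph{every} $k$ at a fixed $p(\bX)$, the minimizer can sit many kinks away from $\delta_k$, and in that case convexity makes the paper's expression an \emph{upper} bound on $\Delta Q_p(\delta_k)$ --- the wrong direction for deriving (\ref{eq:bndcnd}). Your step $\Delta Q_p(\delta_k) \geq Q_p(\delta_k) - Q_p(H_{\mathrm{adj}})$, which uses only the single linear piece containing $\delta_k$ together with $Q_p(f^*_{\varsurrogate}) \leq Q_p(H_{\mathrm{adj}})$, is exactly what repairs this: it gives a valid lower bound with the adjacent-kink distance $|\delta_k - H_{k-1}|$ or $|\delta_k - H_k|$, and since the stated $C$ maximizes over all $j=0,\ldots,K$, your (slightly stronger) per-$k$ constant is dominated by the paper's $C$, so the corollary follows as stated. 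In short, your proof buys airtightness at no cost in generality, while the paper's buys a shorter write-up at the cost of an intermediate equality that is false off the adjacent segments; the conclusion survives in both cases for precisely the reason your argument makes explicit.
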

Consider now a sequence of margin functions, $\{f_n\}_{n\geq1}$. By Theorems~\ref{thm:bound} and~\ref{thm:bound2}, to show that the excess $\ell^Y_{\boldpi}$-risk, $\Delta R(C(f_n; \bolddelta))$, converges to 0 as $n\rightarrow\infty$, it suffices to show that $\Delta Q(f_n)\rightarrow0$ as $n\rightarrow\infty$. In the following results, we derive convergence rates for $\Delta R\big(C(\cdot;\bolddelta)\big)$ for the sequence of functions, $\{\hat{f}_n\}_{n\geq1}$, where $\hat{f}_n$ is used to denote the empirical minimizer of the surrogate loss over a training set of size $n$.

\subsection{Rates of Convergence}
\label{statprops:rates}
In this section, we derive convergence results for two classes of surrogate loss functions separately. We first consider Lipschitz continuous and differentiable surrogate loss functions which satisfy a modulus of convexity condition specified below. Examples of such loss functions include the exponential, logistic, squared hinge and DWD losses. We then separately consider the class of piecewise linear surrogates described in Section~\ref{losses}. 

Let $\surrogate^Y$ denote a Lipschitz continuous and differentiable surrogate loss function. Assume that the corresponding $\phi$-risk, $Q(\cdot)$, has modulus of convexity,
\begin{align}
	\delta(\epsilon) &= 
		\inf \left\{ \frac{Q(f)+Q(g)}{2} - Q\left(\frac{f+g}{2}\right) : \mathbb{E}[(f-g)^2(\bX)]\geq \epsilon^2\right\} \label{eq:MOC}
\end{align}
satisfying $\delta(\epsilon)>c\epsilon^2$ for some $c>0$. Furthermore, let $L<\infty$ denote the Lipschitz constant, such that $|\surrogate^y(\bx) - \surrogate^y(\bx^{\prime})| \leq L|x-x^{\prime}|$ for all $\bx,\bx^{\prime} \in \mathbb{R}$ and $y=+1,-1$. Letting $\mathcal{F}_B$ denote the class of uniformly bounded functions such that $|f|\leq B$ for all $f\in\mathcal{F}_B$, we use $N_n = N(\tfrac{1}{n}, L_{\infty}, \mathcal{F}_B)$ to denote the cardinality of the set of closed balls with radius $\tfrac{1}{n}$ in $L_\infty$ needed to cover $\mathcal{F}_B$. Finally, as stated above, let $\hat f_n = \argmin{f\in\mathcal{F}_B} \sum_{i=1}^n \surrogate^{y_i}(y_i f(\bx_i))$ denote the empirical minimizer of $\surrogate^Y$ over the training set $\{(\bx_i, y_i)\}_{i=1}^n$. For the following corollary, we make use of Theorem~18 from \cite{Yuan2010a} which provides a bound on the expected estimation error, $Q(\hat f_n) - \inf_{f\in\mathcal{F}_B} Q(f)$, for consistent loss functions satisfying the modulus of convexity condition stated above. Combining Theorem~18 of \cite{Yuan2010a} with the excess risk bounds of Theorems~\ref{thm:bound} and~\ref{thm:bound2}, we obtain the following result.
\begin{corollary}\label{thm:combined}
	If $\surrogate^Y$ satisfies the assumptions of Theorems~\ref{thm:consistency} and~\ref{thm:bound}, and has modulus of convexity (\ref{eq:MOC}) satisfying $\delta(\epsilon)>c\epsilon^2$ for some $c>0$, then with probability at least $1-\gamma$,
	\begin{align*}
	\Delta R\big(C(\hat f_n; \bolddelta)\big) \leq 
		C \cdot 2^{1/s} \left\{ \inf_{f\in\mathcal{F}_B} \Delta Q(f) + 
		\frac{3L}{n} + 8\left(\frac{L^2}{2c} + 
		\frac{LB}{3}\right) \frac{\log(N_n/\gamma)}{n}\right\}^{1/s}.
	\end{align*}
	Furthermore, if the generalized margin condition of Theorem~\ref{thm:bound2} holds, then with probability at least $1-\gamma$,
	\begin{align}\label{eq:surrogatebound}
	\Delta R\big(C(\hat f_n; \bolddelta)\big) \leq 
		D \left\{ \inf_{f\in\mathcal{F}_B} \Delta Q(f) + 
		\frac{3L}{n} + 8\left(\frac{L^2}{2c} + 
		\frac{LB}{3}\right) \frac{\log(N_n/\gamma)}{n}\right\}^{1/(s+\beta-\beta s)},
	\end{align}
	for constants $C,D>0$ defined as in Theorems~\ref{thm:bound} and~\ref{thm:bound2}.
\end{corollary}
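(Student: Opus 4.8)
The plan is to combine a standard approximation–estimation decomposition of the excess surrogate risk $\Delta Q(\hat f_n)$ with the deterministic comparison inequalities already established in Theorems~\ref{thm:bound} and~\ref{thm:bound2}. First I would split
\[
\Delta Q(\hat f_n) = Q(\hat f_n) - Q(\bayessurr) = \Big( Q(\hat f_n) - \inf_{f\in\mathcal{F}_B} Q(f)\Big) + \Big( \inf_{f\in\mathcal{F}_B} Q(f) - Q(\bayessurr)\Big),
\]
where the first bracket is the estimation error and the second bracket equals $\inf_{f\in\mathcal{F}_B}\Delta Q(f)$ by the definition of $\Delta Q$. This isolates the approximation term appearing inside the braces in the statement.

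Next I would control the estimation error. Since $\surrogate^Y$ is Lipschitz with constant $L$, differentiable, and its $\phi$-risk $Q(\cdot)$ has modulus of convexity satisfying $\delta(\epsilon)>c\epsilon^2$, the hypotheses of Theorem~18 of \cite{Yuan2010a} are met over the uniformly bounded class $\mathcal{F}_B$. That result yields, with probability at least $1-\gamma$,
\[
Q(\hat f_n) - \inf_{f\in\mathcal{F}_B} Q(f) \leq \frac{3L}{n} + 8\left(\frac{L^2}{2c} + \frac{LB}{3}\right)\frac{\log(N_n/\gamma)}{n}.
\]
Adding the approximation term gives, on the same high-probability event, an upper bound on $\Delta Q(\hat f_n)$ equal precisely to the expression in braces in the corollary.

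Finally I would push this bound through the comparison inequalities. The inequalities of Theorems~\ref{thm:bound} and~\ref{thm:bound2} hold for every fixed measurable margin function, and for each realization of the sample $\hat f_n$ is such a function, so they apply pointwise to $\hat f_n$: namely $\Delta R(C(\hat f_n;\bolddelta)) \leq C\,2^{1/s}\,[\Delta Q(\hat f_n)]^{1/s}$ and, under the generalized margin condition, $\Delta R(C(\hat f_n;\bolddelta)) \leq D\,[\Delta Q(\hat f_n)]^{1/(s+\beta-\beta s)}$. Because $s\geq 1$ and $\beta = \alpha/(1+\alpha)\in[0,1)$ force the denominators $s$ and $s+\beta-\beta s$ to lie in $[1,\infty)$, both exponents are positive, so $t\mapsto t^{1/s}$ and $t\mapsto t^{1/(s+\beta-\beta s)}$ are increasing on $[0,\infty)$; substituting the upper bound on $\Delta Q(\hat f_n)$ into the power therefore preserves the inequality and produces the two displayed bounds. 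There is no serious obstacle here, as the corollary is essentially a composition of two previously proved ingredients; the only points requiring care are verifying positivity of the exponents to justify the monotone substitution, and noting that the probability $1-\gamma$ is inherited entirely from the estimation-error step (the comparison inequalities being deterministic), so the final conclusion holds on exactly the event where Theorem~18 applies.
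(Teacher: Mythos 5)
Your proposal is correct and follows essentially the same route as the paper: the paper likewise obtains this corollary by combining the estimation-error bound of Theorem~18 of \cite{Yuan2010a} (applicable under the Lipschitz and modulus-of-convexity hypotheses) with the comparison inequalities of Theorems~\ref{thm:bound} and~\ref{thm:bound2}, exactly via the estimation--approximation decomposition of $\Delta Q(\hat f_n)$ that you write down. Your added care about the positivity of the exponents $1/s$ and $1/(s+\beta-\beta s)$ and about the high-probability event being inherited solely from the estimation step simply makes explicit details the paper leaves implicit.
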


From the bound on excess risk obtained in Corollary~\ref{thm:combined}, corresponding rates of convergence can be derived based on the cardinality, $N_n$, of the class of functions, $\mathcal{F}_B$. 

Due to the non-differentiability of the loss at hinge points, our class of piecewise linear surrogates do not satisfy the modulus of convexity condition (\ref{eq:MOC}). The following theorem provides separate convergence results for our class of minimally consistent piecewise linear surrogates. Again, we use $\mathcal{F}_B$ to denote a class of uniformly bounded functions, and let $\hat f_n = \argmin{f\in\mathcal{F}_B} \sum_{i=1}^n \varsurrogate^{y_i}(y_i f(\bx_i))$ denote the empirical minimizer of $\varsurrogate^Y$.
\begin{theorem}\label{thm:piececonv}
	If $\varsurrogate^Y$ is a minimally consistent piecewise linear loss satisfying the conditions of Theorem~\ref{thm:piecewise}, satisfying the generalized margin condition of Theorem~\ref{thm:bound2}, then with probability at least $1-\gamma$,  
	\begin{align*}
	\Delta Q(\hat f_n) \leq 
		 \frac{3L}{n} 
			+ \frac{4LB}{3} \cdot G(\gamma) + 
			\left( \left(\frac{4LB}{3}\cdot G(\gamma) \right)^2 + 
				8 \cdot B^\prime \cdot G(\gamma) \right)^{1/2},
	\end{align*}
	where $G(\gamma) = \log(N_n/\gamma)/n$, and $B^\prime>0$ is some constant depending on $B$, $\varsurrogate^Y$, and margin constants $A, \alpha$.
\end{theorem}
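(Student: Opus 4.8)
The plan is to run the localized empirical-process argument behind Theorem~18 of \cite{Yuan2010a}, but to replace its quadratic modulus-of-convexity step---unavailable for a piecewise linear loss---by a second-moment estimate built from the hinge geometry of $\varsurrogate^Y$ and the margin condition. Write $\bar f=\argmin{f\in\mathcal{F}_B}Q(f)$ for the best in-class margin function and, for $f\in\mathcal{F}_B$, set $h_f(\bx,y)=\varsurrogate^{y}(yf(\bx))-\varsurrogate^{y}(y\bar f(\bx))$, so the target is the estimation error $Q(\hat f_n)-Q(\bar f)=\mathbb{E}[h_{\hat f_n}]$ (as with Theorem~18 of \cite{Yuan2010a}, the complementary approximation error $\inf_{f\in\mathcal{F}_B}\Delta Q(f)$ is handled separately, cf.\ Corollary~\ref{thm:combined}). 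Since $\hat f_n$ minimizes the empirical surrogate risk we have $\tfrac1n\sum_i h_{\hat f_n}(\bx_i,y_i)\le 0$, so $\mathbb{E}[h_{\hat f_n}]$ is bounded by the one-sided fluctuation $\mathbb{E}[h_{\hat f_n}]-\tfrac1n\sum_i h_{\hat f_n}(\bx_i,y_i)$.

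First I would discretize $\mathcal{F}_B$ by a $\tfrac1n$-net in $L_\infty$ of cardinality $N_n$. As $\varsurrogate^Y$ is $L$-Lipschitz, moving any $f$ to its nearest net point changes $Q$ and its empirical version by at most $L/n$; tallying this across the relevant quantities produces the offset $\tfrac{3L}{n}$ and reduces the fluctuation to a union bound over the net. To each net point I would apply Bernstein's inequality, using the range bound $|h_f|\le 2LB$ (from $L$-Lipschitzness and $|f|\le B$) and the variance proxy $\sigma^2=\mathbb{E}[h_{\hat f_n}^2]$. With $G(\gamma)=\log(N_n/\gamma)/n$, this yields with probability at least $1-\gamma$ a deviation governed by a range term of order $\tfrac{4LB}{3}G(\gamma)$ and a variance term of order $\sqrt{2\sigma^2 G(\gamma)}$.

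The crux is the second-moment estimate, and this is where the missing modulus of convexity must be worked around. Conditioning on $\bX$ with $p=p(\bX)$, the conditional surrogate risk $p\,\varsurrplus(f)+(1-p)\,\varsurrminus(-f)$ is convex and piecewise linear, so its excess grows only linearly away from its optimal cell, with a slope that degenerates to zero precisely as $p$ nears a boundary $\pi_k$; this flatness is exactly what a quadratic modulus of convexity would have prevented, so no clean inequality $\sigma^2\lesssim\Delta Q$ holds. Instead I would split $\mathbb{E}[h_{\hat f_n}^2]$ according to whether $p(\bX)$ lies far from every $\pi_k$ or within a small neighbourhood of some $\pi_k$. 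On the far region the piecewise linear growth lets the conditional second moment be charged against the conditional excess risk, producing a term proportional to $\Delta Q(\hat f_n)$ with a range-type constant; on the near region, where the increment can be as large as the full range $2LB$, the margin condition $\mathbb{P}\{|p(\bX)-\pi_k|\le t\}\le At^\alpha$ bounds the measure and hence the contribution by a constant $4B'$ depending on $B$, the slopes and hinges of $\varsurrogate^Y$, and $A,\alpha$. The result is a Bernstein-type bound $\mathbb{E}[h_{\hat f_n}^2]\le \tfrac{4LB}{3}\,\Delta Q(\hat f_n)+4B'$, whose irreducible constant part $4B'$ is what forces the slow $\sqrt{G(\gamma)}$ rate. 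I expect this conditional case analysis---tracking how the hinges $\delta_k$ sit relative to the boundaries $\pi_k$ across the flat regions of the loss---to be the main obstacle.

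Finally I would combine the pieces. Writing $y=\Delta Q(\hat f_n)-\tfrac{3L}{n}$, the high-probability Bernstein bound together with the second-moment estimate reduces, once the net is accounted for, to the quadratic inequality $y^2\le \tfrac{8LB}{3}G(\gamma)\,y+8B'G(\gamma)$. Taking its larger root and adding back $\tfrac{3L}{n}$ gives exactly $\tfrac{3L}{n}+\tfrac{4LB}{3}G(\gamma)+\big((\tfrac{4LB}{3}G(\gamma))^2+8B'G(\gamma)\big)^{1/2}$, matching the claim; the only care needed is to verify the sign condition justifying the squaring step, which holds on the high-probability event (and the complementary regime is already dominated by the stated bound).
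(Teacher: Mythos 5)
Your skeleton---the $\tfrac{1}{n}$-net discretization yielding the $3L/n$ offset, Bernstein's inequality over the net, a variance bound of the form ``linear in the excess risk plus a constant forced by the margin condition,'' and the final quadratic solve---matches the paper's proof, and your closing algebra is correct. The genuine gap is the anchor of $h_f$. You define $h_f(\bx,y)=\varsurrogate^y(yf(\bx))-\varsurrogate^y(y\bar f(\bx))$ with $\bar f$ the best-in-class minimizer, which makes the step $\tfrac{1}{n}\sum_i h_{\hat f_n}(\bx_i,y_i)\le 0$ immediate, but it destroys your self-identified crux, the second-moment estimate: the hinge-geometry argument (``charge the conditional second moment against the conditional excess risk'') works only when the anchor is the pointwise conditional minimizer $f^*_{\varsurrogate}$, because only then is the conditional excess $Q_p(f)-Q_p(f^*_{\varsurrogate})$ nonnegative for every $\bx$ and lower-bounded by a slope factor proportional to $\min_k|p(\bX)-\pi_k|$ times the distance $|f-f^*_{\varsurrogate}|$; this is exactly what the paper's supplementary lemmas establish through the pseudo-norm $\rho_{\bX}(f,f^*_{\varsurrogate})$. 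Relative to $\bar f$, the conditional excess $Q_p(f)-Q_p(\bar f)$ can be negative on sets of positive probability, the margin condition says nothing about $\bar f$, and a piecewise linear loss lacks the strong convexity that would otherwise give variance control at an in-class optimum; your claimed bound $\mathbb{E}[h_{\hat f_n}^2]\le\tfrac{4LB}{3}\Delta Q(\hat f_n)+4B^\prime$ therefore mixes two different anchors ($h$ at $\bar f$, $\Delta Q$ at $\bayessurr$) and does not follow from the argument you sketch.

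This mismatch also changes the statement you prove. The theorem bounds the total excess $\varsurrogate$-risk $\Delta Q(\hat f_n)=Q(\hat f_n)-Q(\bayessurr)$ with no approximation-error term---the paper emphasizes precisely this contrast with Theorem~18 of Yuan et al.\ immediately after the statement---whereas your proposal bounds only the estimation error $Q(\hat f_n)-Q(\bar f)$, which is smaller by the nonnegative quantity $\inf_{f\in\mathcal{F}_B}\Delta Q(f)$ and hence does not imply the claim. The repair is the paper's route: anchor $h_f$ at $f^*_{\varsurrogate}$ throughout (the empirical-minimization step then implicitly relies on $f^*_{\varsurrogate}$ being attainable by the class), prove the conditional lower bound $\Delta Q_p(f)\ge D^*\min\{|p(\bX)-\pi_1|,|p(\bX)-\pi_K|,1-\pi_1,\pi_K\}\cdot\rho_{\bX}(f,f^*_{\varsurrogate})$ and the upper bound $\mathbb{E}_{Y|\bX}\{h_f^2\}\le L^2(B+M)\rho_{\bX}(f,f^*_{\varsurrogate})$, and combine them with the margin condition. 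Your near/far split on $|p(\bX)-\pi_k|$ is in fact this computation, except that you freeze the threshold; optimizing it instead yields the Bernstein-class property $\mathbb{E}\{h_f^2\}\le B^\prime(\mathbb{E}\{h_f\})^{\alpha/(1+\alpha)}$, which the paper then linearizes via $z^\beta\le 1+z$ inside Bernstein's inequality, arriving at the same quadratic you derived.
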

Combining Theorems~\ref{thm:bound},~\ref{thm:bound2}, and~\ref{thm:piececonv}, we obtain the following corollary.
\begin{corollary}\label{thm:piececombined}
	If $\varsurrogate^Y$ is a minimally consistent piecewise linear loss satisfying the assumptions of Theorems~\ref{thm:consistency},~\ref{thm:bound}, and~\ref{thm:bound2}, then with probability at least $1-\gamma$,
	\begin{align}\label{eq:varsurrogatebound}
	\Delta R\big(C(\hat f_n; \bolddelta)\big) \leq 
		D \left\{
			 \frac{3L}{n} 
			+ \frac{4LB}{3} \cdot G(\gamma) + 
			\left( \left(\frac{4LB}{3}\cdot G(\gamma) \right)^2 + 
				8 \cdot B^\prime \cdot G(\gamma) \right)^{1/2} \right\}^{1/(s+\beta-\beta s)},
	\end{align}
	for constants $C,D>0$ defined as in Theorems~\ref{thm:bound} and~\ref{thm:bound2}.
\end{corollary}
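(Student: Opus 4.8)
The plan is to obtain the stated bound by directly composing the deterministic excess-risk bound of Theorem~\ref{thm:bound2} with the high-probability control on the excess surrogate risk furnished by Theorem~\ref{thm:piececonv}. No new estimates are needed: the corollary is a substitution of one bound into the other, together with a brief check that the intervening power map preserves inequalities.

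First I would verify that the hypotheses of the corollary supply everything the two ingredient results require. Since $\varsurrogate^Y$ is a minimally consistent piecewise linear surrogate for $\boldpi$, it satisfies the conditions of Theorem~\ref{thm:piecewise}; together with the generalized margin condition inherited from the assumptions of Theorem~\ref{thm:bound2}, this is precisely what Theorem~\ref{thm:piececonv} asks for. Likewise, the consistency and bound-condition hypotheses of Theorems~\ref{thm:consistency} and~\ref{thm:bound} (the latter holding with $s=1$ and the explicit constant of Corollary~\ref{thm:PLbounds}) are exactly the premises of Theorem~\ref{thm:bound2}. Thus both ingredient statements apply to the empirical minimizer $\hat f_n$ of $\varsurrogate^Y$.

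Next I would carry out the substitution. Theorem~\ref{thm:bound2} is a deterministic inequality valid for every margin function, so in particular
\[
\Delta R\big(C(\hat f_n; \bolddelta)\big) \leq D\cdot \Delta Q(\hat f_n)^{1/(s+\beta-\beta s)}.
\]
Theorem~\ref{thm:piececonv} then gives, with probability at least $1-\gamma$, an upper bound on $\Delta Q(\hat f_n)$ by the bracketed quantity $\tfrac{3L}{n} + \tfrac{4LB}{3}G(\gamma) + \big((\tfrac{4LB}{3}G(\gamma))^2 + 8B^\prime G(\gamma)\big)^{1/2}$. Because $s\geq1$ and $\beta=\alpha/(1+\alpha)\in[0,1)$ force $s+\beta-\beta s = s(1-\beta)+\beta \geq 1 > 0$, the map $x\mapsto x^{1/(s+\beta-\beta s)}$ is nondecreasing on $[0,\infty)$; applying it to the Theorem~\ref{thm:piececonv} bound preserves the inequality. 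Substituting into the displayed deterministic bound yields~(\ref{eq:varsurrogatebound}) on the same probability-$(1-\gamma)$ event.

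There is essentially no obstacle here: the only points requiring care are bookkeeping ones, namely that the probabilistic statement from Theorem~\ref{thm:piececonv} and the deterministic statement from Theorem~\ref{thm:bound2} hold on the same event (they do, since only the former carries a failure probability), and that the exponent is positive so that the monotone substitution is legitimate. I would also remark that for these piecewise linear surrogates Corollary~\ref{thm:PLbounds} forces $s=1$, whence $s+\beta-\beta s = 1$ and the outer exponent in~(\ref{eq:varsurrogatebound}) collapses to $1$; I would nonetheless retain the general exponent in the statement for uniformity with Corollary~\ref{thm:combined}.
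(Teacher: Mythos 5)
Your proposal is correct and follows essentially the same route as the paper, which simply combines Theorems~\ref{thm:bound}, \ref{thm:bound2}, and~\ref{thm:piececonv}: the deterministic risk bound of Theorem~\ref{thm:bound2} applied to $\hat f_n$, composed with the high-probability bound on $\Delta Q(\hat f_n)$ from Theorem~\ref{thm:piececonv}, with the monotonicity of $x\mapsto x^{1/(s+\beta-\beta s)}$ justifying the substitution. Your added observations (the single failure event coming only from Theorem~\ref{thm:piececonv}, and that $s=1$ via Corollary~\ref{thm:PLbounds} collapses the exponent to $1$) are accurate refinements of the paper's terser treatment.
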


As in Theorem~\ref{thm:bound2}, while the convergence rate of Theorem~\ref{thm:piececonv} does not depend on $\boldpi$ explicitly, it does depend on the parameters of the margin condition (\ref{eq:margincondition}). Therefore, Theorem~\ref{thm:piececonv} further suggests the advantage of targeting $\boldpi$ for which the data show strong separation with large $\alpha$. Furthermore, in contrast to Theorem~18 of \cite{Yuan2010a} which provides a bound on the expected estimation error, Theorem~\ref{thm:piececonv} bounds the total $\varsurrogate^Y$-risk, including both the expected estimation error, and expected approximation error of the class of functions $\mathcal{F}_B$. As a result, while the bounds in Corollary~\ref{thm:combined} include the separate approximation error term, $\inf_{f\in\mathcal{F}_B} \Delta Q(f)$, the piecewise linear bound in Corollary~\ref{thm:piececombined}, does not.

Based on the bounds in (\ref{eq:surrogatebound}) and (\ref{eq:varsurrogatebound}), rates of convergence can be obtained as in \cite{Yuan2010a}. As an example, we consider the case when $\mathcal{F_B}$ is the class of linear combinations of decision stumps, $f_\lambda$,
\begin{align*}
	f_\lambda (x) = \sum_{j=1}^M \lambda_j f_j(x)
\end{align*} 
where $\sum_j |\lambda_j| \leq B$, and $|f_j| < 1$. By (\ref{eq:surrogatebound}) and (\ref{eq:varsurrogatebound}), the same rate, $(M\log n/n)^{1/(s+\beta-\beta s)}$, can be obtained as in \cite{Yuan2010a} for both classes of surrogate losses considered above.

\section{Computational Algorithm}
\label{computation}

For a piecewise linear surrogate, $\varsurrogate^Y$, and convex penalty, $J(f)$, the objective (\ref{eq:loss+penaltyCh}) is a non-differentiable convex problem. Several approaches have been proposed for solving the similar non-differentiable and convex SVM objective, most commonly by reformulating (\ref{eq:loss+penaltyCh}) as a quadratic program (QP) with $2n$ constraints. The penalized objective (\ref{eq:loss+penaltyCh}) with $\varsurrogate^Y$ may also be formulated as a QP with $(K+1)n$ constraints. However, as with the SVM problem, the complexity of the problem grows almost cubically with the number of constraints, making the problem computationally intensive for moderately large $K$ and $n$ \cite{Bottou}. We therefore propose a projected sub-gradient descent algorithm similar to the PEGASOS algorithm \cite{Shalev-Shwartz2010}. 

We first rewrite (\ref{eq:loss+penaltyCh}) with piecewise linear surrogate, $\varsurrogate^Y$ defined as in (\ref{eq:varsurrplus}) and (\ref{eq:varsurrminus}) as:
\begin{align}
	\min_{h,b}\ &\ \ \frac{1}{n} \sum_{i=1}^n \Big(\max_{k=1,\ldots,K} \{ A^{y_i}(\pi_k) + B^{y_i}(\pi_k) \cdot y_i (h(\bx_i) + b) \} \Big)_+ 
		+ \frac{\lambda}{2} \|h\|^2_{\mathcal{H}}, \label{eq:objective1}
\end{align}
where $(z)_+ = \max\{0,z\}$, and $\mathcal{H}$ is some Reproducing Kernel Hilbert Space (RKHS) with norm $\|\cdot\|_{\mathcal{H}}$ and corresponding kernel function $K:\calX\times\calX\rightarrow\mathbb{R}$. Commonly, the margin function is formulated with a non-penalized intercept parameter, $b$. A more complete review of RKHS may be found in \cite{Aronszajn1950,Wahba1999}. In margin-based learning, kernel methods are commonly used to estimate non-linear classification boundaries. In the case of linear learning, i.e. $h(\bx) = \langle \bw, \bx \rangle$ for $\bw \in \mathbb{R}^p$, the penalty $\|h\|^2_{\mathcal{H}}$ reduces to $\|\bw\|^2$ and (\ref{eq:objective1}) may be written as:
\begin{align*}
	\min_{\bw,b}\ &\ \ \frac{1}{n} \sum_{i=1}^n \Big(\max_{k=1,\ldots,K} \{ A^{y_i}(\pi_k) + B^{y_i}(\pi_k) \cdot y_i (\langle \bw, \bx_i \rangle + b) \} \Big)_+ 
		+ \frac{\lambda}{2} \|\bw\|^2.
\end{align*}
We next describe our iterative algorithm for the linear learning setting. Let $\bw^{(m)}$ and $b^{(m)}$ denote the estimated parameters at the $m$-th iteration. Furthermore, at each iteration, let $B_i^*$ denote the sub-gradient of $\varsurrogate^{y_i}$ at $\langle \bw^{(m)}, \bx_i\rangle + b^{(m)}$ for $i=1, \ldots, n$. Using a decreasing step-size parameter, $\eta_m = (\lambda m)^{-1}$, we iterate the following updates until $\bw^{(m)}$ and $b^{(m)}$ converge:
\begin{enumerate}
	\item $\bw^{(m)} = \bw^{(m-1)} + \eta_m (\tfrac{1}{n}\sum_i B_i^* y_i \bx_i - \lambda \bw^{(m-1)})$,
	\item $b^{(m)} = b^{(m-1)} + \eta_m (\tfrac{1}{n}\sum_i B_i^* y_i)$, 
	\item $[\bw^{(m)}, b^{(m)}] = \min \{ 1, \tfrac{\lambda^{-1/2}}{\| [\bw^{(m)}, b^{(m)}] \|} \} [\bw^{(m)}, b^{(m)}]$,
\end{enumerate}
where $B_i^*$ is used to denote the sub-gradient of $\varsurrogate^{y_i}$ at $y_i (\langle \bx_i, \bw\rangle + b)$. The final projection step is included to ensure $\| [\bw^{(m)}, b^{(m)}] \|^2 \leq \lambda^{-1}$ at each iteration \cite{Calamai1987,Shalev-Shwartz2010}. In the following section, we apply our projected sub-gradient descent algorithm to simulated datasets to illustrate the utility of our class of problems.

\section{Simulations}
\label{simulations}
In this section, we use simulations to illustrate the performance achieved by targeting different binary learning problems. Namely, we compare the performance of several minimal consistent piecewise linear losses against the standard logistic classifier, when the underlying conditional class probability, $p(\bX)$, is piecewise constant. Piecewise linear loss functions are derived from the logistic loss as described in Section~\ref{losses:logistic}, and the sets of boundaries, $\bolddelta$, are chosen by the tangent points to the logistic loss. In each simulation, we consider piecewise linear losses with $\boldpi_1 = \{1/2\}$, $\boldpi_2 = \{1/3, 2/3\}$, and $\boldpi_3 = \{1/4, 2/4, 3/4\}$. All methods are tuned over a grid of penalty parameters $\lambda \in \{2^{-15}, 2^{-14}, \ldots, 2^{10}\}$, using training and tuning sets of 100 observations each. Piecewise linear classifiers and the logistic classifier are tuned with respect to the correspond theoretical loss (\ref{eq:general01}) and likelihood function, respectively. The performance of each estimated model is evaluated using a test set of 10,000 observations. Each simulation was replicated 100 times.

\subsection{Simulation 1} \label{simulations:sim1}
\begin{figure}[t]
	\centering
	{\includegraphics[width=\textwidth]{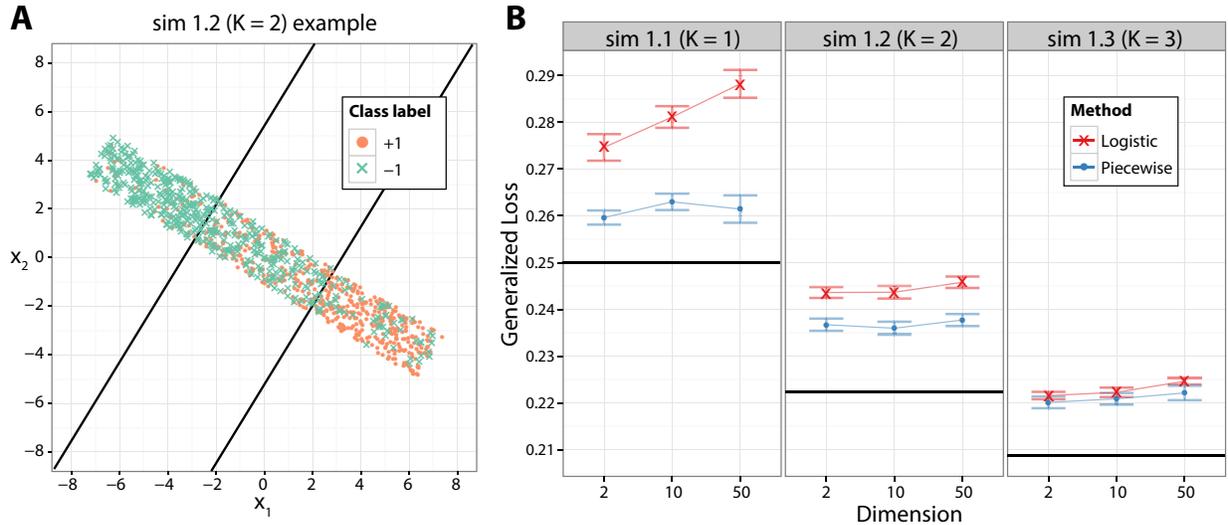}}
	\caption[Simulation~1 results]{(A) Sample dataset of 1000 observations drawn from the generating distribution for Simulation~1.2. The two Bayes optimal boundaries separating the three regions of constant $p(\bX)$ are shown with black lines. (B) Comparison of the performance of the piecewise linear and logistic classifiers for the three settings of Simulation~1 across varying dimension. In each panel, the median loss and standard error over 100 replications is shown along with the Bayes minimal loss in black.} 
	\label{fig:sim1}
\end{figure}
In this setting, data are simulated uniformly from $[-8, 8] \times [-1, 1]^{p-1}$ for $p=2, 10, 50$, subject to a random rotation in the $p$-dimensional space. We consider three variations of this setting, in which the data were simulated with underlying conditional class probability, defined with respect to the sampling space prior to rotation: 
\begin{enumerate}
	\item[1.1] $p(\bX) = \tfrac{1}{4} \textbf{I}\{x_1 \in  [-8, 0)\} + \tfrac{3}{4} \textbf{I}\{x_1 \in [0, 8]\}$, 
	\item[1.2] $p(\bX) = \tfrac{1}{6} \textbf{I}\{x_1 \in  [-8, -\tfrac{8}{3})\} + \tfrac{3}{6} \textbf{I}\{x_1 \in  [-\tfrac{8}{3},\tfrac{8}{3})\} + \tfrac{5}{6} \textbf{I}\{x_1 \in  [\tfrac{8}{3}, 8]\} $,
	\item[1.3] $p(\bX) = \tfrac{1}{8} \textbf{I}\{x_1 \in  [-8, -4)\} + \tfrac{3}{8} \textbf{I}\{x_1 \in  [-4,0)\} + \tfrac{5}{8} \textbf{I}\{x_1 \in  [0, 4)\} + \tfrac{7}{8} \textbf{I}\{x_1 \in  [4, 8]\} $.
\end{enumerate}
Settings~1.1, 1.2, and 1.3 have one, two and three natural boundaries due to the piecewise constant form of $p(\bX)$. In Figure~\ref{fig:sim1}A, we show 1000 observations drawn from simulation setting~1.2, with observations from the positive and negative class shown in orange and green. The Bayes optimal boundaries are also shown in black. For settings~1.1, 1.2, and 1.3, we use the piecewise linear losses with boundaries at $\boldpi_1$, $\boldpi_2$, and $\boldpi_3$, respectively. In each setting, the performance of the piecewise linear and logistic classifiers is evaluated using the theoretical loss for boundaries at $\boldpi_1$, $\boldpi_2$, and $\boldpi_3$. In these simulations, we aim to illustrate the advantage of minimizing and tuning with respect to an appropriate theoretical loss, which matches the underlying form of the data.

The results are shown in Figure~\ref{fig:sim1}B, along with the Bayes minimal loss, which provides a lower bound on the theoretical loss in each setting. In all settings, the piecewise linear classifier outperforms the logistic classifier, with the improvement decreasing as the number of boundaries, $K$ increases. This makes intuitive sense, as the piecewise linear loss converges to the logistic loss as $K\rightarrow\infty$. The most significant improvement is seen in setting 1.1, in which the piecewise linear classifier and theoretical loss correspond to the standard SVM and misclassification error. These results confirm previous results highlighting the advantage of hard classifiers over soft classifiers when the underlying $p(\bX)$ is piecewise constant \cite{Liu2011a}. Furthermore, the complete set of results illustrates the transition of this behavior as the number of boundaries increases.

\subsection{Simulation 2}\label{simulations:sim2}
\begin{figure}[t]
	\centering
	{\includegraphics[width=\textwidth]{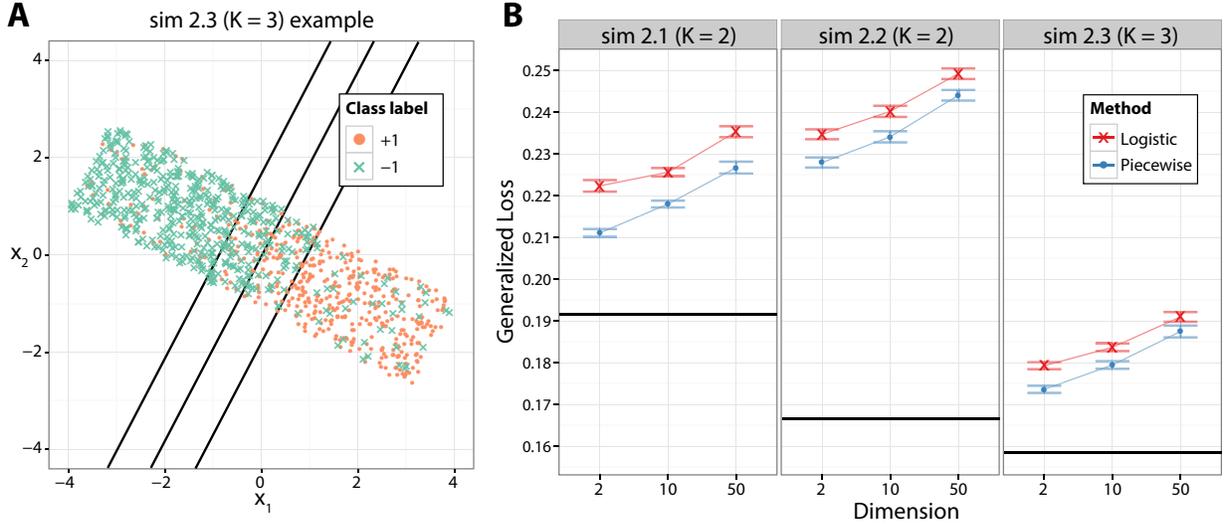}}
	\caption[Simulation~2 results]{(A) Sample dataset of 1000 observations drawn from the generating distribution for Simulation~2.3. The three Bayes optimal boundaries separating the four regions of constant $p(\bX)$ are shown with black lines. (B) Comparison of the performance of the piecewise linear and logistic classifiers for the three settings of Simulation~2 across varying dimension. In each panel, the median loss and standard error over 100 replications is shown along with the Bayes minimal loss in black.}
	\label{fig:sim2}
\end{figure}
In Simulation~1, the piecewise constant regions of $p(\bX)$ were of equal size. In our second set of simulations, we consider unequally spaced conditional class probabilities. Observations were uniformly sampled over $[-4, 4] \times [-1, 1]^{p-1}$, for $p=2, 10, 50$, again subject to a random rotation. The following conditional class probabilities were considered, again, with respect to the sampling space prior to rotation:
\begin{enumerate}
	\item[2.1] $p(\bX) = \tfrac{1}{6} \textbf{I}\{x_1 \in  [-4, -0.6)\} + \tfrac{3}{6} \textbf{I}\{x_1 \in  [-0.6, 0.6)\} + \tfrac{5}{6} \textbf{I}\{x_1 \in  [0.6, 4]\} $,
	\item[2.2] $p(\bX) = \tfrac{1}{6} \textbf{I}\{x_1 \in  [-4, -2)\} + \tfrac{3}{6} \textbf{I}\{x_1 \in  [-2, 0)\} + \tfrac{5}{6} \textbf{I}\{x_1 \in  [0, 4]\} $,
	\item[2.3] $p(\bX) = \tfrac{1}{8} \textbf{I}\{x_1 \in  [-4, -0.8)\} + \tfrac{3}{8} \textbf{I}\{x_1 \in  [-0.8,0)\} + \tfrac{5}{8} \textbf{I}\{x_1 \in  [0, 0.8)\} + \tfrac{7}{8} \textbf{I}\{x_1 \in  [0.8, 4]\} $.
\end{enumerate}
In settings~2.1 and~2.3, we consider $p(\bX)$ with heavy tails, and in setting~2.2, we consider the case with asymmetric $p(\bX)$. A sample of 1000 observations drawn from setting~2.3 is shown in Figure~\ref{fig:sim2}A, with the Bayes optimal boundaries in black.  For settings~2.1, 2.2, and 2.3, we use the piecewise linear losses with boundaries at $\boldpi_2$, $\boldpi_2$, and $\boldpi_3$, respectively. The performance of the piecewise linear and logistic classifiers is again evaluated using the corresponding theoretical loss function. Simulation results are shown in Figure~\ref{fig:sim2}B. As in Simulation~1, the piecewise linear classifier outperforms the logistic classifier in all cases. Again, the improvement is greater in settings~2.1 and~2.2 than in setting~2.3, as the piecewise linear loss converges to the logistic loss with increasing $K$.

\section{ADNI Data Analysis}
\label{realdata}

In this section, we apply the proposed interval estimation procedure to a MRI dataset of healthy normal control (NC) and early Alzheimer's disease (AD) subjects. Data were obtained from the ADNI database (\url{adni.loni.usc.edu}). The ADNI was launched in 2003 by the National Institute on Aging (NIA), the National Institute of Biomedical Imaging and Bioengineering (NIBIB), the Food and Drug Administration (FDA), private pharmaceutical companies and non-profit organizations as a \$60 million, 5-year public-private partnership. The Principal Investigator of this initiative is Michael W.~Weiner, MD, VA Medical Center and University of California - San Francisco. ADNI is the result of efforts of many co-investigators from a broad range of academic institutions and private corporations, and subjects have been recruited from over 50 sites across the U.S. and Canada. For up-to-date information, see \url{www.adni-info.org}.

\begin{figure}[t]
	\centering
	{\includegraphics[width=\textwidth]{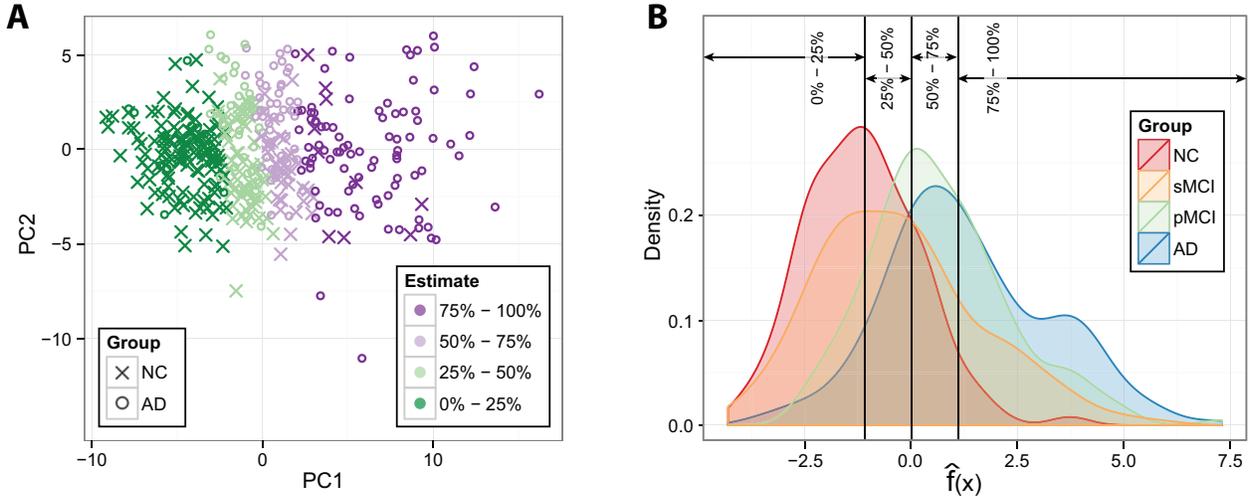}}
	\caption[Real data analysis]{Analysis of ADNI MRI dataset with $\boldpi = \{0.25, 0.5, 0.75\}$. (A) Scatterplot of first two PCs for AD and NC subjects colored by estimated interval. (B) Density plots of predicted $\hat{f}(\bx)$ for AD, NC, and two intermediary subject groups, sMCI and pMCI. Corresponding interval cutoffs are shown with vertical lines.}
	\label{fig:data}
\end{figure}

The dataset we use consists of 93 MRI features measured for 225 NC and 186 AD subjects, and was processed as described in \cite{Yu2014}. As in Section~\ref{simulations}, the logistic-derived piecewise linear loss is used to target the conditional class probability of AD at $\boldpi = \{1/4, 2/4, 3/4\}$. Two-fold cross validation is used to determine the optimal $\lambda$ over $\{2^{-15}, 2^{-14}, \ldots, 2^5\}$. The first two principal components (PCs) of the 411 NC and AD subjects are shown in Figure~\ref{fig:data}A, along with the estimated interval for each subject. Interestingly, the four distinct probability groups appear to separate along the first PC direction.

In addition to NC and AD subjects, the dataset also includes subjects with mild cognitive impairment (MCI), further classified as either progressive MCI (pMCI, 167 subjects) or stable MCI (sMCI, 226 subjects), depending on whether or not the subject progressed to develop AD during the study. The sMCI and pMCI may be considered as intermediary states between the NC and AD subjects. As such, in Figure~\ref{fig:data}B, we show the distribution of margin values, $\hat{f}(\bx)$, for NC, sMCI, pMCI, and AD subjects to investigate the transition between the four distinct groups. The corresponding interval boundaries are shown by vertical lines. Interestingly, while not well-differentiated, the four groups appear to peak within each of the four intervals, with the densities shifting in the expected order. Overall, our method appears to appropriately divide the subject according to the severity of the disease.

\section{Discussion}
\label{discussion}

Supervised learning tasks with a discrete class label are commonly encountered in practice. Several problems have been formally defined and studied within this context, including hard, soft, and rejection-option classification. In this paper, we introduce a unified framework of binary learning tasks targeting partial or complete estimation of the conditional class probability, $p(\bX)$, which encompassing these problems. In contrast to previous frameworks connecting hard and soft classification, our approach spans a space of learning problems, rather than specific loss functions or classification methods. Our approach thus provides a unique perspective to study the transition between hard and soft classification.

We formalize our family of binary learning problems through a unified theoretical loss (\ref{eq:general01}), a corresponding margin based relaxation (\ref{eq:general01margin}), and a proposed class of minimally consistent piecewise linear surrogates. Simulation studies using the class of piecewise linear loss functions reinforce previous results on hard and soft classification, and illustrate the transitional behavior between the class of problems. Finally, an application of our interval estimation approach to a MRI dataset from the ADNI study further illustrates the utility of our proposed class of problems.


\vskip .25in
\begin{spacing}{1.3}
\noindent
{\bf Acknowledgments:} \\
The authors are supported in part by National Institutes of Health (NIH) Grants U24 CA143848 (Hayes) and U24 CA143848-02S1 (Kimes). Data collection and sharing for this project was funded by the Alzheimer's Disease Neuroimaging Initiative (ADNI) (NIH Grant U01 AG024904). ADNI is funded by the National Institute on Aging, the National Institute of Biomedical Imaging and Bioengineering, and through generous contributions from the following: Abbott; Alzheimer's Association; Alzheimer's Drug Discovery Foundation; Amorfix Life Sciences Ltd.; AstraZeneca; Bayer HealthCare; BioClinica, Inc.; Biogen Idec Inc.; Bristol-Myers Squibb Company; Eisai Inc.; Elan Pharmaceuticals Inc.; Eli Lilly and Company; F. Hoffmann-La Roche Ltd and its affiliated company Genentech, Inc.; GE Healthcare; Innogenetics, N.V.; IXICO Ltd.; Janssen Alzheimer Immunotherapy Research \& Development, LLC.; Johnson \& Johnson Pharmaceutical Research \& Development LLC.; Medpace, Inc.; Merck \& Co., Inc.; Meso Scale Diagnostics, LLC.; Novartis Pharmaceuticals Corporation; Pfizer Inc.; Servier; Synarc Inc.; and Takeda Pharmaceutical Company. The Canadian Institutes of Health Research is providing funds to support ADNI clinical sites in Canada. Private sector contributions are facilitated by the Foundation for the National Institutes of Health (www.fnih.org). The grantee organization is the Northern California Institute for Research and Education, and the study is coordinated by the Alzheimer's Disease Cooperative Study at the University of California, San Diego. ADNI data are disseminated by the Laboratory for Neuro Imaging at the University of Southern California. This research was also supported by NIH grants P30 AG010129 and K01 AG030514.
\end{spacing}


\bibliographystyle{ieeetr}
\bibliography{references}

\begin{thebibliography}{10}

\bibitem{Vapnik1995}
V.~Vapnik, {\em The nature of statistical learning theory}.
\newblock Springer, 1995.

\bibitem{Vapnik1998}
V.~Vapnik, {\em Statistical learning theory}.
\newblock Wiley, 1998.

\bibitem{Shen2003}
X.~Shen, G.~C. Tseng, X.~Zhang, and W.~H. Wong, ``{On $\psi$-Learning},'' {\em
  Journal of the American Statistical Association}, vol.~98, no.~463,
  pp.~724--734, 2003.

\bibitem{Liu2006}
Y.~Liu and X.~Shen, ``{Multicategory $\psi$-Learning},'' {\em Journal of the
  American Statistical Association}, vol.~101, no.~474, pp.~500--509, 2006.

\bibitem{Platt1999}
J.~C. Platt, ``{Probabilistic outputs for support vector machines and
  comparisons to regularized likelihood methods},'' {\em Advances in large
  margin classifiers}, vol.~10, no.~3, pp.~61--74, 1999.

\bibitem{Wang2008}
J.~Wang, X.~Shen, and Y.~Liu, ``{Probability estimation for large-margin
  classifiers},'' {\em Biometrika}, vol.~95, no.~1, pp.~149--167, 2008.

\bibitem{Liu2011a}
Y.~Liu, H.~H. Zhang, and Y.~Wu, ``{Hard or Soft Classification? Large-margin
  Unified Machines},'' {\em Journal of the American Statistical Association},
  vol.~106, no.~493, pp.~166--177, 2011.

\bibitem{Marron2007}
J.~S. Marron, M.~J. Todd, and J.~Ahn, ``{Distance-Weighted Discrimination},''
  {\em Journal of the American Statistical Association}, vol.~102, no.~480,
  pp.~1267--1271, 2007.

\bibitem{Zhang2013}
C.~Zhang and Y.~Liu, ``{Multicategory Large-Margin Unified Machines},'' {\em
  Journal of Machine Learning Research}, vol.~14, pp.~1349--1386, 2013.

\bibitem{Herbei2006}
R.~Herbei and M.~H. Wegkamp, ``{Classification with reject option},'' {\em
  Canadian Journal of Statistics}, vol.~34, no.~4, pp.~709--721, 2006.

\bibitem{Bartlett2008}
P.~L. Bartlett and M.~H. Wegkamp, ``{Classification with a reject option using
  a hinge loss},'' {\em Journal of Machine Learning Research}, vol.~9,
  pp.~1823--1840, 2008.

\bibitem{Yuan2010a}
M.~Yuan and M.~H. Wegkamp, ``{Classification methods with reject option based
  on convex risk minimization},'' {\em Journal of Machine Learning Research},
  vol.~11, pp.~111--130, 2010.

\bibitem{Wegkamp2011}
M.~H. Wegkamp and M.~Yuan, ``{Support vector machines with a reject option},''
  {\em Bernoulli}, vol.~17, no.~4, pp.~1368--1385, 2011.

\bibitem{Lin2002a}
Y.~Lin, Y.~Lee, and G.~Wahba, ``{Support Vector Machines for Classification in
  Nonstandard Situations},'' {\em Machine Learning}, vol.~46, pp.~191--202,
  2002.

\bibitem{Qiao2009}
X.~Qiao and Y.~Liu, ``{Adaptive weighted learning for unbalanced multicategory
  classification},'' {\em Biometrics}, vol.~65, no.~1, pp.~159--168, 2009.

\bibitem{Steinwart2007}
I.~Steinwart and C.~Scovel, ``{Fast rates for support vector machines using
  Gaussian kernels},'' {\em The Annals of Statistics}, vol.~35, no.~2,
  pp.~575--607, 2007.

\bibitem{Blanchard2008}
G.~Blanchard, O.~Bousquet, and P.~Massart, ``{Statistical performance of
  support vector machines},'' {\em The Annals of Statistics}, vol.~36, no.~2,
  pp.~489--531, 2008.

\bibitem{Bartlett2006}
P.~L. Bartlett, M.~I. Jordan, and J.~D. McAuliffe, ``{Convexity,
  Classification, and Risk Bounds},'' {\em Journal of the American Statistical
  Association}, vol.~101, no.~473, pp.~138--156, 2006.

\bibitem{Cristianini2000}
N.~Cristianini and J.~Shawe-Taylor, {\em An Introduction to Support Vector
  Machines and Other Kernel-based Learning Methods}.
\newblock Cambridge University Press, 1~ed., 2000.

\bibitem{Lin2002}
Y.~Lin, ``{Support Vector Machines and the Bayes Rule in Classification},''
  {\em Data Mining and Knowledge Discovery}, vol.~6, pp.~259--275, 2002.

\bibitem{Zhang2013a}
C.~Zhang, Y.~Liu, and Z.~Wu, ``{On the effect and remedies of shrinkage on
  classification probability estimation},'' {\em The American Statistician},
  vol.~67, no.~3, pp.~134--142, 2013.

\bibitem{Grandvalet2008}
Y.~Grandvalet, A.~Rakotomamonjy, J.~Keshet, and S.~Canu, ``Support vector
  machines with a reject option,'' in {\em Advances in Neural Information
  Processing Systems 21}, pp.~537--544, Curran Associates, Inc., 2009.

\bibitem{Mammen1999}
E.~Mammen and A.~B. Tsybakov, ``{Smooth Discrimination Analysis},'' {\em The
  Annals of Statistics}, vol.~27, no.~6, pp.~1808--1829, 1999.

\bibitem{Bottou}
L.~Bottou and C.-J. Lin, ``{Support Vector Machine Solvers},'' in {\em Large
  Scale Kernel Machines}, pp.~301--320, Cambridge, MA.: MIT Press, 2007.

\bibitem{Shalev-Shwartz2010}
S.~Shalev-Shwartz, Y.~Singer, N.~Srebro, and A.~Cotter, ``{Pegasos: primal
  estimated sub-gradient solver for SVM},'' {\em Mathematical Programming},
  vol.~127, no.~1, pp.~3--30, 2010.

\bibitem{Aronszajn1950}
N.~Aronszajn, ``{Theory of Reproducing Kernels},'' {\em Transactions of the
  American Mathematical Society}, vol.~68, no.~3, pp.~337--404, 1950.

\bibitem{Wahba1999}
G.~Wahba, ``{Support vector machines, reproducing kernel Hilbert spaces and the
  randomized GACV},'' {\em Advances in Kernel Methods-Support Vector Learning},
  vol.~6, pp.~69--87, 1999.

\bibitem{Calamai1987}
P.~H. Calamai and J.~J. Mor\'{e}, ``{Projected gradient methods for linearly
  constrained problems},'' {\em Mathematical Programming}, vol.~39, no.~1,
  pp.~93--116, 1987.

\bibitem{Yu2014}
G.~Yu, Y.~Liu, K.-H. Thung, and D.~Shen, ``{Multi-Task Linear Programming
  Discriminant Analysis for the Identification of Progressive MCI
  Individuals},'' {\em PLoS ONE}, vol.~9, no.~5, p.~e96458, 2014.

\end{thebibliography}


\clearpage
\section*{SUPPLEMENTARY MATERIALS}
\label{supplement}
\beginsupplement

\section{Common learning problems as special cases}
In this section, we show that our class of problems encompasses hard, weighted, rejection-option, and soft classification. For hard, weighted and rejection-option classification, the equivalence is derived by showing that for specific choices of $\boldpi$, the theoretical loss (\ref{eq:general01}) reduces to the standard losses given in (\ref{eq:01loss}), (\ref{eq:rejloss}), and (\ref{eq:wloss}). For soft classification, the equivalence is shown by deriving the limiting form of the theoretical loss (\ref{eq:general01}), and showing that the limiting loss is optimized by $p(\bX)$.

\subsection{Hard and Weighted Classification}
For hard classification, let $\boldpi=\{0.5\}$, such that $\intervals = \{\omega_0, \omega_1\} = \{[0,0.5], (0.5,1]\}$. Then, the theoretical loss (\ref{eq:general01}) may be simplified as:
\begin{align}
	\ell^Y_{\boldpi} \big(g(\bX))\big) 
		&= 2 \cdot \ell^Y_{0.5} \big(g(\bX))\big) \notag \\
		&= 
		\begin{cases}
			2\cdot (1 - 0.5) \cdot \textbf{I}\{g(\bX) \leq 0.5\} &\text{if } Y = +1 \\
			2\cdot 0.5 \cdot \textbf{I}\{g(\bX) > 0.5\} &\text{if } Y = -1		
		\end{cases} \notag \\
		&= \textbf{I}\{\big(g(\bX) \leq 0.5,\ Y=+1\big) \text{ or } \big(g(\bX) > 0.5,\ Y=-1\big)\} \notag \\
		&= \textbf{I}\{\big(g(\bX) = \omega_0,\ Y=+1\big) \text{ or } \big(g(\bX)=\omega_1,\ Y=-1\big)\}. \label{eq:int01}
\end{align}
The equivalence of (\ref{eq:int01}) to the 0$-$1 loss (\ref{eq:01loss}) follows by noting that for the Bayes optimal rule for hard classification (\ref{eq:bayesweighted}), predictions of $\hatY = +1,-1$ correspond to $p(\bx) \in \omega_0$ and $p(\bx) \in \omega_1$, respectively. More generally, the weighted 0$-$1 loss (\ref{eq:wloss}) may be similarly recovered up to a multiplicative constant by letting $\boldpi = \{\pi\}$ for any $\pi \in (0,1)$ such that $\intervals = \{\omega_0, \omega_1\} = \{[0,\pi], (\pi,1]\}$, and
\begin{align}
	\ell^Y_{\boldpi} \big(g(\bX))\big) 
		&= 2 \cdot \ell^Y_{\pi} \big(g(\bX))\big) \notag \\
		&= 
		\begin{cases}
			2\cdot (1 - \pi) \cdot \textbf{I}\{g(\bX) \leq \pi_1\} &\text{if } Y = +1 \\
			2\cdot \pi \cdot \textbf{I}\{g(\bX) > \pi_1\} &\text{if } Y = -1		
		\end{cases} \notag \\
		&\propto (1 - \pi) \cdot \textbf{I}\{g(\bX)=\omega_0,\ Y=+1\} \notag \\
			&\ \ \ \ + \pi \cdot \textbf{I}\{g(\bX)=\omega_1,\ Y=-1\}. \label{eq:intw01}
\end{align}
Again, the equivalence of (\ref{eq:intw01}) to the weighted 0$-$1 loss (\ref{eq:wloss}) follows from the form of the Bayes optimal rule for weighted classification (\ref{eq:bayesweighted}).

\subsection{Rejection-Option Classification}
For rejection-option classification as formulated in [12], let $\boldpi = \{\pi, 1-\pi\}$ for some $\pi \in (0,0.5)$, such that $\intervals = \{\omega_0, \omega_1, \omega_2\} = \{[0,\pi], (\pi,1-\pi], (1-\pi,1] \}$. We first rewrite (\ref{eq:general01}) as:
\begin{align*}
	\ell^Y_{\boldpi} \big(g(\bX))\big) 
		&= \frac{2}{2} \sum_{k=1}^2 \ell^Y_{\pi_k} \big(g(\bX)\big) \\
		&= 
		\begin{cases}
			\sum_{k=1}^2 (1-\pi_k) \cdot \textbf{I}\{g(\bX) \leq \pi_k\} &\text{if } Y = +1 \\
			\sum_{k=1}^2 \pi_k \cdot \textbf{I}\{g(\bX) > \pi_k\} &\text{if } Y = -1
		\end{cases} \\
		&=
		\begin{cases}
			(1-\pi) \cdot \textbf{I}\{g(\bX) \leq \pi\} 
				+ \pi \cdot \textbf{I}\{g(\bX) \leq 1-\pi\} &\text{if } Y = +1 \\
			\pi \cdot \textbf{I}\{g(\bX) > \pi\} 
				+ (1-\pi) \cdot \textbf{I}\{g(\bX) > 1-\pi\} &\text{if } Y = -1
		\end{cases} \\
		&=
		\begin{cases}
			(1-\pi) \cdot \textbf{I}\{g(\bX)=\omega_0\} 
				+ \pi \cdot \textbf{I}\{g(\bX)\not=\omega_2\} &\text{if } Y = +1 \\
			\pi \cdot \textbf{I}\{g(\bX) \not=\omega_0\} 
				+ (1-\pi) \cdot \textbf{I}\{g(\bX)=\omega_2\} &\text{if } Y = -1
		\end{cases} \\
		&= 
		\begin{cases}
			1 &\text{if } (g(\bX)=\omega_0,\ Y=+1) \text{ or } (g(\bX)=\omega_2,\ Y=-1) \\
			\pi &\text{if } g(\bX) = \omega_1 \\
			0 &\text{otherwise}
		\end{cases}.
\end{align*}
The equivalence of (\ref{eq:general01}) to the rejection option loss (\ref{eq:rejloss}) is established by noting the correspondence between predictions of $\{\omega_0,\omega_1,\omega_2\}$ and $\hatY_{rej} = \{-1, 0, +1\}$ for the Bayes optimal rejection-option rule (\ref{eq:bayesrejection}).

\subsection{Soft Classification}
Although not traditionally formulated as the minimization of a theoretical loss, the soft classification problem may be derived as the special case of (\ref{eq:general01}) when $K=\infty$ and $\boldpi$ becomes dense on $(0,1)$, such that $\intervals = (0,1)$. The limiting form of (\ref{eq:general01}), which we define as the average of $K$ functions, may be expressed as the following integral:
\begin{align*}
	\ell^Y_{\boldpi}(g(\bX)) 
		&= \lim_{K\rightarrow\infty} \frac{2}{K} \sum_{k=1}^K \ell^Y_{\pi_k} (g(\bX)) \\
		&= 
		\begin{cases}
			2 \int_0^1 (1-\pi) \cdot \textbf{I}\{g(\bX) \leq \pi\} d\pi &\text{if } Y = +1 \\
			2 \int_0^1 \pi \cdot \textbf{I}\{g(\bX) > \pi\} d\pi &\text{if } Y = -1
		\end{cases} \\
		&=
		\begin{cases}
			(1 - g(\bX))^2 &\text{if } Y = +1 \\
			g(\bX)^2 &\text{if } Y = -1
		\end{cases} \\
		&= \big(\textbf{I}\{Y=+1\} - g(\bX)\big)^2.
\end{align*}
Thus, the limiting loss is minimized by the prediction $g(\bX) = \mathbb{E}_{Y|\bX}\{ \textbf{I}\{Y=+1\}\} = p(\bX)$, corresponding to the conditional class probability estimation task of soft classification.
%

\section{Proof of Theorem~\ref{thm:bayesrule}}
Let $\boldsymbol{\pi} = \{\pi_1,\ldots,\pi_K\}$ for some $K\geq1$ such that $0<\pi_1<\ldots<\pi_K<1$. Furthermore, let $h \in \{0,\ldots,K\}$ denote the index for some predicted $\omega_h \in \intervals$. Then, 
\begin{align*}
	\mathbb{E}_{Y|\bX} \big\{\ebpi^Y (\omega_h) \big\} 
		&= p(\bX) \cdot \ebpi^+(\omega_h) 
			+ (1-p(\bX)) \cdot \ebpi^-(\omega_h). \\
		&\propto p(\bX) \sum_{k=h+1}^{K+1} (1-\pi_k) 
			+ (1-p(\bX))\sum_{k=1}^{h} \pi_k,
\end{align*}
Letting $\pi_0=0$, $\pi_{K+1}=1$, we can express the above as:
\begin{align*}
	\mathbb{E}_{Y|\bX} \big\{\ebpi^Y (\omega_h) \big\} 
		&= \sum_{k=0}^{K+1} \Big\{ p(\bX)(1-\pi_k)\cdot \textbf{I}_{\{k>h\}} +
			\pi_k(1-p(\bX))\cdot \textbf{I}_{\{k \leq h\}} \Big\}.
\end{align*}
The sum is minimized by choosing $h$ such that $p(\bX)(1-\pi_k) \geq \pi_k(1-p(\bX))$ for all $k\leq h$ and $p(\bX)(1-\pi_k) \leq \pi_k(1-p(\bX))$ for all $k>h$. Thus, the optimal solution is given by $h^* = \argmax{k}\{ \pi_k < p(\bX) \}$. The equivalence between $\omega_{h^*}$ and $\sum_{k=0}^K \omega_k \cdot \textbf{I}\{p(\bX) \in \omega_k\}$ is immediate from the fact that $p(\bX) \in (\pi_{h^*}, \pi_{h^*+1}] = \omega_{h^*}$, and the additional assumption that $p(\bX) \not= \pi_k$ a.s. for all $k$.

\section{Proof of Theorem~\ref{thm:consistency}}
Let $\boldpi$ and $\bolddelta$ be appropriately defined boundaries in $(0, 1)$ and $\bbR$. Note that surrogate losses, $\surrplus$, $\surrminus$ are consistent for boundaries at $\boldpi$ with $\bolddelta$, i.e. $\boldpi,\bolddelta$-consistent, if and only if they are $\pi_k,\delta_k$-consistent for each $k$ separately. Thus, conditions for $\boldpi,\bolddelta$-consistency are simply the union of the conditions for $\pi_k,\delta_k$-consistency. Necessary and sufficient conditions for $\surrogate^Y$ to be $\pi_k,\delta_k$-consistent were provided by Theorem~1 of [12].

\section{Proof of Theorem~\ref{thm:piecewise}}
Let $\boldpi$ be an appropriately defined set of boundaries in $(0,1)$. Assume $\varsurrogate^Y$ to be defined as in (\ref{eq:varsurrplus}) and (\ref{eq:varsurrminus}) such that (C1)--(C3) are satisfied. We wish to show that for all $\pi_k \in \boldpi$, there exists some $\delta_k$ such that (\ref{eq:thmconsistency}) is satisfied, and furthermore, that there does not exist any $\delta$ such that (\ref{eq:thmconsistency}) is satisfied for $\pi\in(0,1)\setminus\boldpi$. Equivalently, we wish to show that $\varsurrogate^{-\prime}(x)/(\varsurrogate^{-\prime}(x)+\varsurrogate^{+\prime}(x))$ only takes values in $\boldpi$ over the set of $x$ such that $\varsurrogate^{-\prime}(x)<0$ and $\varsurrogate^{+\prime}(x)<0$ are defined. Note that $\varsurrogate^{+\prime}$ and $\varsurrogate^{-\prime}$ are only undefined at the hinge points, $H^Y(\pi_k,\pi_{k+1})$, $A^-(\pi_1)/B^-(\pi_1)$, and $A^+(\pi_K)/B^+(\pi_K)$. By (C2), the set of possible $\varsurrogate^{+\prime}$, $\varsurrogate^{-\prime}$ pairs are given by:
\begin{center}
\begin{tabular}{c r r r r r}
	\hline
	$\varsurrogate^{+\prime}$: & $B^+(\pi_1)$ & $B^+(\pi_1)$ & $\cdots$ & $B^+(\pi_K)$ & $0$ \\
	\hline
	$\varsurrogate^{-\prime}$: & $0$ & $B^-(\pi_1)$ & $\cdots$ & $B^-(\pi_K)$ & $B^-(\pi_K)$ \\
	\hline
\end{tabular}
\end{center}
Excluding the cases when $\varsurrogate^{+\prime}(x)=0$ or $\varsurrogate^{-\prime}(x)=0$, the set of possible consistent boundaries satisfying (\ref{eq:thmconsistency}) are given by:
\begin{align*}
	\frac{\varsurrogate^{-\prime}(x)}{\varsurrogate^{-\prime}(x)+\varsurrogate^{+\prime}(x)} 
		&= \frac{B^{-\prime}(\pi_k)}{B^{-\prime}(\pi_k) + B^{+\prime}(\pi_k)} = \pi_k \ \ \ \ \ \text{for } k=1,\ldots,K,
\end{align*}
where the final equality is given by (C3).

\section{Proof of Proposition~\ref{prop:logisticpiece}}
Let $\boldpi$ be an appropriately defined set of boundaries in $(0,1)$. We wish to show that (C1)--(C3) of Theorem~\ref{thm:piecewise} are satisfied for $A^+(\pi) = A^-(1-\pi) = -\pi\log(\pi) - (1-\pi)\log(1-\pi)$, and $B^+(\pi) = B^-(1-\pi) = -(1-\pi)$.

Trivially, (C1) is satisfied, as $B^+(\pi) = \pi-1$ and $B^-(\pi) = -\pi$ are non-decreasing and non-increasing, respectively, in $\pi$. To show that (C2) is satisfied, we derive the hinge points for the positive and negative class losses:
\begin{align*}
	H^+(\pi, \pi^\prime) 
		&= \frac{A^+(\pi) - A^+(\pi^\prime)}{B^+(\pi^\prime) - B^+(\pi)}
		= \frac{A^+(\pi) - A^+(\pi^\prime)}{\pi^\prime - \pi} \\
	H^-(\pi, \pi^\prime)
		&= \frac{A^-(\pi) - A^-(\pi^\prime)}{B^-(\pi^\prime) - B^-(\pi)}
		= -\frac{A^+(\pi) - A^+(\pi^\prime)}{\pi^\prime - \pi},
\end{align*}
where the final equality is obtained by noting $A^+(\pi) = A^+(1-\pi)$. The first equality of (C2) is clearly satisfied by the above derivations. We next show that the remaining three inequalities of (C2) are also satisfied. Let $k\in \{2,\ldots, K-1\}$. By the concavity of $A^+(\pi)$:
\begin{align*}
	H^+(\pi_{k-1}, \pi_{k})
		&= \frac{A^+(\pi_{k-1}) - A^+(\pi_{k})}{\pi_{k} - \pi_{k-1}} \\
		&= - \frac{A^+(\pi_{k}) - A^+(\pi_{k-1})}{\pi_{k} - \pi_{k-1}} \\
		&< -  (A^+)^\prime (\pi_k) \\
		&< - \frac{A^+(\pi_{k+1}) - A^+(\pi_{k})}{\pi_{k+1} - \pi_{k}} = H^+(\pi_{k}, \pi_{k+1}),
\end{align*}
Similarly, by the convexity of $A^-(\pi)$ and the fact that $\lim_{\pi\rightarrow0} A^+(\pi) = \lim_{\pi\rightarrow 1} A^+(\pi) = 0$, we have: 
\begin{align*}
	\frac{A^-(\pi_1)}{B^-(\pi_1)}
		&= -\frac{A^-(\pi_1) - \lim_{\pi\rightarrow0} A^-(\pi)}{\pi_1 - 0} \\
		&< -(A^-)^\prime (\pi_1) \\
		&< -\frac{A^-(\pi_{1}) - A^-(\pi_{2})}{\pi_{1} - \pi_{2}}
		= H^+(\pi_1, \pi_2) \\
	\frac{A^+(\pi_K)}{B^+(\pi_K)}
		&= -\frac{A^+(\pi_K) - \lim_{\pi\rightarrow1} A^+(\pi)}{\pi_K-1} \\
		&> -(A^+)^\prime (\pi_K) \\
		&> -\frac{A^+(\pi_{K-1}) - A^+(\pi_{K})}{\pi_{K-1} - \pi_{K}}
		= H^-(\pi_{K-1}, \pi_{K}).
\end{align*}
Thus (C2) is satisfied. Finally, (C3) holds, since for any $k = 1,\ldots, K$:
\begin{align*}
	\frac{B^-(\pi_k)}{B^-(\pi_k)+B^+(\pi_k)} 
		&= \frac{-\pi_k}{-\pi_k - (1-\pi_k)}
		= \pi_k.
\end{align*}

\section{Proof of Theorem~\ref{thm:bound}}
\label{A:proof-thm-bound}
Let $\phi^Y$ be a consistent surrogate loss for appropriately defined boundaries $\boldpi$ in $(0,1)$ at $\bolddelta$. First, note that the excess condition $\phi$-risk for a rule $g \in \mathcal{G}$ may be written as:
\begin{align*}
	R_p(g(\bx))
		&= \frac{2}{K} \Big[ (1-p(\bx))\sum_k \pi_k \textbf{I}\{g(\bx)>\pi_k\} +
					p(\bx)\sum_k (1-\pi_k) \textbf{I}\{g(\bx)\leq\pi_k\} \Big].
\end{align*}
Consider a candidate rule $g\in\mathcal{G}$, and recall the Bayes optimal rule over $\mathcal{G}$, $\bayesint(\bx)$, defined in Theorem~\ref{thm:bayesrule}. Suppose that $\bx \in \calX$ is such that $g(\bx) > \bayesint(\bx)$. Then, letting 
\begin{align*}
	\mathcal{K} =
	\begin{cases} 
		\{ k : g(\bX) \leq \pi_k < \bayesint(\bX) \}  &\text{if } \bayesint(\bX) > g(\bX) \\
		\{ k : \bayesint(\bX) \leq \pi_k < g(\bX) \}  &\text{if } \bayesint(\bX) < g(\bX) \\
		\emptyset &\text{otherwise}
	\end{cases},
\end{align*}
the excess condition $\phi$-risk may be expressed as:
\begin{align*}					
	\Delta R_p(g)
		&= \frac{2}{K} \Big[ (1-p(\bx))\sum_k \pi_k \textbf{I}\{\pi_k : \bayesint(\bx) \leq \pi_k < g(\bx) \} \\
		&\ \ \ \ \ \ \ \ - p(\bx)\sum_k (1-\pi_k) \textbf{I}\{\pi_k : \bayesint(\bx) \leq \pi_k < g(\bx) \} \Big] \\
		&= \frac{2}{K} \sum_{\mathcal{K}} \big[ (1-p(\bx))\pi_k - p(\bx)(1-\pi_k) \big] \\
		&= \frac{2}{K} \sum_{\mathcal{K}} \big[ \pi_k - p(\bx) \big].
\end{align*}
Similarly, for $g(\bx) < \bayesint(\bx)$, $\Delta R_p(g) = \tfrac{2}{K} \sum_{\mathcal{K}} \big[ p(\bx) - \pi_k \big]$. If $g(\bx) = \bayesint(\bx)$, we have that $\Delta R_p(g) = 0$, such that:
\begin{align*}
	\Delta R_p(g)
		&= \tfrac{2}{K} \sum_{\mathcal{K}} \big| p(\bx) - \pi_k \big|,
\end{align*}
for all $\bx\in\calX$.

By the stated assumptions, for $g(\bx) = C(f(\bx);\bolddelta)\in\mathcal{G}$, we immediately have the following result:
\begin{align*}
	(\Delta R_p(g))^s &=  \Big(\frac{2}{K} \sum_{\mathcal{K}} | p(\bX) - \pi_k | \Big)^s \\
		&\leq \frac{2}{K} \sum_{\mathcal{K}} |p(\bX)-\pi_k|^s \\
		&\leq  \frac{2}{K} C^s \sum_{\mathcal{K}} \Delta Q_p(\delta_k) \\
	\Delta R_p(g) &\leq C \Big(\frac{2}{K} \sum_{\mathcal{K}} \Delta Q_p(\delta_k) \Big)^{1/s}.
\end{align*}
Since $\Delta Q_p\geq0$, it suffices to show that $\sum_{\mathcal{K}} \Delta Q_p(\delta_k) \leq K\cdot\Delta Q_p(f)$. Since $|\mathcal{K}| \leq K$, we complete the proof by showing $\Delta Q_p(f) \geq \Delta Q_p(\delta_k)$ for all $k \in \mathcal{K}$. Without loss of generality, suppose $\bx$ is such that $g(\bx) < \bayesint(\bx)$ and let $k\in\mathcal{K}$. Note that $\pi_k < g(\bx)$ is equivalent to $\delta_k < f(\bx)$. By this fact and the convexity and consistency of $\surrogate^Y$, the following inequalities hold:
\begin{align*}
	\frac{\surrplus(f(\bx)) - \surrplus(\delta_k)}{f(\bx)-\delta_k} \geq \phi^{+\prime}(\delta_k) &&
	\frac{\surrminus(-f(\bx)) - \surrminus(-\delta_k)}{-f(\bx)+\delta_k} \leq \phi^{-\prime}(-\delta_k).
\end{align*}
Thus,
\begin{align*}
	Q_p(f) - Q_p(\delta_k) 
		&= p(\bx)(\surrplus(f(\bx)) - \surrplus(\delta_k)) + (1-p(\bx))(\surrminus(-f(\bx)) - \surrminus(-\delta_k)) \\
		&\geq p(\bx)(f(\bx)-\delta_k)\phi^{+\prime}(\delta_k) - (1-p(\bx))(f(\bx)-\delta_k)\phi^{-\prime}(-\delta_k) \\
		&\geq (f(\bx)-\delta_k)\big\{p(\bx)\big(\phi^{+\prime}(\delta_k) + \phi^{-\prime}(-\delta_k)\big) 
			- \phi^{-\prime}(-\delta_k)\big\} \\
		&\geq (f(\bx)-\delta_k)\big\{p(\bx)\tfrac{\phi^{-\prime}(-\delta_k)}{\pi_k} - \phi^{-\prime}(-\delta_k)\big\} \\
		&\geq (f(\bx)-\delta_k)\ \phi^{-\prime}(-\delta_k)\ (\tfrac{p(\bx)}{\pi_k}-1).
\end{align*}
Since $f(\bx)-\delta_k >0$, $\phi^{-\prime}(-\delta_k)<0$, and $p(\bx)<\pi_k$, $Q_p(f) - Q_p(\delta_k) \geq 0$. The case when $g(\bx) < \bayesint(\bx)$ follows similarly, and the proof is complete.

\section{Proof of Theorem~\ref{thm:bound2}}
Let $\phi^Y$ be a consistent surrogate loss for appropriately defined boundaries $\boldpi$ in $(0,1)$ at $\bolddelta$. Throughout, we use $g = C(f; \bolddelta)$ to denote the corresponding rule in $\mathcal{G}$ for some margin function $f \in \mathcal{F}$. From the proof of Theorem~\ref{thm:bound}, we have that:
\begin{align*}
	\Delta R(g) &= \frac{2}{K} \cdot \mathbb{E} \Big\{ \sum_{\mathcal{K}} |p(\bX) - \pi_k| \Big\} \\
		&= \frac{2}{K} \cdot \mathbb{E} \Big\{ \sum_{k=1}^K |p(\bX) - \pi_k| \cdot \bold{I}\{k\in\mathcal{K}\} \Big\} \\
		&= \frac{2}{K} \cdot \sum_{k=1}^K \mathbb{E} \Big\{ |p(\bX) - \pi_k| \cdot \bold{I}\{k\in\mathcal{K}\} \Big\},
\end{align*}
where $\mathcal{K}$ is defined as in the proof of Theorem~\ref{thm:bound} (Section~\ref{A:proof-thm-bound}). Additionally, note that for fixed $k \in \{1,\ldots, K\}$:
\begin{align*}
	\mathbb{E} \Big\{ |p(\bX) - \pi_k| \cdot \bold{I}\{k\in\mathcal{K}\} \Big\}
		&\geq t \cdot \mathbb{P} \big\{ (k\in\mathcal{K}) \cap |p(\bX) - \pi_k| > t \big\} \\
		&= t \cdot \mathbb{P} \big\{ |p(\bX) - \pi_k| > t \big\} 
			- t \cdot \mathbb{P} \big\{ (k\not\in\mathcal{K}) \cap |p(\bX) - \pi_k|>t \big\} \\
		&\geq t \cdot (1- At^\alpha) - t \cdot \mathbb{P}\{ k \not\in\mathcal{K} \} \\
		&= t \cdot \big( \mathbb{P}\{k\in\mathcal{K}\} - At^\alpha \big).
\end{align*}
Combining the above inequalities, we have:
\begin{align*}
	\Delta R(g) 
		&\geq \frac{2t}{K}\cdot \Big( \sum_{k=1}^K \mathbb{P}\{k\in\mathcal{K}\} - KAt^\alpha \Big) \\
		&\geq \frac{2t}{K}\cdot \Big( \mathbb{P}\{f \not= f^*\} - KAt^\alpha \Big).
\end{align*}
Letting $t = (\tfrac{\mathbb{P}\{f\not=f^*\}}{2KA})^{1/\alpha}$ and using $\beta$ to denote $\alpha/(1+\alpha)$,
\begin{align*}
	\Delta R(g)
		&\geq \frac{2}{K}\cdot\Big( \frac{\mathbb{P}\{f\not=f^*\}}{2KA} \Big)^{1/\alpha} 
			\cdot \Big( \frac{\mathbb{P}\{f\not=f^*\}}{2} \Big) \\
		&= \frac{\mathbb{P}\{f\not=f^*\}^{(1+\alpha)/\alpha}}{(2A)^{1/\alpha}K^{(1+\alpha)/\alpha} } \\
	\frac{\mathbb{P}\{f\not=f^*\}}{K}
		&\leq \big( (2A)^{1/\alpha} \Delta R(g) \big)^\beta.
\end{align*}
Now consider,
\begin{align*}
	\Delta R(g) 
		&= \frac{2}{K}\cdot \sum_{k=1}^K \mathbb{E} \big( |p(\bX) - \pi_k| \cdot \bold{I}\{k\in\mathcal{K}\} \big) \\
		&= \frac{2}{K}\cdot \sum_{k=1}^K \mathbb{E} \big( |p(\bX) - \pi_k| \cdot \bold{I}\{k\in\mathcal{K}\} 
				\cdot \bold{I}\{|p(\bX) - \pi_k| > \epsilon\} \big) \\
			&\ \ \ + \frac{2}{K}\cdot\sum_{k=1}^K \mathbb{E} \big( |p(\bX) - \pi_k| \cdot \bold{I}\{k\in\mathcal{K}\} 
				\cdot \bold{I}\{|p(\bX) - \pi_k| \leq \epsilon\} \big).
\end{align*}
Using the inequality: $|x|\cdot\bold{I}\{|x|\geq\epsilon\} \leq |x|^s \cdot \epsilon^{1-s}$ for $s\geq 1$, we have:
\begin{align*}
	\Delta R(g)
		&\leq \frac{2}{K}\cdot \sum_{k=1}^K \mathbb{E} \big( |p(\bX) - \pi_k|^s \cdot \epsilon^{1-s} \cdot %
				\bold{I}\{k\in\mathcal{K}\} \big) \\
		&\ \ \ + \frac{2\epsilon}{K}\cdot \sum_{k=1}^K \mathbb{P} \{k\in\mathcal{K}\}.
\end{align*}
From the proof of Thoerem~\ref{thm:bound} (Section~\ref{A:proof-thm-bound}), $|p(\bX) - \pi_k|^s \leq C^s \Delta Q_p(\delta_k) \leq C^s \Delta Q_p(f)$ for $k\in\mathcal{K}$. Therefore,
\begin{align*}
	\Delta R(g)
		&\leq 2\epsilon^{1-s} C^s \Delta Q(f) + \frac{2\epsilon}{K} \cdot \mathbb{P}\{f\not=f^*\}.
\end{align*}
Combining with the previous bound on $\mathbb{P}\{f\not=f^*\}$,
\begin{align*}
	\Delta R(g)
		&\leq 2\epsilon^{1-s} C^s \Delta Q(f) + 2\epsilon \cdot \big((2A)^{1/\alpha} \Delta R(g) \big)^\beta
\end{align*}
Further choosing $\epsilon = \Delta R(g)^{1-\beta}$,
\begin{align*}
	\Delta R(g)
		&\leq 2\Delta R(g)^{(1-\beta)(1-s)} C^s \Delta Q(f) + 2 (2A)^{1/\alpha} \Delta R(g) \\
	(1-2(2A)^{1/\alpha}) \Delta R(g)^{s+\beta-s\beta}
		&\leq 2 C^s \Delta Q(f) \\
	\Delta R(g)
		&\leq \Big( \frac{2 C^s}{1-2(2A)^{1/\alpha}} \Big)^{1/(s+\beta-s\beta)} \cdot \Delta Q(f)^{1/(s+\beta-s\beta)}
\end{align*}
Letting $D$ denote the exponentiated fraction on the right of the inequality,
\begin{align*}
	\Delta R(g) &\leq D \cdot \Delta Q(f)^{1/(s+\beta-s\beta)}.
\end{align*}
%

\section{Proof of Corollary~\ref{thm:PLbounds}}
Let $\varsurrogate^Y$ be a minimally consistent piecewise linear surrogate loss for appropriately defined boundaries $\boldpi$ in $(0,1)$ at $\bolddelta$. The $\varsurrogate^Y$-optimal margin function, denoted by $f^*_{\varsurrogate}$, is given by:
\begin{align*}
	f_{\varsurrogate}^* (\bX)
		&= \argmin{f} \mathbb{E}_{Y|\bX} \{\varsurrogate^Y(Yf(\bX)) \} \\
		&= \argmin{f} \big\{p(\bX) \varsurrplus(f(\bX)) + (1-p(\bX)) \varsurrminus(-f(\bX)) \big\} \\
		&=
		\begin{cases}
			A^-(\pi_1) / B^-(\pi_1) &\text{if } p(\bX) \in [0, \pi_1) \\
			H^+(\pi_1, \pi_2) &\text{if } p(\bX) \in  (\pi_1, \pi_2] \\
			\cdots \\
			H^+(\pi_{K-1}, \pi_{K}) &\text{if } p(\bX) \in (\pi_{K-1}, \pi_K] \\
			A^+(\pi_K) / B^+(\pi_{K}) &\text{if } p(\bX) \in (\pi_K, 1]
		\end{cases}.
\end{align*}
For any $k \in \{1,\ldots,K\}$,
\begin{align*}
	\Delta Q_p(\delta_k)
		&= Q_p(\delta_k) - Q_p(f_{\varsurrogate}^*(\bX)) \\
		&= p(\bX)(\varsurrplus(\delta_k) - \varsurrplus(f_{\varsurrogate}^*(\bX))) 
			+ (1-p(\bX))(\varsurrminus(-\delta_k) - \varsurrminus(-f_{\varsurrogate}^*(\bX))) \\
		&= p(\bX) B^+(\pi_k) (\delta_k - f_{\varsurrogate}^*(\bX)) 
			- (1-p(\bX)) B^-(\pi_k) (\delta_k - f_{\varsurrogate}^*(\bX)) \\
		&= p(\bX) (B^+(\pi_k) + B^-(\pi_k)) (\delta_k - f_{\varsurrogate}^*(\bX))
			- B^-(\pi_k) (\delta_k - f_{\varsurrogate}^*(\bX)) \\
		&= p(\bX) (B^-(\pi_k)\cdot \pi_k^{-1}) (\delta_k - f_{\varsurrogate}^*(\bX))
			- B^-(\pi_k) (\delta_k - f_{\varsurrogate}^*(\bX)) \\
		&= B^-(\pi_k) \cdot \pi_k^{-1} \cdot (p(\bX) - \pi_k) (\delta_k - f_{\varsurrogate}^*(\bX)).
\end{align*}
Since $f_{\varsurrogate}^*(\bX) > \delta_k$ when $p(\bX) > \pi_k$, and similarly $f_{\varsurrogate}^*(\bX) < \delta_k$ when $p(\bX) < \pi_k$, $(p(\bX) - \pi_k)(\delta_k - f_{\varsurrogate}^*(\bX)) \leq 0$ must always hold. Therefore,
\begin{align*}
	\Delta Q_p(\delta_k)
		&= - \frac{B^-(\pi_k) \cdot |\delta_k - f_{\varsurrogate}^*(\bX)|}{\pi_k} \cdot |p(\bX) - \pi_k| \\
		&\geq C^{-1} \cdot |p(\bX) - \pi_k| ,
\end{align*}
where $C = \max \left\{ -\frac{\pi_k}{B^-(\pi_k) \cdot |\delta_k - H_j|} : k=1,\ldots,K;\ j=0,\ldots,K\right\} > 0$. Letting $s = 1$, the desired bound is achieved.
%

\section{Proof of Theorem~\ref{thm:piececonv}}
Let $\varsurrogate^Y$ be a minimally consistent piecewise linear surrogate loss for appropriately defined boundaries $\boldpi$ in $(0,1)$ at $\bolddelta$. We first show that $\mathcal{H} = \{h_f (\bx, y) = \varsurrogate^y(yf(\bx)) - \varsurrogate^y(yf_{\varsurrogate}^*(\bx)) : f\in\mathcal{F}\}$ is a Bernstein class of functions, i.e. that there exists some $B>1$, $\beta\in(0,1]$ such that:
\begin{align*}
 	\mathbb{E} \{h_f (\bX, Y)^2\} \leq B\cdot \mathbb{E} \{h_f (\bX, Y)\}^\beta.
\end{align*}
Then, given that $h_f$ is a Bernstein class, we complete the proof by obtaining a tail bound on $\mathbb{E} h_f (\bX, Y) - 2 \tfrac{1}{n} \sum_i h_f (\bx_i, y_i)$. Following the approach of [11], to derive the Bernstein property of $h_f$, we first show that $\Delta Q_p(f)$ can be bounded below by a pseudo-norm between $f$ and $f_{\varsurrogate}^*$, denoted $\rho_{\bX}(f,f_{\varsurrogate}^*)$. Then, we show that $\mathbb{E}\{h_f (\bX, Y)^2\}$ can bounded above by $\mathbb{E}\{\rho_{\bX}(f,f_{\varsurrogate}^*)\}$, and combine the two results to show the Bernstein property of $h_f$. Let $\rho_{\bX}(f, f_{\varsurrogate}^*)$ be defined as:
\begin{align*}
	\rho_{\bX}(f, f_{\varsurrogate}^*) = 
	\begin{cases}
		p(\bX) |f - f_{\varsurrogate}^*| &\text{if } p(\bX) < \pi_1,\ f<H_0 \\
		(1-p(\bX)) |f - f_{\varsurrogate}^*| &\text{if } p(\bX) > \pi_K,\ f>H_{K} \\
		|f - f_{\varsurrogate}^*| &\text{otherwise}
	\end{cases}.
\end{align*}
%

%
\begin{lemma} \label{lemma:rhoL1}
For $p(\bX)\in [0,1]$,
	\[
		\Delta Q_p(f) 
			\geq D^* \cdot \min\{ |p(\bX)-\pi_1|, |p(\bX) - \pi_K|, (1-\pi_1), \pi_K \} 
				\cdot \rho_{\bX}(f,f_{\varsurrogate}^*).
	\]
\end{lemma}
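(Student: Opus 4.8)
The plan is to exploit that the conditional surrogate risk $Q_p(f) = p(\bX)\,\varsurrplus(f) + (1-p(\bX))\,\varsurrminus(-f)$ is a \emph{convex, piecewise linear} function of the scalar $f$, minimised at the explicit point $f^*_{\varsurrogate}$ already computed in the proof of Corollary~\ref{thm:PLbounds}. For any convex $g$ with minimiser $x^*$ one has $g(x)-g(x^*)\geq g'_+(x^*)(x-x^*)$ for $x>x^*$ and $g(x)-g(x^*)\geq |g'_-(x^*)|(x^*-x)$ for $x<x^*$, since the one-sided derivative of a convex function is monotone. Applying this to $Q_p$ reduces the lemma to lower-bounding the two one-sided slopes of $Q_p$ at $f^*_{\varsurrogate}$ and comparing them, after the reweighting built into $\rho_{\bX}(f,f^*_{\varsurrogate})$, against the factor $m(p):=\min_k|p-\pi_k|\wedge(1-\pi_1)\wedge\pi_K$, the (capped) distance from $p$ to the nearest boundary, which in the extreme cells specialises to the displayed terms $|p-\pi_1|$ and $|p-\pi_K|$.

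First I would read off the slope of $Q_p$ on each linear piece. By the hinge-alignment condition (C2) of Theorem~\ref{thm:piecewise}, on the segment $(H_{k-1},H_k)$ both $\varsurrplus$ and $\varsurrminus(-\cdot)$ sit on their $\pi_k$-consistent pieces, so $Q_p'(f)=p\,B^+(\pi_k)-(1-p)\,B^-(\pi_k)$; condition (C3) gives $\pi_k B^+(\pi_k)=(1-\pi_k)B^-(\pi_k)$, whence $Q_p'(f)=(p-\pi_k)\big(B^+(\pi_k)+B^-(\pi_k)\big)$. Since $B^+(\pi_k)+B^-(\pi_k)<0$, this slope is positive exactly when $p<\pi_k$, so $Q_p$ falls through the pieces with $\pi_k<p$ and rises thereafter, which both re-confirms the form of $f^*_{\varsurrogate}$ and shows that each one-sided slope at $f^*_{\varsurrogate}$ has magnitude $|p-\pi_k|\cdot|B^+(\pi_k)+B^-(\pi_k)|$ for the adjacent boundary $\pi_k$. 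Bounding $|p-\pi_k|\geq m(p)$ for the neighbouring boundaries and $|B^+(\pi_k)+B^-(\pi_k)|$ below by a positive constant then yields the claimed factor on the two interior sides, where $\rho_{\bX}=|f-f^*_{\varsurrogate}|$.

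The delicate part is the two unbounded tails. For $f<H_0$ we have $\varsurrminus(-f)\equiv 0$, so $Q_p'(f)=p\,B^+(\pi_1)$, whose magnitude $p\,|B^+(\pi_1)|$ \emph{vanishes} as $p\to0$; symmetrically, for $f>H_K$ the slope magnitude $(1-p)\,|B^-(\pi_K)|$ vanishes as $p\to1$. This is precisely why $\rho_{\bX}$ carries the extra factor $p$ on $\{p<\pi_1,\,f<H_0\}$ and $(1-p)$ on $\{p>\pi_K,\,f>H_K\}$: after absorbing that factor into $\rho_{\bX}$, the surviving bound is the clean, $p$-free constant $|B^+(\pi_1)|$ (respectively $|B^-(\pi_K)|$) times $\rho_{\bX}$. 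For the logistic-derived loss of Proposition~\ref{prop:logisticpiece} these evaluate to $1-\pi_1$ and $\pi_K$, exactly the two constant terms inside the minimum; since $m(p)$ is capped below them, the tail bounds can be written uniformly with the same $m(p)$ prefactor as the interior ones.

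Finally I would collect the cases: for every position of $f$ relative to $f^*_{\varsurrogate}$, $\Delta Q_p(f)$ dominates either $|p-\pi_k|\cdot b_k\cdot|f-f^*_{\varsurrogate}|$ for a neighbouring boundary (interior) with $b_k=|B^+(\pi_k)+B^-(\pi_k)|$, or a constant times the reweighted $\rho_{\bX}$ (tails); taking $D^*$ to be the minimum of the constants $b_k$ together with $1$ produces the single inequality. I expect the main obstacle to be the bookkeeping of the active linear pieces of $\varsurrplus$ and $\varsurrminus(-\cdot)$ on each side of $f^*_{\varsurrogate}$ for every location of $p$ among $\pi_1<\cdots<\pi_K$, together with verifying that the $\rho_{\bX}$-reweighting in the two tails cancels the degenerate $p$ and $1-p$ factors so that a strictly positive, $p$-independent $D^*$ survives; reconciling the nearest-boundary quantity that the argument naturally produces with the boundary-distance terms displayed in the stated minimum is the crux of that step.
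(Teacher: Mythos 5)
Your argument follows essentially the same route as the paper's own proof: the one-sided-derivative (subgradient) inequality for the convex piecewise-linear $Q_p$ at its hinge minimizer $f_{\varsurrogate}^*$, the slope identity $p\,B^+(\pi_k)-(1-p)\,B^-(\pi_k)=(p-\pi_k)\bigl(B^+(\pi_k)+B^-(\pi_k)\bigr)$ obtained from the (C2) hinge alignment and (C3), and the absorption of the degenerate factors $p(\bX)$ and $1-p(\bX)$ into $\rho_{\bX}$ on the two tail cells. The one place you deviate is exactly the point you flag as the unresolved crux, and it deserves a clear verdict: it cannot be resolved, because the lemma as displayed is false whenever $K\geq3$. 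What the subgradient argument produces---and what you state---is the nearest-boundary bound $\Delta Q_p(f)\geq D^*\cdot\min\bigl\{\min_k|p(\bX)-\pi_k|,\,1-\pi_1,\,\pi_K\bigr\}\cdot\rho_{\bX}(f,f_{\varsurrogate}^*)$; the displayed statement, which keeps only $|p(\bX)-\pi_1|$ and $|p(\bX)-\pi_K|$, is strictly stronger and wrong. Concretely, take the logistic-derived loss of Proposition~\ref{prop:logisticpiece} with $\boldpi=\{0.2,0.5,0.8\}$, so that $B^+(\pi_k)+B^-(\pi_k)=-1$ for every $k$ and $D^*=1$, and let $p(\bX)=0.49$. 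Then $f_{\varsurrogate}^*=H^+(\pi_1,\pi_2)$, the point lies in the ``otherwise'' branch of $\rho_{\bX}$ so $\rho_{\bX}=|f-f_{\varsurrogate}^*|$, and for any $f$ on the $\pi_2$-consistent segment one computes exactly $\Delta Q_p(f)=|p(\bX)-\pi_2|\,|f-f_{\varsurrogate}^*|=0.01\,|f-f_{\varsurrogate}^*|$, whereas the displayed lower bound is $\min\{0.29,0.31,0.8,0.8\}\,|f-f_{\varsurrogate}^*|=0.29\,|f-f_{\varsurrogate}^*|$. The paper's proof slips precisely where you hesitated: after listing the one-sided slopes it invokes (C1) to claim that the interior slope magnitudes $|p-\pi_k|\,|B^+(\pi_k)+B^-(\pi_k)|$ dominate the corresponding quantities at $\pi_1$ and $\pi_K$; one of the two displayed monotonicity inequalities there is in fact reversed, and no such reduction can hold, since $|p-\pi_k|$ vanishes as $p\to\pi_k$ for an interior $k$ while $|p-\pi_1|$ and $|p-\pi_K|$ stay bounded away from zero.

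Two smaller remarks. First, your tail analysis leans on the logistic values $|B^+(\pi_1)|=1-\pi_1$ and $|B^-(\pi_K)|=\pi_K$; for a general minimally consistent loss the same conclusion follows from (C3) alone, which gives $B^+(\pi_1)=(1-\pi_1)\bigl(B^+(\pi_1)+B^-(\pi_1)\bigr)$ and hence $|B^+(\pi_1)|\geq D^*(1-\pi_1)$, and symmetrically $|B^-(\pi_K)|\geq D^*\pi_K$, so no specialization is needed. Second, your weaker (correct) version is all that is required downstream: in the proof of Lemma~\ref{lemma:Bernstein} the minimum is split over events according to which term attains it, and the margin condition (\ref{eq:margincondition}) is assumed at every $\pi_k$, $k=1,\ldots,K$, so the identical argument runs with $K+2$ events instead of four, changing only constants (now depending on $K$). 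Your statement, with the minimum taken over all boundaries, is the one that should replace the lemma.
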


\begin{proof}
Since $Q_p(f)$ is convex, $Q_p(f) \geq Q_p(f_{\varsurrogate}^*) + r \cdot (f - f_{\varsurrogate}^*)$ for any subgradient, $r$, of $Q_p(\cdot)$ at $f_{\varsurrogate}^*$. Since $\varsurrogate^Y$ is piecewise linear, and $f_{\varsurrogate}^*$ is as defined above, the set of subgradients are given by:
\begin{align*}
	r = 
	\begin{cases}
		\!\begin{aligned}[c]
			& p(\bX) B^+(\pi_1) \\
			& \ \ \ \ \text{ and }\ \ p(\bX)B^+(\pi_1) + (1-p(\bX))B^-(\pi_1) 
		\end{aligned}
			&\text{for } f^*_{\varsurrogate} = H_0 \\
		\!\begin{aligned}[c]
			& p(\bX)B^+(\pi_1) + (1-p(\bX))B^-(\pi_1) \\
			& \ \ \ \ \text{ and }\ \ p(\bX)B^+(\pi_1) + (1-p(\bX))B^-(\pi_1) 
		\end{aligned}
			&\text{for } f^*_{\varsurrogate} = H_1,\ldots, H_{K-1} \\
		\!\begin{aligned}[c]
			& (1-p(\bX)) B^-(\pi_K) \\
			& \ \ \ \ \text{ and }\ \ p(\bX)B^+(\pi_K) + (1-p(\bX))B^-(\pi_K) 
		\end{aligned}
			&\text{for } f^*_{\varsurrogate} = H_{K}
	\end{cases}.
\end{align*}
Therefore, 
\begin{align*}
	Q_p(f)
		&\geq Q_p(f_{\varsurrogate}^*) + r \cdot (f - f_{\varsurrogate}^*) \\
	\Delta Q_p(f) 
		&\geq r \cdot (f - f_{\varsurrogate}^*) \\
		&\geq
		\begin{cases}
			\big(p(\bX) B^+(\pi_1)\big) \cdot \big(f - f_{\varsurrogate}^* \big)
				&\text{if } p(\bX) < \pi_1,\ f < H_0 \\
			\big(p(\bX) B^+(\pi_1) - (1-p(\bX))B^-(\pi_1)\big) \cdot \big(f - f_{\varsurrogate}^* \big)
				&\text{if } p(\bX) < \pi_1,\ f > H_0 \\
			\big(p(\bX) B^+(\pi_{k}) - (1-p(\bX))B^-(\pi_{k})\big) \cdot \big(f - f_{\varsurrogate}^* \big)
				&\text{if } p(\bX) \in [\pi_k, \pi_{k+1}),\ f < H_k \\
			\big(p(\bX) B^+(\pi_{k+1}) - (1-p(\bX))B^-(\pi_{k+1})\big) \cdot \big(f - f_{\varsurrogate}^* \big)
				&\text{if } p(\bX) \in [\pi_k, \pi_{k+1}),\ f > H_k \\
			\big(p(\bX) B^+(\pi_K) - (1-p(\bX))B^-(\pi_K)\big) \cdot \big(f - f_{\varsurrogate}^* \big)
				&\text{if } p(\bX) > \pi_K,\ f < H_K \\
			(1-p(\bX)) B^-(\pi_K) \cdot \big(f - f_{\varsurrogate}^* \big)
				&\text{if } p(\bX) > \pi_K,\ f > H_K
		\end{cases}.
\end{align*}
Since by definition, $B^+(\pi_1) \leq B^+(\pi_2) \leq \cdots \leq B^+(\pi_K)$ and $B^-(\pi_1) \geq B^-(\pi_2) \geq \cdots \geq B^-(\pi_K)$, we have:
\begin{align*}
	-p(\bX) B^+(\pi_{k}) + (1-p(\bX)) B^-(\pi_{k}) 
		&\geq -p(\bX) B^+(\pi_{K}) + (1-p(\bX)) B^-(\pi_{K}) \\
	p(\bX) B^+ (\pi_{k+1}) - (1-p(\bX)) B^-(\pi_{k+1})
		&\leq p(\bX) B^+ (\pi_{1}) - (1-p(\bX)) B^-(\pi_{1}).
\end{align*}
Therefore, the bound on $\Delta Q_p(f)$ may be rewritten as:
\begin{align*}
	\Delta Q_p(f) 
		&\geq
		\begin{cases}
			\big|p(\bX) B^+(\pi_1)\big| \cdot \big|f - f_{\varsurrogate}^* \big|
				&\text{if } p(\bX) < \pi_1,\ f < H_0 \\
			\big|p(\bX) B^+(\pi_1) - (1-p(\bX))B^-(\pi_1)\big| \cdot \big|f - f_{\varsurrogate}^* \big|
				&\text{if } p(\bX) < \pi_1,\ f > H_0 \\
			\big|p(\bX) B^+(\pi_K) - (1-p(\bX))B^-(\pi_K)\big| \cdot \big|f - f_{\varsurrogate}^* \big|
				&\text{if } p(\bX) > \pi_K,\ f < H_K \\
			\big|(1-p(\bX)) B^-(\pi_K)\big| \cdot \big|f - f_{\varsurrogate}^* \big|
				&\text{if } p(\bX) > \pi_K,\ f > H_K \\
			\!\begin{aligned}[c]
				& \min\big\{ \big|p(\bX) B^+(\pi_{1}) - (1-p(\bX))B^-(\pi_{1})\big|, \\
				& \ \ \ \ \big|p(\bX) B^+(\pi_{K}) - (1-p(\bX))B^-(\pi_{K})\big| \big\}
				\cdot \big|f - f_{\varsurrogate}^* \big|
			\end{aligned}
				&\text{otherwise}
		\end{cases}.
\end{align*}
By the consistency of $\varsurrogate^Y$, $B^-(\pi_k)/(B^+(\pi_k) + B^-(\pi_k)) = \pi_k$ for all $k$. Thus, letting $D^* = \min_{k=1,\ldots,K} \{ | B^+(\pi_k) + B^-(\pi_k)| \} >0$, $p(\bX) B^+(\pi_k) - (1-p(\bX))B^-(\pi_k) = (p(\bX)-\pi_k) (B^+(\pi_k) + B^-(\pi_k)) \geq D^* \cdot |p(\bX)-\pi_k|$. Therefore, 
\begin{align*}
	\Delta Q_p(f) 
		&\geq
		\begin{cases}
			\big|B^+(\pi_1)\big| \cdot \rho_{\bX}(f, f_{\varsurrogate}^*)
				&\text{if } p(\bX) < \pi_1,\ f < H_0 \\
			D^* \cdot |p(\bX)-\pi_1| \cdot \rho_{\bX}(f, f_{\varsurrogate}^*)
				&\text{if } p(\bX) < \pi_1,\ f > H_0 \\
			D^* \cdot |p(\bX)-\pi_K| \cdot \rho_{\bX}(f, f_{\varsurrogate}^*)
				&\text{if } p(\bX) > \pi_K,\ f < H_K \\
			\big|B^-(\pi_K)\big| \cdot \rho_{\bX}(f, f_{\varsurrogate}^*)
				&\text{if } p(\bX) > \pi_K,\ f > H_K \\
			\!\begin{aligned}[c]
				& D^* \cdot \min\big\{|p(\bX)-\pi_1|, |p(\bX)-\pi_K|\big\}
				\cdot \rho_{\bX}(f, f_{\varsurrogate}^*)
			\end{aligned}
				&\text{otherwise}
		\end{cases}.
\end{align*}
Since $|B^+(\pi_1)| \geq D^* \cdot (1-\pi_1)$, $|B^-(\pi_K)| \geq D^* \cdot \pi_K$, we have for $p(\bX)\in [0,1]$:
\begin{align*}
	\Delta Q_p(f) 
		\geq D^* \cdot \min\big\{ |p(\bX)-\pi_1|, |p(\bX) - \pi_K|, (1-\pi_1), \pi_K \big\} 
			\cdot \rho_{\bX}(f,f_{\varsurrogate}^*).
\end{align*}
\end{proof}

%
\begin{lemma} \label{lemma:rhoL2}
If $|f|<B$ for all $f\in\mathcal{F}$, 
	\[ \mathbb{E}_{Y|\bX} \{ h_f(\bX, Y)^2 \} \leq L^2 (B+M) \cdot \rho_{\bX}(f, f_{\varsurrogate}^*) \]
for $L, M \geq 0$.
\end{lemma}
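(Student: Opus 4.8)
The plan is to expand the conditional second moment over the two labels, apply the Lipschitz bound of $\varsurrogate^Y$ to each of the two resulting differences, upgrade a squared deviation to a linear one using the uniform bound on $f$, and then match the estimate to the three branches of $\rho_{\bX}$ by a short case analysis. Writing $f = f(\bX)$ and $f_{\varsurrogate}^* = f_{\varsurrogate}^*(\bX)$, I would first record
\[
\mathbb{E}_{Y|\bX}\{h_f(\bX,Y)^2\} = p(\bX)\big(\varsurrplus(f)-\varsurrplus(f_{\varsurrogate}^*)\big)^2 + (1-p(\bX))\big(\varsurrminus(-f)-\varsurrminus(-f_{\varsurrogate}^*)\big)^2.
\]
Taking $L = \max_k\{|B^+(\pi_k)|,|B^-(\pi_k)|\}$ as the Lipschitz constant of the piecewise linear loss gives $|\varsurrplus(f)-\varsurrplus(f_{\varsurrogate}^*)| \leq L|f-f_{\varsurrogate}^*|$ and the analogous bound for $\varsurrminus$. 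Since, as in the proof of Corollary~\ref{thm:PLbounds}, $f_{\varsurrogate}^*$ takes only the finitely many values $H_0,\ldots,H_K$, I set $M = \max_j |H_j|$ so that $|f_{\varsurrogate}^*| \leq M$ and $|f-f_{\varsurrogate}^*| \leq B+M$; this converts each squared difference into at most $L^2(B+M)|f-f_{\varsurrogate}^*|$.

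In the generic (\emph{otherwise}) branch, where $\rho_{\bX}(f,f_{\varsurrogate}^*) = |f-f_{\varsurrogate}^*|$, combining the two term bounds with weights $p(\bX)$ and $1-p(\bX)$ (which sum to one) yields the claim at once. The two extreme branches carry the sharper factors $p(\bX)$ and $1-p(\bX)$, and here the structure of the loss must be exploited. For the branch $p(\bX) < \pi_1,\ f < H_0$, I would argue that the negative-class difference is identically zero: the endpoint ratio $H_0 = A^-(\pi_1)/B^-(\pi_1)$ is precisely the abscissa at which the outermost active segment of $\varsurrminus$ meets the zero floor, so $\varsurrminus(z)=0$ for all $z \geq -H_0$; since $f, f_{\varsurrogate}^* \leq H_0$ we have $-f,-f_{\varsurrogate}^* \geq -H_0$, whence $\varsurrminus(-f)-\varsurrminus(-f_{\varsurrogate}^*)=0$. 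The moment then collapses to its positive part, bounded by $p(\bX)\,L^2(B+M)|f-f_{\varsurrogate}^*| = L^2(B+M)\rho_{\bX}(f,f_{\varsurrogate}^*)$. The branch $p(\bX) > \pi_K,\ f > H_K$ is entirely symmetric, with the positive-class difference vanishing because $\varsurrplus$ is flat on $[H_K,\infty)$ with $f_{\varsurrogate}^* = H_K$.

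The main obstacle is the flatness step in the two extreme branches: one must verify that \emph{both} $f$ and $f_{\varsurrogate}^*$ land in the zero region of the relevant one-sided loss. This rests on identifying $H_0$ and $H_K$ as the exact hinges where the extreme linear segments meet zero, and on the slope-monotonicity condition (C1), which guarantees that no interior segment becomes active past these points and hence that $\varsurrminus$ (resp.\ $\varsurrplus$) genuinely vanishes beyond $-H_0$ (resp.\ $H_K$). Once this is in place, the remaining manipulations are the routine Lipschitz-plus-boundedness estimates above, and the three branches together reproduce the bound $L^2(B+M)\rho_{\bX}(f,f_{\varsurrogate}^*)$.
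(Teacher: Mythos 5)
Your proof is correct and takes essentially the same route as the paper's: the same conditional decomposition into positive- and negative-class terms, the same observation that the opposite-class loss difference vanishes on the two extreme branches of $\rho_{\bX}$ (so the surviving weight $p(\bX)$ or $1-p(\bX)$ matches the definition of $\rho_{\bX}$), and the same Lipschitz-plus-boundedness step, with your constants coinciding with the paper's since by (C1) one has $\max_k\{|B^+(\pi_k)|,|B^-(\pi_k)|\} = \max\{|B^+(\pi_1)|,|B^-(\pi_K)|\}$ and, by the ordering of the $H_j$, $\max_j |H_j| = \max\{|H_0|,|H_K|\}$. The only difference is that you spell out why $\varsurrminus$ is flat beyond $-H_0$ and $\varsurrplus$ beyond $H_K$ (a verification that strictly uses the hinge-ordering condition (C2) together with (C1)), whereas the paper simply asserts this fact.
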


\begin{proof}
We first decompose the conditional expectation as:
\begin{align*}
	\mathbb{E}_{Y|\bX} \{h_f (\bX, Y)^2\} 
		&= \mathbb{E}_{Y|\bX} \left\{ \big(\varsurrogate^Y(Yf(\bX)) - \varsurrogate^Y(Yf(\bX))\big)^2 \right\} \\
		&= p(\bX) \big( \varsurrplus(f(\bX)) - \varsurrplus(f_{\varsurrogate}^*(\bX))\big)^2 \\
			&\ \ \ + (1-p(\bX)) \big( \varsurrminus(f(\bX)) - \varsurrminus(f_{\varsurrogate}^*(\bX))\big)^2.
\end{align*}
Note that if $f(\bX)\leq H_0$ and $p(\bX) \leq \pi_1$, then $\varsurrminus(f(\bX))=0$ and $\varsurrminus(f^*_{\varsurrogate}(\bX))=0$. Similarly, if $f(\bX)\geq H_K$ and $p(\bX) \geq \pi_K$, then $\varsurrplus(f(\bX))=0$ and $\varsurrplus(f^*_{\varsurrogate}(\bX))=0$. Therefore,
\begin{align*}
	E_{Y|\bX} \{h_f (\bX, Y)^2\} 
		&= 
		\begin{cases}
			p(\bX) \big( \varsurrplus(f(\bX)) - \varsurrplus(f_{\varsurrogate}^*(\bX))\big)^2 
				& \text{if } f(\bX) \leq H_0,\ p(\bX) \leq \pi_1 \\
			(1-p(\bX)) \big( \varsurrminus(f(\bX)) - \varsurrminus(f_{\varsurrogate}^*(\bX))\big)^2 
				& \text{if } f(\bX) \geq H_K,\ p(\bX) \geq \pi_K \\
			\!\begin{aligned}[c]
				& p(\bX) \big( \varsurrplus(f(\bX)) - \varsurrplus(f_{\varsurrogate}^*(\bX))\big)^2 \\
				& \ \ \ + (1-p(\bX)) \big( \varsurrminus(f(\bX)) - \varsurrminus(f_{\varsurrogate}^*(\bX))\big)^2 
			\end{aligned}
				& \text{otherwise}
		\end{cases}.
\end{align*}
Let $L = \max\{B^+(\pi_1), B^-(\pi_K)\}$ denote the Lipschitz constant for $\varsurrogate^Y$, and let $M=\max\{|H_0|, |H_{K}|\}$ denote the bound on $f_{\varsurrogate}^*$, such that $|f_{\varsurrogate}^*(\bX)| \leq M$ for all $\bX$. Then, 
\begin{align*}
	E_{Y|\bX} \{h_f (\bX, Y)^2\}
		&\leq L^2 (B+M) \cdot \rho_{\bX}(f, f_{\varsurrogate}^*),
\end{align*}
where $\rho_{\bX}$ is as defined above.
\end{proof}

%
\begin{lemma} \label{lemma:Bernstein}
If $p(\bX)$ satisfies the margin condition (\ref{eq:margincondition}) at $\boldpi = \{\pi_1, \ldots, \pi_K\}$ with parameters $A,\alpha$, then for any class $\mathcal{F}$ of measurable uniformly bounded functions, the class $\mathcal{H} = \{h_f (\bX, Y) : f\in\mathcal{F}\}$ is a Bernstein class with exponent $\beta = \alpha/(1+\alpha)$.
\end{lemma}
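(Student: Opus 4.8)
The plan is to combine Lemmas~\ref{lemma:rhoL1} and~\ref{lemma:rhoL2} with the margin condition (\ref{eq:margincondition}) through a truncation argument on the weight $W(\bX) := \min\{|p(\bX)-\pi_1|,\, |p(\bX)-\pi_K|,\, (1-\pi_1),\, \pi_K\}$ appearing in Lemma~\ref{lemma:rhoL1}. First I would record that taking the full expectation identifies $\mathbb{E}\{h_f(\bX,Y)\}$ with the excess $\varsurrogate$-risk, since $\mathbb{E}_{Y|\bX}\{h_f(\bX,Y)\} = \Delta Q_p(f)$ and hence $\mathbb{E}\{h_f\} = \Delta Q(f)$. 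Consequently the Bernstein property $\mathbb{E}\{h_f^2\}\leq B\cdot\mathbb{E}\{h_f\}^\beta$ reduces to bounding $\mathbb{E}\{h_f^2\}$ by a constant times $\Delta Q(f)^\beta$. By Lemma~\ref{lemma:rhoL2} we have $\mathbb{E}\{h_f^2\}\leq L^2(B+M)\cdot\mathbb{E}_{\bX}\{\rho_{\bX}(f,f_{\varsurrogate}^*)\}$, so the entire problem collapses to controlling $\mathbb{E}_{\bX}\{\rho_{\bX}(f,f_{\varsurrogate}^*)\}$ by $\Delta Q(f)^\beta$.

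Lemma~\ref{lemma:rhoL1} supplies the pointwise bound $D^*\,W(\bX)\,\rho_{\bX}(f,f_{\varsurrogate}^*)\leq \Delta Q_p(f)$. The difficulty is that $W(\bX)$ can be arbitrarily small when $p(\bX)$ lies near $\pi_1$ or $\pi_K$, so this inequality cannot be inverted uniformly in $\bX$. I would therefore split the expectation at a threshold $t>0$. On $\{W(\bX)>t\}$ I rearrange to $\rho_{\bX}\leq \Delta Q_p(f)/(D^*t)$ and integrate, giving a contribution at most $\Delta Q(f)/(D^*t)$. On $\{W(\bX)\leq t\}$ I instead use the crude uniform bound $\rho_{\bX}\leq|f-f_{\varsurrogate}^*|\leq B+M$ (valid since $p,1-p\in[0,1]$), combined with the tail estimate
\[
	\mathbb{P}\{W(\bX)\leq t\}\;\leq\;\mathbb{P}\{|p(\bX)-\pi_1|\leq t\}+\mathbb{P}\{|p(\bX)-\pi_K|\leq t\}\;\leq\;2At^\alpha,
\]
which follows from (\ref{eq:margincondition}) once $t<\min\{1-\pi_1,\pi_K\}$, so that the two constant terms inside $W$ are never the active minimum. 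This bounds the second piece by $2A(B+M)t^\alpha$.

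Combining the two pieces gives $\mathbb{E}_{\bX}\{\rho_{\bX}\}\leq \Delta Q(f)/(D^*t)+2A(B+M)t^\alpha$, and the final step is to optimize over $t$. Choosing $t\propto \Delta Q(f)^{1/(1+\alpha)}$ balances the two terms, after which each scales as $\Delta Q(f)^{\alpha/(1+\alpha)}$; feeding this back through Lemma~\ref{lemma:rhoL2} yields $\mathbb{E}\{h_f^2\}\leq B'\cdot\Delta Q(f)^{\beta}$ with $\beta=\alpha/(1+\alpha)$ and $B'$ an explicit constant assembled from $L,M,A,D^*$. The main obstacle is this truncation-and-optimization step: getting the exponent to emerge as exactly $\alpha/(1+\alpha)$ depends on matching the $t^{-1}$ growth of the first term against the $t^\alpha$ decay of the tail probability. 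I would also need a short bookkeeping step to ensure that the optimal $t$ respects the restriction $t<\min\{1-\pi_1,\pi_K\}$ — capping $t$ there and invoking the trivial boundedness of $h_f$ in the regime where $\Delta Q(f)$ is bounded away from zero — and to confirm that the resulting $B'$ can be taken greater than $1$, with the degenerate case $\alpha=0$ (hence $\beta=0$) handled directly by the uniform boundedness of $\mathcal{H}$.
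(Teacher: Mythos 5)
Your proposal is correct and takes essentially the same route as the paper's proof: both combine Lemma~\ref{lemma:rhoL1} and Lemma~\ref{lemma:rhoL2} with the margin condition (\ref{eq:margincondition}) via a truncation at a level $t$ that is then balanced to produce the exponent $\beta=\alpha/(1+\alpha)$, with your capping remark correctly handling the restriction on the range of $t$ where the margin condition applies. The differences are purely in bookkeeping: you upper-bound $\mathbb{E}_{\bX}\{\rho_{\bX}(f,f_{\varsurrogate}^*)\}$ directly by splitting on $\{W(\bX)\leq t\}$ with a two-term union bound and choose $t\propto\Delta Q(f)^{1/(1+\alpha)}$, whereas the paper lower-bounds $\mathbb{E}\{h_f(\bX,Y)\}$ by truncating within the four events defining the minimum and chooses $t\propto\big(\mathbb{E}_{\bX}\{\rho_{\bX}(f,f_{\varsurrogate}^*)\}\big)^{1/\alpha}$, which rearranges to the same inequality.
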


\begin{proof}
Let $E_1$ denote the event that $|p(\bX)-\pi|$ is the minimizer over the set $\big\{ |p(\bX)-\pi_1|, |p(\bX) - \pi_K|, (1-\pi_1), \pi_K \big\}$, and let $E_2$, $E_3$, $E_4$ similarly denote the corresponding events for $|p(\bX)-\pi_K|$, $(1-\pi_1)$ and $\pi_K$. Using $\textbf{I}_{E}$ to denote the indicator for event $E$, by Lemma~\ref{lemma:rhoL2} we have: 
\begin{align*}
	\mathbb{E}\{h_f(\bX, Y)\} 
		&\geq D^* \cdot \mathbb{E} \big\{ \min \{ |p(\bX)-\pi_1|, |p(\bX) - \pi_K|, (1-\pi_1), \pi_K \} 
			\cdot \rho_{\bX}(f,f_{\varsurrogate}^*) \big\} \\
		&= D^* \cdot \mathbb{E} \big\{ \rho_{\bX} (f, f^*) \cdot
				\{ \textbf{I}_{E_1} \cdot |p(\bX)-\pi_1| 
				+ \textbf{I}_{E_2} \cdot |p(\bX)-\pi_K| \\
			&\ \ \ \ \ \ \ \ \ \ \ \ \ \ \ \ \ \ \ \ \ \ \ \ \ \ \ \ \ + \textbf{I}_{E_3} \cdot (1-\pi_1) 
				+\textbf{I}_{E_4} \cdot \pi_K \}\big\}.
\end{align*}
Let $t_{\max} = \min_{k=1,\ldots,K+1}\{\pi_k - \pi_{k-1}\}$, where $\pi_0=0, \pi_{K+1}=1$. Given the margin condition, for all $k$, there exists some $A\geq0, \alpha\geq0$ such that for all $t\in[0, t_{\max})$, 
	\[
		\mathbb{P}\{|p(\bX) - \pi_k| \leq t\} \leq At^\alpha,
	\]
for $k=1,\ldots, K$. Therefore, letting $B$ and $M$ denote the bounds on $f$ and $f_{\varsurrogate}^*$ given in the proof of Lemma~\ref{lemma:rhoL2},
\begin{align*}
	\mathbb{E} \big\{ \rho_{\bX} (f, f_{\varsurrogate}^*) \cdot |p(\bX) - \pi_1 | \cdot \textbf{I}_{E_1} \big\}
		&\geq t \cdot \mathbb{E} \big\{ \rho_{\bX} (f, f_{\varsurrogate}^*) \cdot 
			\textbf{I}\{|p(\bX) - \pi_1| > t\} \cdot \textbf{I}_{E_1} \big\} \\
		&\geq t \cdot \big[ \mathbb{E} \big\{ \rho_{\bX} (f, f_{\varsurrogate}^*) \cdot \textbf{I}_{E_1} \big\}
			- (B+M)\cdot At^\alpha \big],
		\intertext{and similarly,}
	\mathbb{E} \big\{ \rho_{\bX} (f, f_{\varsurrogate}^*) \cdot |p(\bX) - \pi_K| \cdot \textbf{I}_{E_2} \big\}
		&\geq t \cdot \big[ \mathbb{E} \big\{ \rho_{\bX} (f, f_{\varsurrogate}^*) \cdot \textbf{I}_{E_2} \big\}
			- (B+M)\cdot At^\alpha \big] \\
	\mathbb{E} \big\{ \rho_{\bX} (f, f_{\varsurrogate}^*) \cdot (1-\pi_1) \cdot \textbf{I}_{E_3} \big\}
		&\geq t \cdot \big[ \mathbb{E} \big\{ \rho_{\bX} (f, f_{\varsurrogate}^*) \cdot \textbf{I}_{E_3} \big\}
			- (B+M)\cdot \textbf{I}\{(1-\pi_1) < t, (1-\pi_1) \leq \pi_K\} \big] \\
	\mathbb{E} \big\{ \rho_{\bX} (f, f_{\varsurrogate}^*) \cdot \pi_K \cdot \textbf{I}_{E_4} \big\}
		&\geq t \cdot \big[ \mathbb{E} \big\{ \rho_{\bX} (f, f_{\varsurrogate}^*) \cdot \textbf{I}_{E_4} \big\}
			- (B+M)\cdot \textbf{I}\{\pi_K < t, \pi_K < (1-\pi_1)\} \big].
\end{align*}
Assume without loss of generality that $\pi_K < (1-\pi_1)$. Let
	\[
		t = \left( \frac{\mathbb{E} \big\{ \rho_{\bX} (f, f_{\varsurrogate}^*)\big\}}{C\cdot 2A(B+M)}\right)^{1/\alpha},
	\]
where $C \geq \max\{ 2, (2A\pi_K^\alpha)^{-1}\}$. Then, since $\mathbb{E}\{\rho_{\bX} (f, f_{\varsurrogate}^*)\} \leq (B+M)$, we have $t<\pi_K$. Combining the above inequalities, we have:
\begin{align*}
	\mathbb{E}\{h_f(\bX, Y)\}
		&\geq D^* \cdot t \cdot \big[ \mathbb{E}\big\{ \rho_{\bX}(f, f_{\varsurrogate}^*) \big\}
			- (B+M) (2At^\alpha)\big] \\
		&\geq D^* \cdot \left( \frac{\mathbb{E} 
			\big\{ \rho_{\bX} (f, f_{\varsurrogate}^*) \big\}}{C\cdot 2A(B+M)}\right)^{1/\alpha}
			\big[ \mathbb{E}\{\rho_{\bX}(f, f_{\varsurrogate}^*)\} - 
			C^{-1} \mathbb{E}\big\{\rho_{\bX}(f, f_{\varsurrogate}^*)\big\}\big] \\
		&\geq D^* \cdot \left(\frac{C-1}{C}\right) \cdot 
			\left(\frac{1}{C\cdot 2A(B+M)}\right)^{1/\alpha} \cdot \mathbb{E} 
			\big\{ \rho_{\bX}(f, f_{\varsurrogate}^*)\big\}^{(1+\alpha)/\alpha} \\
	\mathbb{E}\big\{ \rho_{\bX}(f, f_{\varsurrogate}^*) \big\}
		&\leq \left[ \left(\frac{C}{C-1}\right) \cdot D^* \cdot (C\cdot 2A(B+M))^{1/\alpha}\right]^{\alpha/(1+\alpha)}
			\cdot \mathbb{E}\{h_f(\bX, Y)\}^{\alpha/(1+\alpha)}.
\end{align*}
Combining with the result of Lemma~\ref{lemma:rhoL2}, and noting that $\mathbb{E}\{\rho_{\bX}(f,f_{\varsurrogate}^*)\} = \mathbb{E}_{\bX} \{\rho_{\bX}(f,f_{\varsurrogate}^*)\}$, we have:
\begin{align*}
	\mathbb{E} \big\{ h_f(\bX, Y)^2\big\} 
	 	&= \mathbb{E}_{\bX} \big\{ \mathbb{E}_{Y|\bX} \{ h_f(\bX, Y)^2 \} \\
		&\leq L^2 (B+M) \cdot \mathbb{E}_{\bX} \big\{ \rho_{\bX}(f, f_{\varsurrogate}^*) \big\} \\
		&\leq L^2 (B+M) \cdot \left[ \left(\frac{C}{C-1}\right) \cdot D^* \cdot (C\cdot 2A(B+M))^{1/\alpha}\right]^{\alpha/(1+\alpha)}
			\cdot \mathbb{E}\{h_f(\bX, Y)\}^{\alpha/(1+\alpha)},
\end{align*}
such that $h_f$ is a Bernstein class. 
\end{proof}

%
Let $B^\prime$ and $\beta$ be defined such that $\mathbb{E} \big\{ h_f(\bX, Y)^2\big\} \leq B^\prime \cdot \mathbb{E}\{h_f(\bX, Y)\}^\beta$. Let $\hat{f}_n$ denote the empirical minimizer in $\mathcal{F}$ of $\varsurrogate^y(yf(\bx))$ over a training sample of size $n$. We first bound the excess $\varsurrogate$-risk by:
\begin{align*}
	\Delta Q(\hat{f}_n)
		&= \mathbb{E} \{ h_{\hat{f}_n}(\bX, Y) \} \\
		&= 2 \Big(\frac{1}{n} \sum_{i=1}^n h_{\widehat{f}_n} (\bx_i, y_i) \Big) + 
			\Big(\mathbb{E} \{h_{\widehat{f}_n}(\bX, Y)\} - 
			2 \big(\frac{1}{n} \sum_{i=1}^n h_{\widehat{f}_n} (\bx_i, y_i) \big)\Big) \\
		&\leq \sup_{f\in\mathcal{F}_B} \Big( \mathbb{E} \{h_{f}(\bX, Y)\} - 
			2 \big(\frac{1}{n} \sum_{i=1}^n h_{f} (\bx_i, y_i) \big)\Big).
\end{align*}
Note that,
\begin{align*}
	\sup_{f\in\mathcal{F}_B} \Big( \mathbb{E} \{h_{f}(\bX, Y)\} - 
			2 \big(\frac{1}{n} \sum_{i=1}^n h_{f} (\bx_i, y_i) \big)\Big)
		&\leq \frac{3L}{n} + \sup_{f\in\mathcal{F}_n} \Big( \mathbb{E} \{h_{f}(\bX, Y)\} - 
			2 \big(\frac{1}{n} \sum_{i=1}^n h_{f} (\bx_i, y_i) \big)\Big),
\end{align*}
where $\mathcal{F}_n$ is a minimal $1/n$-net of $\mathcal{F}_B$. Now applying Bernstein's inequality,
\begin{align*}
	&\mathbb{P}\Big\{ \sup_{f\in\mathcal{F}_n} \Big( \mathbb{E} \{h_{f}(\bX, Y)\} - 
			2 \big(\frac{1}{n} \sum_{i=1}^n h_{f} (\bx_i, y_i) \big)\Big) \geq t\Big\} \\
		&\ \ \ \ \ \ \ \ \ \ 
		\leq N_n \cdot \text{exp} \Big\{ -
			\frac{n(t + \mathbb{E}\{h_f(\bX,Y)\})^2/8}%
			{\mathbb{E}\{h_f(\bX,Y)^2\} + (2LB)(t+\mathbb{E}\{h_f(\bX,Y)\})/6} 
			\Big\}.
\end{align*}
Using the fact that $h_f$ is a Bernstein class, and noting that for $\beta\in[0,1)$, $z^\beta \leq 1+z$ for all $z>0$,
\begin{align*}
	\frac{\mathbb{E} \{h_f(\bX,Y)^2\}}{t+\mathbb{E} \{h_f(\bX,Y)\}}
		&\leq B^\prime \cdot \frac{\mathbb{E} \{h_f(\bX,Y)\}^\beta}{t+\mathbb{E} \{h_f(\bX,Y)\}}. \\
		&\leq B^\prime \cdot \frac{1+\mathbb{E} \{h_f(\bX,Y)\}}{t+\mathbb{E} \{h_f(\bX,Y)\}} \\
		&\leq B^\prime \cdot t^{-1}.
\end{align*}
Therefore,
\begin{align*}
	\mathbb{P}\Big\{ \sup_{f\in\mathcal{F}_n} \Big( \mathbb{E} \{h_{f}(\bX, Y)\} - 
			2 \big(\frac{1}{n} \sum_{i=1}^n h_{f} (\bx_i, y_i) \big)\Big) \geq t\Big\} 
		&\\
		\leq N_n \cdot \text{exp} &\Big\{ - \frac{n}{8}\cdot
			\frac{(t + \mathbb{E}\{h_f(\bX,Y)\})}%
			{B^\prime + \tfrac{LB}{3}} 
			\Big\}
		&\\
		\leq N_n \cdot \text{exp} &\Big\{ - \frac{nt}{8}\cdot
			\Big(\frac{B^\prime}{t} + \frac{LB}{3}\Big)^{-1} 
			\Big\}.
\end{align*}
The proof is complete by noting that the necessary bound holds with probability $\gamma$ for:
\begin{align*}
	t&= 4 \cdot \frac{LB}{3} \cdot \frac{\log(N_n / \gamma)}{n} + 
		\left( \left(4 \cdot \frac{LB}{3}\cdot \frac{\log(N_n / \gamma)}{n} \right)^2 + 
			8 \cdot B^\prime \cdot \frac{\log(N_n / \gamma)}{n} \right)^{1/2}.
\end{align*}

\end{document}